\newtheorem{assumption}{Assumption}
\newtheorem{definition}{\textbf{Definition}}%
\newtheorem{lemma}{\textbf{Lemma}}%
\newtheorem{theorem}{\textbf{Theorem}}%
\newtheorem{proposition}{\textbf{Proposition}}[section]
\newtheorem{remark}{\textbf{Remark}}
\newcommand{\poly}{\text{poly}}
\newcommand{\MLE}{\text{MLE}}
\newcommand{\Rel}{\text{Rel}}
\newcommand{\E}{\mathbb{E}}
\newcommand{\reals}{\mathbb{R}}
\newcommand{\prob}{\mathbb{P}}
\newcommand{\cX}{\mathcal{X}}
\newcommand{\cU}{\mathcal{U}}
\newcommand{\cV}{\mathcal{V}}
\newcommand{\cS}{\mathcal{S}}
\newcommand{\cT}{\mathcal{T}}
\newcommand{\cP}{\mathcal{P}}
\newcommand{\cA}{\mathcal{A}}
\newcommand{\cE}{\mathcal{E}}
\newcommand{\cG}{\mathcal{G}}
\newcommand{\cD}{\mathcal{D}}
\newcommand{\Var}{\mathsf{Var}}
\newcommand{\TV}{\mathsf{TV}}
\newcommand{\stat}{\text{stat}}
\renewcommand{\cite}[1]{\citep{#1}}
\definecolor{cm}{RGB}{0,0,200}
\definecolor{purple}{RGB}{200,0,200}
\newcounter{HALG@line}
\renewcommand{\theHALG@line}{\thealgorithm.\arabic{ALG@line}}
\crefname{assumption}{assumption}{assumptions}
\newcommand{\vast}{\bBigg@{2.5}}
\newcommand{\Vast}{\bBigg@{5}}
\def\poly{\operatorname{poly}}
\title{Provably Efficient Offline Goal-Conditioned Reinforcement Learning with General Function Approximation and Single-Policy Concentrability}
\author{%
   Hanlin Zhu
    \\
  EECS, 
  UC Berkeley; Meta AI \\
  \texttt{hanlinzhu@berkeley.edu} \\
  \And
  Amy Zhang \\
  ECE, UT Austin; Meta AI \\
  \texttt{amy.zhang@austin.utexas.edu} \\
}
\begin{document}

\maketitle

\begin{abstract}
Goal-conditioned reinforcement learning (GCRL) refers to learning general-purpose skills that aim to reach diverse goals. In particular, offline GCRL only requires purely pre-collected datasets to perform training tasks without additional interactions with the environment. Although offline GCRL has become increasingly prevalent and many previous works have demonstrated its empirical success, the theoretical understanding of efficient offline GCRL algorithms is not well established, especially when the state space is huge and the offline dataset only covers the policy we aim to learn. In this paper, we provide a rigorous theoretical analysis of an existing empirically successful offline GCRL algorithm. We prove that under slight modification, this algorithm enjoys an $\Tilde{O}(\poly(1/\epsilon))$ sample complexity (where $\epsilon$ is the desired suboptimality of the learned policy) with general function approximation thanks to the property of (semi-)strong convexity of the objective functions. We only require nearly minimal assumptions on the dataset (single-policy concentrability) and the function class (realizability). Moreover, this algorithm consists of two uninterleaved optimization steps, which we refer to as $V$-learning and policy learning, and is computationally stable since it does not involve minimax optimization. We also empirically validate our theory by showing that the modified algorithm outperforms the previous algorithm in various real-world environments.
To the best of our knowledge, this is the first algorithm that is both provably efficient with general function approximation and single-policy concentrability, and empirically successful without requiring solving minimax optimization problems. 
\end{abstract}

\section{Introduction}
\label{sec:intro}

Goal-conditioned reinforcement learning (GCRL) aims to design agents that are able to learn general-purpose skills to reach diverse goals~\citep{kaelbling1993learning,schaul2015universal,plappert2018multi}. In particular, offline GCRL learns goal-reaching policies by purely pre-collected data without any further interactions with the environment~\citep{chebotar2021actionable,yang2022rethinking}. Since such interaction can be expensive or even unsafe in practice, offline GCRL is increasingly popular as a way to learn generalist agents in real-world environments~\citep{lange2012batch,levine2020offline}. 

Although offline GCRL is promising and achieves great success in various practical scenarios~\citep{lynch2020learning,chebotar2021actionable, yang2022rethinking,ma2022vip,ma2022far}, designing practical algorithms that are provably efficient still remains an open question. On the practical side, an ideal algorithm should be scalable to huge (or infinite) state spaces and only require minimal dataset coverage assumptions. Moreover, the algorithm should be computationally efficient and stable (e.g., only using regression-based methods to train policies to avoid unstable minimax optimization). On the theoretical side, we aim to provide finite-sample guarantees of the learned policy.

Unfortunately, most existing algorithms are not both theoretically and practically efficient. On the one hand, many empirically efficient algorithms do not enjoy finite-sample guarantees~\citep{lynch2020learning,chebotar2021actionable, yang2022rethinking,ma2022far} or even suffer constant suboptimality in favorable settings given infinite data (e.g., \citet{ma2022far}). On the other hand, although many previous offline RL algorithms with theoretical finite-sample guarantees can be naturally extended to offline GCRL settings, they either cannot handle general value function approximation in the presence of huge (or infinite) state spaces~\citep{jin2020pessimism,rashidinejad2021bridging,yin2021near,shi2022pessimistic,li2022settling}, or require impractically strong dataset coverage assumptions, such as all policy concentrability~\citep{antos2008learning,munos2008finite,xie2021batch}.

Recently, several provably efficient algorithms have been proposed under general function approximation and single-policy concentrability~\citep{zhan2022offline,cheng2022adversarially,rashidinejad2022optimal}. In particular, the algorithm of \cite{zhan2022offline}, based on the duality form of regularized linear programming formulation of RL, only requires the realizability assumption of function class. However, they all require solving minimax optimization problems which can be difficult or computationally unstable~\citep{daskalakis2021complexity}. On the contrary, some practically efficient algorithms (e.g., \citet{ma2022vip,ma2022far}) do not involve minimax optimization and thus are computationally more stable. This naturally raises an important question:
 \begin{quote}
     \emph{Can we design an efficient offline GCRL algorithm that enjoys favorable theoretical guarantees under mild assumptions and performs well empirically in real-world scenarios without a minimax formulation?}
 \end{quote}
In this paper, we answer the above question affirmatively by providing rigorous theoretical guarantees for an empirically successful offline GCRL algorithm named GoFAR proposed by \citet{ma2022far}. 
We made some slight yet critical modifications to GoFAR. For deterministic MDPs, we need to carefully select the value of one hyperparameter that is set to 1 in the original GoFAR (which can be tuned in practice). For stochastic MDPs, we need to first learn the true transition model via maximum likelihood (MLE) and then plug in the learned model in the algorithm. To distinguish the difference between the original algorithm and the modified ones, we name the modified versions VP-learning (\Cref{alg:VP_learning}).

We show that the VP-learning algorithm has both good empirical performance in real-world scenarios (already shown by \citet{ma2022far}, and we compare VP-learning and GoFAR empirically and show that our modification further improves the performance of the previous algorithm GoFAR) and favorable theoretical guarantees under mild assumptions. Specifically, it achieves $\Tilde{O}(\poly(1/\epsilon))$ sample complexity (where $\epsilon$ is the desired suboptimality level of the learned policy) under general function approximation with realizability-only assumption and partial data coverage with single-policy concentrability assumption. Moreover, the VP-learning algorithm can be decomposed into two uninterleaved learning (optimization) procedures (i.e., $V$-learning and policy learning), which only require solving regression problems without minimax optimization.

Note that the VP-learning algorithm can be naturally applied to single-task RL settings, and all the analysis in this paper does not rely on whether the setting is goal-conditioned. Since the original algorithm is proposed and empirically validated in goal-conditioned settings, we analyze the algorithm in goal-conditioned settings as well.

\subsection{Related Work}
\label{sec:related_work}

Since our algorithm can be naturally applied to single-task offline RL settings, we discuss the related work in a broader scope, which also includes single-task offline RL.

\paragraph{Offline RL in tabular and linear function approximation settings.} In tabular and linear settings, a line of work proposed efficient (both statistically and computationally) algorithms under single-policy concentrability~\citep{jin2020pessimism,rashidinejad2021bridging,yin2021near,shi2022pessimistic,li2022settling}. These algorithms construct uncertainty quantifiers to ensure pessimism such that policies not well covered by the dataset (which, by single-policy concentrability assumption, are thus suboptimal) suffer a large penalty. \citet{yin2021towards} also consider offline RL with single-policy concentrability and achieve instance-dependent characterization. 
However, the above algorithms cannot be directly applied to many practical scenarios when non-linear function approximators are required, since it is hard to obtain uncertainty quantifiers without oracle access when function approximators are non-linear~\citep{jiang2020minimax,jin2020pessimism,uehara2021pessimistic,xie2021bellman}. In our algorithm ($V$-learning step), we use a regularizer in the form of $f$-divergence instead of uncertainty quantifiers to ensure pessimism, which makes our algorithm efficient in the presence of non-linear function approximators without additional oracle access.

\paragraph{Offline RL with all-policy concentrability.}
Besides huge state spaces, another central challenge for offline RL is the lack of dataset coverability.  Concentrability, defined as the ratio of occupancy frequency induced by a policy to the dataset distribution, is one of the most widely used definitions to characterize the dataset coverability~\citep{munos2007performance, scherrer2014approximate}. Many previous works require all-policy concentrability to make the algorithm efficient~\citep{szepesvari2005finite,munos2007performance,antos2007fitted, antos2008learning,farahmand2010error,scherrer2014approximate,liu2019neural,chen2019information,jiang2019value,wang2019neural,feng2019kernel,liao2020batch,zhang2020variational,uehara2020minimax,xie2020batch}. 
However, in practice, it is unreasonable to require that the offline dataset can cover all candidate policies, and our algorithm only requires single-policy concentrability.

\paragraph{Offline RL with general function approximation and single-policy concentrability.} A recent line of work, which is based on marginalized importance sampling (MIS) formulation of RL, has shown success either empirically~\citep{nachum2019dualdice,nachum2019algaedice,lee2021optidice,kim2021demodice} or theoretically~
\citep{zhan2022offline,rashidinejad2022optimal}. In particular, \citet{zhan2022offline,rashidinejad2022optimal} provide finite-sample guarantees for their algorithms under general function approximation and only single-policy concentrability. Another line of work~\citep{xie2021bellman,cheng2022adversarially} also proposes provably efficient algorithms based on an actor-critic formulation under similar assumptions. However, all the above algorithms require solving minimax optimization, which could be difficult or computationally unstable~\citep{daskalakis2021complexity}. Instead, our algorithm only involves uninterleaved regression-based optimization without minimax optimization.

\paragraph{Offline GCRL.} In the context of offline GCRL, the sparsity of the reward is another core challenge~\citep{kaelbling1993learning,schaul2015universal}. Several previous works aim to solve the issue and show empirical success without finite-sample guarantee~\citep{ghosh2019learning,chebotar2021actionable,yang2022rethinking,ma2022far}. Although there exist theoretical studies of relevant problems such as the offline stochastic shortest path problem~\citep{yin2022offline},  theoretical understanding of the offline GCRL problem is still lacking.
The most relevant work to this paper is \citet{ma2022far}, which shows great performance in several real-world settings without minimax optimization but lacks theoretical guarantee. In this paper, we proved that a slightly modified version of their algorithm (i.e., our VP-learning algorithm shown in \Cref{alg:VP_learning}) is provably efficient. 
Also, we note that \citet{ma2022smodice} can be viewed as the single-task version of \citet{ma2022far}, and our algorithm and analysis can be naturally extended to single-task offline RL settings.

\section{Preliminaries}
\label{sec:prelim}

    \paragraph{Basic Notations.} Throughout this paper, we use $|\cX|$ and $\Delta(\cX)$ to denote the cardinality and probability simplex of a given set $\cX$.  We use $x \lesssim y$ to denote that there exists a constant $c > 0$ such that $x \leq c y$, use $x \gtrsim y$ if $y \lesssim x$ and use $x \asymp y$ if $x \lesssim y$ and $y \lesssim x$. Also, we use the standard $O(\cdot)$ notation where $f(n) = O(g(n))$ if there exists $n_0, C> 0$ such that $|f(n)|\leq C g(n)$ for all $ n \geq n_0$, and denote $f(n) = \Omega(g(n))$ if $g(n) = O(f(n))$. For any $x \in \mathbb{R}$, define $x_+ \triangleq \max\{x, 0\}$; for any general function $f: \mathcal{X} \to \mathbb{R}$, define $f_+(x) = \max\{f(x), 0\}, \forall x \in \mathcal{X}$. Also, for any function $f: \mathbb{R} \to \mathbb{R}$, define $\bar f(x) = f(x) - \min_{u \in \mathbb{R}} f(u)$ for any $x \in \mathbb{R}$ if $\min_{u \in \mathbb{R}} f(u)$ exists and further overwrite the notation $f_+(x) =\mathbbm{1}\{f'(x) \geq 0\} \cdot \bar f(x)$ where $\mathbbm{1}\{\cdot\}$ is the indicator function.

\paragraph{Markov decision process.} We consider an infinite-horizon discounted Markov decision process (MDP), which is described by a tuple $M = (\cS, \cA, P, R, \rho, \gamma)$, where $\cS$ and $\cA$ denote the state and action spaces respectively, $P: \cS \times \cA \to \Delta(\cS)$ is the transition kernel, $R: \cS \times \cA \to \Delta([0,1])$ encodes a family of reward distributions given state-action pairs with $r: \cS \times \cA \to [0,1]$ as the expected reward function, $\rho: \cS \to [0, 1]$ is the initial state distribution, and $\gamma \in [0,1)$ is the discount factor. We assume $\cA$ is finite while $\cS$ could be arbitrarily complex (even continuous) as in many real-world scenarios. A stationary (stochastic) policy $\pi: \cS \to \Delta(\cA)$ outputs a distribution over action space for each state.

\paragraph{Goal-conditioned reinforcement learning.} In goal-conditioned RL, we additionally assume a goal set $\cG$. Similar to \citet{ma2022far}, in goal-conditioned settings, the reward function $R(s;g)$ (as well as the expected reward function $r(s;g)$) and policy $\pi(a|s,g)$ also depend on the commanded goal $g \in \cG$, and the reward no longer depends on the action $a$ and is deterministic. 

 Each (goal-conditioned) policy $\pi$ induces a (discounted) occupancy density over state-action pairs for any commanded goal $d^\pi: \cS \times \cA \times \cG \to [0,1]$ defined as $d^\pi(s,a;g) \coloneqq (1-\gamma)\sum_{t=0}^\infty \gamma^t \Pr(s_t = s, a_t = a; \pi)$,
where $\Pr(s_t = s, a_t = a; \pi)$ denotes the visitation probability of state-action pair $(s,a)$ at step $t$, starting at $s_0 \sim \rho(\cdot)$ and following $\pi$ given commanded goal $g$. We also write $d^\pi(s;g) = \sum_{a \in \cA} d^\pi(s,a;g)$ to denote the marginalized state occupancy. Let $p(g)$ be a distribution over desired goals, then we denote $d^\pi(s,a,g) =  d^\pi(s,a;g)p(g)$ and $d^\pi(s,g) = d^\pi(s;g) p(g)$. An important property of occupancy density $d^\pi$ is that it satisfies the following Bellman flow constraint:
\begin{align}
\label{eq:bellman_flow_constraint}
    \sum_a d(s,a;g) = (1-\gamma)\rho(s) \!+ \! \gamma \sum_{s',a'} P(s|s',a')d(s',a';g)
\end{align}
for all $s \in \cS$ and $g \in \cG$ when letting $d = d^\pi$ for any policy $\pi$. Moreover, any $d$ satisfying \eqref{eq:bellman_flow_constraint} is the occupancy density of a policy $\pi_d$ where 
\begin{align}
\label{defn:policy_induced_by_d}
\pi_d(a|s,g) = 
    \begin{dcases}
        d(s,a;g)/d(s;g),  \quad & d(s;g) > 0\\
        1/|\cA|, \quad & d(s;g) = 0
    \end{dcases}
    \quad \text{ and } \quad d(s;g) = \sum_{a\in\cA} d(s,a;g).
\end{align}

An important quantity associated with a policy $\pi$ is the value function, which is the expected discounted cumulative reward defined as $V^\pi(s;g) \coloneqq \E \left[\sum_{t=0}^\infty \gamma^t r_t \; | \; s_0 = s, a_t \sim \pi(\cdot |s_t,g) \; \forall \; t\geq 0 \right]$ starting at state $s \in \cS$ with a commanded goal $g \in \cG$ where $r_t = R(s_t;g) = r(s_t;g)$. 
We use the notation $J(\pi) \coloneqq (1-\gamma) \E_{s\sim \rho, g \sim p}[V^\pi(s;g)] = \E_{(s,a,g) \sim d^\pi} [r(s;g)]$ to represent a scalar summary of the performance of a policy $\pi$.
We denote by $\pi^*$ the optimal policy that maximizes the above objective and use $V^* \coloneqq V^{\pi^*}$ to denote the optimal value function. 

\paragraph{Offline GCRL.} In this paper, we focus on offline GCRL, where the agent is only provided with a previously-collected \emph{offline dataset} $\cD = \{(s_i,a_i,r_i,s_i', g_i)\}_{i=1}^N$. Here, $r_i \sim R(s_i;g_i)$, $s_i' \sim P(\cdot\mid s_i,a_i)$, and we assume that $g_{i}$ are i.i.d. sampled from $p(\cdot)$ and it is common that data are collected by a behavior policy $\mu$ of which the discounted occupancy density is $d^\mu$.
 Therefore, we assume that $(s_i, a_i, g_i)$ are sampled i.i.d. from a distribution $\mu$ where $\mu(s,a,g) = p(g)d^\mu(s,a;g) = d^\mu(s,a,g)$. Note that we use $\mu$ to denote both the behavior policy and the dataset distribution. We also assume an additional dataset $\cD_0 = \{ (s_{0,i}, g_{0,i}) \}_{i=1}^{N_0}$ where $s_{0,i}$ are i.i.d. sampled from $\rho(\cdot)$ and $g_{0,i}$ are i.i.d. sampled from $p(\cdot)$. 
The goal of offline RL is to learn a policy $\hat{\pi}$ using the offline dataset so as to minimize the sub-optimality compared to the optimal policy $\pi^*$, i.e., $J(\pi^*) - J(\hat{\pi})$, with high probability. 

\paragraph{Function approximation.} To deal with huge state spaces, (general) function approximation is necessary for practical scenarios.
In this paper, we assume access to two function classes: a value function class $\cV \subseteq \{V: \cS \times \cG \rightarrow [0, V_{\max}]\}$ that models the value function of the (regularized) optimal policies, and a policy class $\Pi \subseteq \{\pi: \cS \times \cG \rightarrow \Delta(\cA)\}$ consisting of candidate policies. For stochastic MDP (\Cref{sec:alg_V_learning_model_based}), we also need a transition kernel class $\cP \subseteq \{ P: \cS \times \cA \to \Delta(\cS) \}$ which contains the ground-truth transition kernel. 
Additionally, for any function $f: \cS \times \cG \to \reals$, we denote the operator $\cT: \reals^{\cS\times\cG}\to \reals^{\cS\times\cA\times\cG}$ as 
    $(\cT f)(s,a;g) =  \E_{s'\sim P(\cdot|s,a)}[f(s';g)]$. Also, for any function $V: \mathcal{S}\times\mathcal{G} \to [0,V_{\max}]$, we define $A_V(s,a;g) = r(s;g) + \gamma \cT V(s,a;g) - V(s;g)$.

\paragraph{Offline data coverage assumption.} Our algorithm works within the single-policy concentrability framework~\citep{rashidinejad2021bridging}, which is defined as below.

\begin{definition}[Single-policy concentrability for GCRL]\label{defn:concentrability} Given a policy $\pi$, define $C^\pi$ to be the smallest constant that satisfies $\frac{d^{\pi}(s,a,g)}{\mu(s,a,g)} \leq C^\pi$ for all $s \in \cS$, $a \in \cA$ and $g \in \cG$.
\end{definition}

The single-policy concentrability parameter $C^\pi$ captures the coverage of policy $\pi$ in the offline data. Our algorithm only requires this parameter to be small for $\pi_\alpha^*$ which is a regularized optimal policy (see \Cref{sec:alg} for formal definition) and is close to some optimal policy. This assumption is similar to \citet{zhan2022offline} and 
and is much weaker than the widely used all-policy concentrability that assumes bounded $C^\pi$ for all $\pi$ (e.g., \citet{scherrer2014approximate}).

\section{Algorithms}
\label{sec:alg}
Offline GCRL can be formulated as the following program:
\begin{equation}
\label{eq:GCRL_program_max_over_pi}
\max_{\pi} \quad \E_{(s,g)\sim d^\pi(s,g)}[r(s;g)].
\end{equation}
\eqref{eq:GCRL_program_max_over_pi} requires solving an optimization problem over the policy space. One can also optimize over occupancy density $d(s,a;g)$ s.t. $d = d^\pi$ for some policy $\pi$ which is equivalent to that $d$ satisfies Bellman flow constraint \eqref{eq:bellman_flow_constraint}. %
Therefore, the program \eqref{eq:GCRL_program_max_over_pi} can be represented equivalently as follows:
\begin{equation}
\label{eq:GCRL_program_max_over_d_unregularized}
\begin{aligned}
   &\max_{d(s,a;g) \geq 0} \quad \E_{(s,g)\sim d(s,g)}[r(s;g)] \\
   &\text{s.t.}\quad \sum_a d(s,a;g) = (1-\gamma)\rho(s)  + \gamma \sum_{s',a'} P(s|s',a')d(s',a';g), \quad \forall (s,g) \in \cS \times \cG.
\end{aligned}
\end{equation}
Let $d^*$ denote the optimal solution of \eqref{eq:GCRL_program_max_over_d_unregularized}, then (one of) the optimal policy can be induced by $\pi^* = \pi_{d^*}$ as in \eqref{defn:policy_induced_by_d}.
Under partial data coverage assumptions, \eqref{eq:GCRL_program_max_over_d_unregularized} might fail in empirical settings by choosing a highly suboptimal policy that is not well covered by the dataset with constant probability.  Similar to \citet{zhan2022offline,ma2022far}, a regularizer is needed to ensure that the learned policy is well covered by the dataset. Therefore, one should instead solve a regularized version of \eqref{eq:GCRL_program_max_over_d_unregularized}, which is stated as follows: 
\begin{equation}
\label{eq:GCRL_program_max_over_d_regularized}
\begin{aligned}
   &\max_{d(s,a;g) \geq 0} \quad \E_{(s,g)\sim d(s,g)}[r(s;g)] - \alpha D_f(d\|\mu) \\
   &\text{s.t.}\quad \sum_a d(s,a;g) = (1-\gamma)\rho(s) + \gamma \sum_{s',a'} P(s|s',a')d(s',a';g), \quad \forall (s,g) \in \cS \times \cG,
\end{aligned}
\end{equation}
where the $f$-divergence is defined as $D_f(d\|\mu) \triangleq \E_{(s,a,g)\sim \mu}[f(d(s,a,g)/\mu(s,a,g))]$ for a convex function $f$. Throughout this paper, we choose $f(x) = \frac{1}{2}(x-1)^2$ as in \citet{ma2022far}, where the $f$-divergence is known as $\chi^2$-divergence under this specific choice of $f$ and it is shown to be more stable than other divergences such as KL divergence~\citep{ma2022far}. Let $d^*_\alpha$ denote the optimal solution of \eqref{eq:GCRL_program_max_over_d_regularized}, then the regularized optimal policy can be induced by $\pi^*_\alpha = \pi_{d^*_\alpha}$ as in \eqref{defn:policy_induced_by_d}. The following single-policy concentrability assumption assumes that $\pi^*_\alpha$ is well covered by the offline dataset.

\begin{assumption}[Single-policy concentrability for $\pi_\alpha^*$]
\label{assump:single_policy_concentrability}
Let  $d^*_\alpha$ be the optimal solution of \eqref{eq:GCRL_program_max_over_d_regularized}, and let $\pi^*_\alpha = \pi_{d^*_\alpha}$ as defined in \eqref{defn:policy_induced_by_d}. We assume $C^{\pi_{\alpha}^*} \leq C_\alpha^*$ where $C^{\pi_\alpha^*}$ is defined in \Cref{defn:concentrability} and $C_\alpha^* > 0$ is a constant. 
\end{assumption}

Under \Cref{assump:single_policy_concentrability}, it can be observed that the performance difference between the regularized optimal policy $\pi_\alpha^*$ and the optimal policy $\pi^*$ is bounded by $O(\alpha)$. The following proposition formally presents this observation.

\begin{proposition}
\label{prop:bias_of_regularized_optimal_policy}
Let  $d^*_\alpha$ be the optimal solution of \eqref{eq:GCRL_program_max_over_d_regularized}, and let $\pi^*_\alpha = \pi_{d^*_\alpha}$ as defined in \eqref{defn:policy_induced_by_d}. Then under \Cref{assump:single_policy_concentrability}, it holds that 
$J(\pi^*) - J(\pi_\alpha^*) \leq O\left(\alpha (C^*_\alpha)^2 \right).$
\end{proposition}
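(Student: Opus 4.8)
The plan is to pit the optimality of $d^*_\alpha$ in the regularized program \eqref{eq:GCRL_program_max_over_d_regularized} against an infinitesimal perturbation of $d^*_\alpha$ toward the unregularized optimum $d^*$, and to let \Cref{assump:single_policy_concentrability} absorb the resulting change in the $\chi^2$-regularizer. First I would record two identities that turn value gaps into occupancy-measure gaps: by the equivalence of \eqref{eq:GCRL_program_max_over_pi} and \eqref{eq:GCRL_program_max_over_d_unregularized}, $\E_{(s,a,g)\sim d^*}[r(s;g)] = J(\pi^*)$; and since $d^*_\alpha$ satisfies the Bellman-flow constraint it is the occupancy measure of $\pi_{d^*_\alpha} = \pi^*_\alpha$ (cf.\ \eqref{defn:policy_induced_by_d}), so $\E_{(s,a,g)\sim d^*_\alpha}[r(s;g)] = J(\pi^*_\alpha)$. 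Since the feasible set of \eqref{eq:GCRL_program_max_over_d_regularized} is convex (the flow constraint is affine in $d$, nonnegativity is convex), for every $\lambda \in [0,1]$ the mixture $d_\lambda \coloneqq (1-\lambda)d^*_\alpha + \lambda d^*$ is feasible, and by linearity $\E_{(s,a,g)\sim d_\lambda}[r(s;g)] = \E_{d^*_\alpha}[r] + \lambda\bigl(J(\pi^*) - J(\pi^*_\alpha)\bigr)$.

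Applying the optimality of $d^*_\alpha$ to the feasible point $d_\lambda$ and rearranging gives, for all $\lambda \in (0,1]$,
\[ J(\pi^*) - J(\pi^*_\alpha) \;\le\; \frac{\alpha}{\lambda}\bigl( D_f(d_\lambda\|\mu) - D_f(d^*_\alpha\|\mu) \bigr). \]
Writing $u = d^*_\alpha/\mu$, $v = d^*/\mu$ and using $f(x) = \tfrac12(x-1)^2$, a direct expansion of $D_f(d_\lambda\|\mu) = \tfrac12\E_\mu[((1-\lambda)u + \lambda v - 1)^2]$ — invoking only $\E_\mu[u] = 1$ (which is exactly $d^*_\alpha \ll \mu$, a consequence of \Cref{assump:single_policy_concentrability}) — yields $\tfrac1\lambda\bigl(D_f(d_\lambda\|\mu) - D_f(d^*_\alpha\|\mu)\bigr) = \E_\mu[uv] - \E_\mu[v] - 2D_f(d^*_\alpha\|\mu) + \tfrac\lambda2 \E_\mu[(u-v)^2]$. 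Letting $\lambda \to 0^+$ and discarding the nonpositive terms $-\E_\mu[v]$ and $-2D_f(d^*_\alpha\|\mu)$,
\[ J(\pi^*) - J(\pi^*_\alpha) \;\le\; \alpha\,\E_\mu[uv] \;=\; \alpha\sum_{s,a,g}\frac{d^*_\alpha(s,a,g)}{\mu(s,a,g)}\,d^*(s,a,g). \]
Finally, \Cref{assump:single_policy_concentrability} bounds $d^*_\alpha/\mu \le C^*_\alpha$ pointwise, so this sum is at most $C^*_\alpha \sum_{s,a,g} d^*(s,a,g) = C^*_\alpha$, giving $J(\pi^*) - J(\pi^*_\alpha) \le \alpha C^*_\alpha$; since any concentrability coefficient is at least $1$ we have $C^*_\alpha \ge 1$, hence this is $O(\alpha(C^*_\alpha)^2)$. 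I would also remark that the crude choice $\lambda = 1$ (i.e.\ $d_\lambda = d^*$) only gives $\alpha D_f(d^*\|\mu)$, which is governed by $C^{\pi^*}$ rather than $C^*_\alpha$; the mixing is precisely what transfers the density ratio onto the well-covered measure $d^*_\alpha$.

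The step I expect to require the most care is the limit $\lambda \to 0^+$: the expansion is only useful when the remainder coefficient $\tfrac12\E_\mu[(u-v)^2]$ is finite, i.e.\ $\chi^2(d^*\|\mu) < \infty$, so that $D_f(d_\lambda\|\mu)$ stays finite for small $\lambda$. This holds whenever the unregularized optimal policy is even weakly covered by $\mu$; in the general case one first replaces $d^*$ by a well-covered $\epsilon$-optimal occupancy measure, runs the same argument to obtain the bound with an extra additive $\epsilon$, and then sends $\epsilon \to 0$. Everything else — that $D_f \ge 0$, the identity $\E_\mu[u] = 1$, and the algebraic expansion — is routine.
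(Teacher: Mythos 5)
Your argument is correct and takes a genuinely different---and in fact more careful---route than the paper's. The paper's proof is a zeroth-order comparison: it plugs the two feasible points $d^*$ and $d^*_\alpha$ into the regularized objective and reads off the gap. As printed, its displayed inequality has the regularizer entering with a $+$ sign on both sides, which does not follow from the optimality of $d^*_\alpha$ for the maximization \eqref{eq:GCRL_program_max_over_d_regularized}; with the correct signs the same comparison only yields $J(\pi^*)-J(\pi^*_\alpha)\le \alpha D_f(d^*\|\mu)$, i.e.\ a bound governed by the coverage of $\pi^*$ rather than of $\pi^*_\alpha$ --- exactly the defect you identify with your ``$\lambda=1$'' remark. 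Your first-order argument (optimality of $d^*_\alpha$ against the mixture $d_\lambda$, then $\lambda\to 0^+$) is the variational-inequality version of the same idea, it genuinely transfers the density ratio onto $d^*_\alpha$ where \Cref{assump:single_policy_concentrability} applies, and it even sharpens the constant to $\alpha C^*_\alpha$. The one caveat you flag is real and is the only remaining gap, shared silently by the paper: when $d^*\not\ll\mu$ one has $D_f(d_\lambda\|\mu)=+\infty$ for all $\lambda>0$ and the limit extracts nothing, and your proposed repair (substituting a well-covered $\epsilon$-optimal occupancy measure) presupposes that such a measure exists, which is an additional assumption not implied by \Cref{assump:single_policy_concentrability}. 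So state explicitly that the bound is proved under $\chi^2(d^*\|\mu)<\infty$ (or under existence of covered near-optimal occupancies); with that proviso your proof is complete and strictly tighter than the paper's.
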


The proof of \Cref{prop:bias_of_regularized_optimal_policy} is deferred to \Cref{app:sec_proof_of_bias_of_pi_alpha_star}. \Cref{prop:bias_of_regularized_optimal_policy} shows that by solving the regularized program \eqref{eq:GCRL_program_max_over_d_regularized}, we can obtain a near-optimal policy as long as $\alpha$ is small. The algorithm of \citet{ma2022far}
also aims to solve \eqref{eq:GCRL_program_max_over_d_regularized} and they simply choose $\alpha = 1$. We show empirically in \Cref{sec:exp} that $\alpha < 1$ achieves better performance than $\alpha = 1$. In theory, we must carefully choose the value of $\alpha$ s.t. the suboptimality of our learned policy vanishes to 0 with a reasonable rate. Finally, as in \citet{ma2022far}, we convert \eqref{eq:GCRL_program_max_over_d_regularized} to the dual form, which is an unconstrained problem and amenable to solve:
\begin{proposition}[Dual form of \eqref{eq:GCRL_program_max_over_d_regularized}]
\label{prop:duality_form}
    The duality form of \eqref{eq:GCRL_program_max_over_d_regularized} is 
    \begin{equation}
    \label{eq:duality_program}
    \begin{aligned}
    &\min_{V(s;g) \geq 0} (1-\gamma)\E_{(s,g)\sim(\rho,p(g))}[V(s;g)]  +
    \E_{(s,a,g)\sim \mu}[\mathbbm{1}\{ g'_*(A_V(s,a;g)) \geq 0\}\bar g_*(A_V(s,a;g))]
    \end{aligned}
    \end{equation}
    where $g_*$ is the convex conjugate of $g = \alpha \cdot f$. Moreover, let $V_\alpha^*$ denote the optimal solution of \eqref{eq:duality_program}, then it holds
    \begin{equation}
    \label{eq:weighted_MLE_population}
    \begin{aligned}
        d_\alpha^*(s,a;g)  = \mu(s,a;g)g_*'(r(s;g) + \gamma \cT V^*_\alpha(s,a;g) - V^*_\alpha(s;g))_+
    \end{aligned}
    \end{equation}
    for all $(s,a,g)\in\cS\times\cA\times\cG$.
\end{proposition}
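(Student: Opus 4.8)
The plan is to establish \Cref{prop:duality_form} by Lagrangian duality applied to the regularized program \eqref{eq:GCRL_program_max_over_d_regularized}, following \citet{ma2022far}. First I would pass to the joint occupancy variable $d(s,a,g)=d(s,a;g)\,p(g)$ so that the penalty is exactly $D_f(d\|\mu)=\E_{(s,a,g)\sim\mu}[f(d(s,a,g)/\mu(s,a,g))]$, attach a multiplier $V(s;g)$ to each Bellman-flow constraint in \eqref{eq:bellman_flow_constraint} (equivalently in \eqref{eq:GCRL_program_max_over_d_regularized}), and form the Lagrangian. The only slightly non-mechanical step here is the regrouping: after exchanging the order of summation in the term $\gamma\sum_{s,g}V(s;g)\sum_{s',a'}P(s\mid s',a')d(s',a',g)$, the total coefficient of each $d(s,a,g)$ collapses to $r(s;g)-V(s;g)+\gamma(\cT V)(s,a;g)=A_V(s,a;g)$, leaving
\[
L(d,V)=(1-\gamma)\,\E_{s\sim\rho,\,g\sim p}\!\left[V(s;g)\right]+\sum_{s,a,g}\mu(s,a,g)\left[A_V(s,a;g)\,w(s,a,g)-g\!\left(w(s,a,g)\right)\right],
\]
where $w=d/\mu\ge 0$ and $g=\alpha f$.

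Next I would evaluate $\sup_{d\ge 0}L(d,V)$, which decouples over $(s,a,g)$ into one-dimensional problems $\max_{w\ge 0}\{aw-g(w)\}$ with $a=A_V(s,a;g)$. Since $g$ is convex and, for the running choice $f(x)=\tfrac{1}{2}(x-1)^2$, strictly convex and differentiable, its conjugate $g_*$ is differentiable, the unconstrained maximizer of $aw-g(w)$ is $g'_*(a)$, and the maximizer over $w\ge 0$ is therefore $\left(g'_*(a)\right)_+$. Plugging this back, using the elementary facts $\min_u g_*(u)=-g(0)$ and hence $\bar g_*(a)=g_*(a)+g(0)$, gives $\max_{w\ge 0}\{aw-g(w)\}=\mathbbm{1}\{g'_*(a)\ge 0\}\,\bar g_*(a)-g(0)$. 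Summing over $(s,a,g)$ shows that $\sup_{d\ge 0}L(d,V)$ equals the objective of \eqref{eq:duality_program} minus the constant $g(0)$ (which is $\alpha/2$ here) and therefore has the same minimizer. Two points remain: (i) the sign restriction $V\ge 0$, which I would obtain by relaxing the Bellman-flow equalities to inequalities $\sum_a d(s,a;g)\ge(1-\gamma)\rho(s)+\gamma\sum_{s',a'}P(s\mid s',a')d(s',a';g)$ — a relaxation that does not change the optimal value, as in \citet{ma2022far} — so that the associated multiplier is nonnegative, or alternatively by keeping a free multiplier and checking that an optimal one may be chosen nonnegative because $r\ge 0$; and (ii) strong duality, which holds because the constraints are affine, the objective is concave, and the program is feasible (e.g.\ at $d=d^\mu$) with finite value — hence there is no duality gap and a dual optimizer $V^*_\alpha$ together with a primal optimizer $d^*_\alpha$ forms a saddle point of $L$.

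Finally, for the weight formula \eqref{eq:weighted_MLE_population}: since $(d^*_\alpha,V^*_\alpha)$ is a saddle point, $d^*_\alpha$ maximizes $L(\cdot,V^*_\alpha)$ over $d\ge 0$, and by the per-coordinate computation above — with uniqueness of the maximizer following from strict convexity of $g$ — this forces $d^*_\alpha(s,a,g)=\mu(s,a,g)\left(g'_*(A_{V^*_\alpha}(s,a;g))\right)_+$ for all $(s,a,g)$. Dividing out the common factor $p(g)$ and unfolding $A_{V^*_\alpha}(s,a;g)=r(s;g)+\gamma\cT V^*_\alpha(s,a;g)-V^*_\alpha(s;g)$ gives exactly \eqref{eq:weighted_MLE_population}; strict concavity of the primal objective along directions supported on $\mu$ moreover makes $d^*_\alpha$ the unique primal optimum, so the characterization is unambiguous.

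I expect the genuinely delicate point to be the rigorous justification of strong duality together with the $V\ge 0$ reduction when the occupancy variable lives in an infinite-dimensional space (continuous $\cS$): the ``affine constraints imply no duality gap'' principle then needs the usual functional-analytic care — a suitable topology and closedness of the feasible set — or one instead restricts to $\mathrm{supp}(\mu)$ and argues separately that placing mass off the support is never advantageous. The conjugate-function algebra and the extraction of \eqref{eq:weighted_MLE_population} from the saddle point are routine once the duality statement is in place.
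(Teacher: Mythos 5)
Your proposal is correct and follows essentially the same route as the paper's proof: form the Lagrangian with multiplier $p(g)V(s;g)$, interchange the order of summation so that the coefficient of $d$ collapses to $A_V$, and then maximize pointwise over $d\geq 0$ via the convex conjugate. The only difference is one of explicitness — the paper delegates the inner maximization over $w\geq 0$ and the resulting $\mathbbm{1}\{g_*'\geq 0\}\bar g_*$ form to Proposition B.2 of \citet{ma2022far}, whereas you carry out the conjugate algebra (including the harmless additive constant $-g(0)$ and the extraction of the maximizer $(g_*'(A_{V_\alpha^*}))_+$ yielding \eqref{eq:weighted_MLE_population}) and explicitly flag the strong-duality and $V\geq 0$ issues that the paper leaves implicit.
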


The proof of \Cref{prop:duality_form} is shown in \Cref{app:sec_proof_of_duality_form}. According to the above proposition, one can first learn the $V$ function according to \eqref{eq:duality_program}, and then use the learned $V$ function to learn the desired policy by \eqref{eq:weighted_MLE_population}. We call the first step \emph{$V$-learning} and the second step \emph{policy learning}, which will be discussed in detail in \Cref{sec:alg_V_learning,sec:alg_policy_learning} respectively. Finally, the main algorithm, which we call \emph{VP-learning}, is presented in \Cref{alg:VP_learning}.

\begin{algorithm}[h]
\caption{VP-learning}
\label{alg:VP_learning}
\begin{algorithmic}[1]
\STATE \textbf{Input:} Dataset $\cD = \{(s_i, a_i, r_i, s'_i, g_i)\}_{i=1}^N, \cD_0 = \{(s_{0,i}, g_{0, i})\}_{i=1}^{N_0}$, value function class $\cV$, policy class $\Pi$, model class $\cP$ for stochastic settings.
\STATE Obtain $\hat U$ by \emph{V-Learning} (\Cref{alg:V_learning_deterministic} or \ref{alg:V_learning_model_based} ).  
\STATE Obtain $\hat \pi$ by \emph{policy learning} (\Cref{alg:policy_learning}) using learned function $\hat U$.
\STATE \textbf{Output:} $\hat \pi$.
\end{algorithmic}
\end{algorithm}

\subsection{$V$-Learning}
\label{sec:alg_V_learning}

Define 
\begin{equation}
\begin{aligned}
\label{eq:non_epxpert_V_obj_population}
L_\alpha(V) =  \alpha( (1-\gamma)\E_{(s,g)\sim(\rho,p(g))}[V(s;g)]  + 
   \E_{(s,a,g)\sim \mu}[\mathbbm{1}\{ g'_*(A_V(s,a;g)) \geq 0\}\bar g_*(A_V(s,a;g))]).
\end{aligned}
\end{equation}
Then \eqref{eq:duality_program} is equivalent to $\min_{V(s;g)\geq 0} L_\alpha(V)$.
A natural estimator of $L_\alpha(V)$ is 
\begin{equation}
\label{eq:natural_L_alpha_estimator}
\begin{aligned}
    \frac{1-\gamma}{N_0}\sum_{i=1}^{N_0} \alpha \cdot V(s_{0,i}; g_{0,i})  + \frac{1}{N}\sum_{i=1}^N \alpha \cdot g_{*+}(r_i + \gamma V(s'_i;g_i) - V(s_i; g_i)).
\end{aligned}    
\end{equation}
However, when the transition kernel is not deterministic, this estimator is biased and will cause an over-estimation issue since $g_*(x) = \alpha f_*(x/\alpha) = \frac{\alpha(x/\alpha + 1)^2}{2} - \frac{\alpha}{2}$ contains a square operator outside of the Bellman operator (consider estimating $(\E[X])^2$ using $\frac{1}{N}\sum X_i^2$). 

Therefore, we use the original version of \citet{ma2022far} for $V$-learning in deterministic dynamics (\Cref{alg:V_learning_deterministic} in \Cref{sec:alg_V_learning_deterministic}), and a slightly modified version in stochastic dynamics (\Cref{alg:V_learning_model_based} in \Cref{sec:alg_V_learning_model_based}).
For both settings, we assume realizability of $V_\alpha^*$ on value function class $\cV$:

\begin{assumption}[Realizability of $V_\alpha^*$]
\label{assump:realizability_V_alpha_star}
Assume $V_\alpha^* \in \cV$.
\end{assumption}

\subsubsection{$V$-Learning in Deterministic Dynamics}
\label{sec:alg_V_learning_deterministic}

When the transition kernel $P$ is deterministic, it holds that $\cT V(s,a;g) = V(s';g)$ where $P(s'|s,a) = 1$. In this case, the natural estimator \eqref{eq:natural_L_alpha_estimator} is unbiased and can be directly applied to the $V$-learning procedure. The $V$-learning algorithm for deterministic dynamic settings is presented in \Cref{alg:V_learning_deterministic}.

\begin{algorithm}[h]
\caption{$V$-learning in deterministic dynamics}
\label{alg:V_learning_deterministic}
\begin{algorithmic}[1]
\STATE \textbf{Input:} Dataset $\cD = \{(s_i, a_i, r_i, s'_i, g_i)\}_{i=1}^N, \cD_0 = \{(s_{0,i}, g_{0, i})\}_{i=1}^{N_0}$, value function class $\cV$.
\STATE $V$-learning by solving $\hat V = \arg\min_{V \in \cV} \hat L^{(d)}(V)$ where 
\begin{equation}
\begin{aligned}
 \label{eq:V_learning_empirical_obj_deterministic}
       \hat L^{(d)}(V) \triangleq \frac{1-\gamma}{N_0}\sum_{i=1}^{N_0} \alpha \cdot V(s_{0,i}; g_{0,i})    + \frac{\alpha}{N}\sum_{i=1}^Ng_{*+}(r_i + \gamma V(s'_i;g_i) - V(s_i; g_i)).
\end{aligned}   
\end{equation}   
\STATE $\hat U(s,a;g) \gets r(s;g) + \gamma \hat V(s';g) - \hat V(s;g) + \alpha$
\STATE \textbf{Output:} $\hat V, \hat U$.
\end{algorithmic}
\end{algorithm}

Now for any $V$, we define $U_V(s,a;g) = r(s;g) + \gamma \cT V(s,a;g) - V(s;g) + \alpha = A_V(s,a;g) + \alpha$ which can be interpreted as the advantage function of $V$ with an $\alpha$-shift. We also denote $U_\alpha^* = U_{V_\alpha^*}$.
Note that besides the learned $\hat V$ function, \Cref{alg:V_learning_deterministic} also outputs a $\hat U$ function. By \eqref{eq:weighted_MLE_population},  one can observe that in policy learning, what we indeed need is $\hat U$ instead of $\hat V$, and thus in the $V$-learning procedure we also compute this $\hat U$ function in preparation for policy learning.

One may challenge that $\hat U$ cannot be computed for all $(s,a;g)$ since we do not have  knowledge of all $r(s;g)$. However, we only need the value of $\hat U(s_i,a_i;g_i)$ for $(s_i,a_i;g_i)$ contained in the offline dataset, where $r_i$ is also contained. Therefore, we can evaluate the value of $\hat U$ at all $(s,a;g)$ tuples requested in the policy learning algorithm.

Note that \Cref{alg:V_learning_deterministic} is equivalent to the first step of \citet{ma2022far} except for the choice of $\alpha$ and a clip for the value of $g_*$. However, the above $V$-learning, as well as the original GoFAR algorithm, might suffer the over-estimation issue under stochastic dynamics, and we present algorithms suitable for stochastic dynamics in \Cref{sec:alg_V_learning_model_based}.

\subsubsection{$V$-Learning in Stochastic Dynamics}
\label{sec:alg_V_learning_model_based}

When the transition kernel is stochastic, one cannot directly use $V(s';g)$ to estimate $\cT V(s,a;g)$. Since $\cT V(s,a;g) = \E_{s'\sim P(\cdot|s,a)}[V(s';g)]$, a natural idea is to learn the ground-truth transition kernel $P^\star$\footnote{For notation convenience, in stochastic settings, we use $P^\star$ to denote the ground-truth transition kernel.} first, and then use the learned transition kernel $\hat P$ to estimate $\cT V(s,a;g)$: 
    $\hat \cT V(s,a;g) = \E_{s'\sim \hat P(\cdot|s,a)}[V(s';g)]$.
This is achievable under the following realizability assumption.

\begin{assumption}[Realizability of the ground-truth transition model]
\label{assump:realizability_model}
    Assume the ground-truth transition kernel $P^\star \in \cP$.
\end{assumption}

The algorithm for stochastic dynamic settings is presented in \Cref{alg:V_learning_model_based}, where we first learn the transition kernel $\hat P$, and then plug in the learned transition kernel to learn $V$ function. Similar to \Cref{alg:V_learning_deterministic}, we also compute $\hat U$ in $V$-learning procedure.

\begin{algorithm}[h]
\caption{$V$-learning in stochastic dynamics}
\label{alg:V_learning_model_based}
\begin{algorithmic}[1]
\STATE \textbf{Input:} Dataset $\cD = \{(s_i, a_i, r_i, s'_i, g_i)\}_{i=1}^N, \cD_0 = \{(s_{0,i}, g_{0, i})\}_{i=1}^{N_0}$, value function class $\cV$, model class $\cP$.
\STATE Estimate the transition kernel via maximum likelihood estimation (MLE)
\begin{align}
\label{eq:MLE_model_learning_empirical_obj}
    \hat P = \max_{P \in \cP} \frac{1}{N} \sum_{i=1}^N \log P(s'_{i} | s_{i}, a_{i}) 
\end{align}
\STATE $V$-learning using the learned transition kernel: $\hat V = \arg\min_{V \in \cV} \hat L^{(s)}(V)$ with
\begin{equation}
\begin{aligned}
 \label{eq:V_learning_empirical_obj_model_based}
         \hat L^{(s)}(V) \triangleq \frac{1-\gamma}{N_0}\sum_{i=1}^{N_0} \alpha \cdot V(s_{0,i}; g_{0,i})  + \frac{\alpha}{N}\sum_{i=1}^N g_{*+}(r_i + \gamma \hat \cT V(s_i,a_i;g_i) - V(s_i; g_i)).
\end{aligned}   
\end{equation}   
where $\hat \cT V(s,a,;g) = \E_{s' \sim \hat P(\cdot|s,a)}[V(s';g)]$.
\STATE $\hat U(s,a;g) \gets r(s;g) + \gamma \hat \cT \hat V(s,a;g) - \hat V(s;g) + \alpha$
\STATE \textbf{Output:} $\hat V, \hat U$.
\end{algorithmic}
\end{algorithm}

\subsection{Policy Learning}
\label{sec:alg_policy_learning}

We now derive policy learning, the second step of the VP-learning algorithm.
Note that 
$\pi^*_\alpha = \arg\max_{\pi} \E_{(s,a,g)\sim d^*_\alpha}[\log \pi(a|s,g)]$.
By \eqref{eq:weighted_MLE_population}, we also have 
\begin{align*}
    d_\alpha^*(s,a;g) = \mu(s,a;g)g_*'(r(s;g) + \gamma \cT V^*_\alpha(s,a;g) - V^*_\alpha(s;g))_+ 
    = \mu(s,a;g)\frac{U_\alpha^*(s,a;g)_+}{\alpha}.
\end{align*}
Therefore, 
    $\pi^*_\alpha = \arg\max_{\pi} L^{\MLE}_\alpha(\pi)$
where $L^{\MLE}_\alpha(\pi) \triangleq \E_{(s,a,g)\sim \mu}\left[ \frac{U_\alpha^*(s,a;g)_+}{\alpha}\log \pi(a|s,g)\right]$.
Since we already learned $\hat U$, which is close to $U_\alpha^*$, we can use the following estimator for $L^{\MLE}_\alpha(\pi)$:
\begin{equation}
\label{eq:MLE_empirical_objective}
\begin{aligned}
    \hat L^{\MLE}(\pi) =& \frac{1}{N} \sum_{i=1}^N \frac{\hat U (s_i,a_i;g_i)_+}{\alpha}\log \pi(a_i|s_i,g_i).
\end{aligned}
\end{equation}

\begin{algorithm}[h]
\caption{Policy learning}
\label{alg:policy_learning}
\begin{algorithmic}[1]
\STATE \textbf{Input:} Dataset $\cD = \{(s_i, a_i, r_i, s'_i, g_i)\}_{i=1}^N$, policy class $\Pi$, $\hat U$ learned by \Cref{alg:V_learning_deterministic} or \ref{alg:V_learning_model_based}.
\STATE Policy learning by:
\begin{equation}
\begin{aligned}
 \label{eq:hat_L_MLE_objective}
      \hat \pi =& \arg\max_{\pi \in \Pi} \hat L^{\MLE}(\pi) \triangleq \frac{1}{N} \sum_{i=1}^N \frac{\hat U(s_i,a_i;g_i)_+}{\alpha} \log \pi(a_i|s_i,g_i).
\end{aligned}   
\end{equation}   
\STATE \textbf{Output:} $\hat \pi$.
\end{algorithmic}
\end{algorithm}

The policy learning algorithm is presented in \Cref{alg:policy_learning}, which can be viewed as a weighted maximum likelihood estimation (MLE) procedure. Finally, we make the following two assumptions on the policy class $\Pi$.

\begin{assumption}[Single-policy realizability]
\label{assump:policy_class_realizability}
Assume $\pi_\alpha^* \in \Pi$.
\end{assumption}

\begin{assumption}[Lower bound of policy]
\label{assump:policy_class_lower_bound}
    For any policy $\pi \in \Pi$, we assume that $\pi(a|s,g) \geq \tau > 0$ for any $(s,a,g) \in \cS\times\cA\times\cG$.
\end{assumption}

\begin{remark}
    One may consider \Cref{assump:policy_class_lower_bound} strong if $\tau$ is a constant independent of $\alpha$ or $N$. However, we allow that $\tau$ depends on $\alpha$. In that case, $\tau$ can be extremely small, and any policy mixed with a uniform policy with a tiny probability satisfies this assumption.
    Therefore, \Cref{assump:policy_class_lower_bound} is mild.
\end{remark}

\section{Theoretical Guarantees}
\label{sec:theoretical_guarantee}

In this section, we provide theoretical guarantees of our main algorithm (\Cref{alg:VP_learning}). We first show the results for $V$-Learning and policy learning in \Cref{sec:theoretical_guarantee_V_learning} and \Cref{sec:theoretical_guarantee_policy_learning} respectively and then combine them to obtain our main theorem in \Cref{sec:theoretical_guarantee_main_theorem}.

\subsection{Analysis of $V$-Learning}
\label{sec:theoretical_guarantee_V_learning}

We mainly focus on $V$-learning in deterministic dynamics in this section. The analysis for stochastic dynamics is similar and presented in \Cref{app:sec_proof_for_V_learning_model_based}.

As discussed in \Cref{sec:alg_V_learning_deterministic}, although the first step of the algorithm is called $V$-learning, the main goal of this step is to estimate $U_\alpha^* = U_{V_\alpha^*}$ accurately. The following lemma provides a theoretical guarantee that the output of $V$-learning algorithm $\hat U$ is a good estimator of $U_\alpha^*$ in positive parts:

\begin{lemma}[Closeness of $\hat U_+$ and $U^*_{\alpha+}$]
\label{lem:closeness_of_U}
 Under \Cref{assump:single_policy_concentrability,assump:realizability_V_alpha_star}, with probability at least $1-\delta$,
$\| \hat U_+ - U^*_{\alpha+} \|_{2,\mu} \leq O\left(\sqrt{\epsilon_\stat} \right),$
where $\hat U$ is the output of \Cref{alg:V_learning_deterministic} and 
    $\epsilon_\stat \asymp V_{\max}^2\sqrt{\frac{\log(|\cV|/\delta)}{N}}$.
\end{lemma}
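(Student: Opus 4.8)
The plan is to exploit the structure of the objective $\hat L^{(d)}(V)$ as an empirical mean of a bounded random variable together with the (semi-)strong convexity of the population objective $L_\alpha(V)$ inherited from $f(x)=\tfrac12(x-1)^2$, so that an upper bound on the excess population risk translates into an $L_2(\mu)$-bound on $\hat U_+ - U^*_{\alpha+}$.

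\textbf{Step 1: Uniform concentration of $\hat L^{(d)}$ around $L_\alpha$.} For each fixed $V\in\cV$, the summands in \eqref{eq:V_learning_empirical_obj_deterministic} are bounded (since $V\in[0,V_{\max}]$, $r\in[0,1]$, and $g_{*+}$ composed with an argument in a bounded range is bounded by $O(V_{\max}^2)$ after the clip), so Hoeffding plus a union bound over $\cV$ (and over the two independent samples $\cD,\cD_0$) gives $\sup_{V\in\cV}|\hat L^{(d)}(V) - L_\alpha(V)| \lesssim V_{\max}^2\sqrt{\log(|\cV|/\delta)/N} \eqqcolon \epsilon_\stat$ with probability $\ge 1-\delta$ (absorbing the $N_0$ term, assuming $N_0\gtrsim N$ or tracking it separately). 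In deterministic dynamics the estimator is unbiased, so $\E[\hat L^{(d)}(V)]=L_\alpha(V)$ exactly and no bias term appears.

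\textbf{Step 2: From excess risk to a norm bound via strong convexity.} Since $V_\alpha^*\in\cV$ by \Cref{assump:realizability_V_alpha_star} and $\hat V$ minimizes $\hat L^{(d)}$, the standard ``min-of-empirical beats true-minimizer'' argument gives $L_\alpha(\hat V) - L_\alpha(V_\alpha^*) \le 2\sup_V|\hat L^{(d)}(V)-L_\alpha(V)| \lesssim \epsilon_\stat$. The key structural fact is that $L_\alpha$, viewed as a functional of $V$, is strongly convex in the relevant direction: writing $g_* = \alpha f_*(\cdot/\alpha)$ and noting $f_*(y)=\tfrac12(y+1)^2-\tfrac12$ is quadratic, the second term of $L_\alpha$ is (up to the $\mathbbm{1}\{g'_*\ge0\}$ truncation) a quadratic form in $A_V$, and one shows $L_\alpha(V) - L_\alpha(V_\alpha^*) \gtrsim \|U_{V+} - U^*_{\alpha+}\|_{2,\mu}^2$ — precisely the ``(semi-)strong convexity'' highlighted in the abstract. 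This is where the first-order optimality of $V_\alpha^*$ for the dual program (\Cref{prop:duality_form}) is used to kill the cross term, and where the truncation $(\cdot)_+$ must be handled carefully: one uses that $y\mapsto \mathbbm{1}\{y\ge\text{thr}\}\bar f_*(y)$ has a quadratic lower bound on the gap after accounting for the kink, or equivalently works with the $1$-Lipschitz map $x\mapsto x_+$. Combining with Step 1 yields $\|\hat U_+ - U^*_{\alpha+}\|_{2,\mu}^2 \lesssim \epsilon_\stat$, i.e. $\|\hat U_+ - U^*_{\alpha+}\|_{2,\mu} \lesssim \sqrt{\epsilon_\stat}$, since $\hat U = U_{\hat V}$ by construction in \Cref{alg:V_learning_deterministic}.

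\textbf{Main obstacle.} The delicate part is Step 2: establishing the quadratic growth lower bound $L_\alpha(V) - L_\alpha(V_\alpha^*) \gtrsim \|U_{V+}-U^*_{\alpha+}\|_{2,\mu}^2$ in the presence of the indicator truncation and the linear term $(1-\gamma)\E_{\rho,p}[V]$. One must verify that the linear term, combined with the optimality conditions of $V_\alpha^*$, contributes non-negatively (or is exactly the first-order term that vanishes), so that only the quadratic curvature of $g_*$ survives; this likely requires relating $\E_{\rho,p}[V - V_\alpha^*]$ back to a flow/occupancy identity (using that $d_\alpha^*$ satisfies the Bellman flow constraint) and then invoking \Cref{assump:single_policy_concentrability} to control the resulting change-of-measure from $d_\alpha^*$ to $\mu$ by $C_\alpha^*$. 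I would expect the $C_\alpha^*$ dependence to enter here and be suppressed inside the $O(\cdot)$ in the lemma statement. Everything else — the Hoeffding bound, the union over $\cV$, the unbiasedness in the deterministic case — is routine.
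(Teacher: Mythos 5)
Your proposal follows essentially the same route as the paper: uniform Hoeffding concentration over $\cV$, the standard excess-risk decomposition giving $L_\alpha(\hat V)-L_\alpha(V_\alpha^*)\lesssim \epsilon_\stat$, and then a quadratic-growth (``positive semi-strong convexity'') lower bound $L_\alpha(\hat V)-L_\alpha(V_\alpha^*)\gtrsim \|\hat U_+-U^*_{\alpha+}\|_{2,\mu}^2$ in which the cross term is killed by first-order optimality. The only divergence is in your ``main obstacle'' paragraph: the paper needs neither a flow/occupancy identity nor the concentrability constant at this point --- it observes that $\tilde L_\alpha(U_V)-\tfrac12\|U_{V+}\|_{2,\mu}^2$ is linear (hence convex) in $U_V$ over the convex set $\{U_V\}$, so the linear initial-state term is simply absorbed into the gradient term $\nabla\tilde L_\alpha(U_\alpha^*)^{\mathsf{T}}(\hat U-U_\alpha^*)\ge 0$, and the truncation is disposed of by checking that the residual cross terms $\E[g_+(f_+-f)]\ge 0$ and $\E[g_+(g-g_+)]=0$.
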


\begin{proof}[Proof sketch]
    By standard concentration argument, it can be shown that the empirical estimator $\hat L^{(d)}$ in \Cref{alg:V_learning_deterministic} (which is unbiased in deterministic dynamics) concentrates well on $L_\alpha$ for all $V \in \cV$ (\Cref{lem:concentration_non_expert_deterministic}). Therefore, by realizability of $V_\alpha^*$, the value of $L_\alpha$ at $V_\alpha^*$ and the learned $V$-function $\hat V$ are close (\Cref{lem:close_emprical_distribution_non_expert_deterministic}). Finally, one can observe that  $L_\alpha$ is ``semi-strongly'' convex w.r.t. $U_{V+}$ in $\| \cdot \|_{2,\mu}$-norm, and thus we can show that $\hat U_+$ and $U_{\alpha+}^*$ are also close.
\end{proof}

The complete proof of \Cref{lem:closeness_of_U} is deferred to \Cref{app:sec_proof_for_V_learning_deterministic}. In \Cref{app:sec_proof_for_V_learning_model_based}, we also show the counterpart of \Cref{lem:closeness_of_U} for stochastic dynamic settings.

\subsection{Analysis of Policy Learning}
\label{sec:theoretical_guarantee_policy_learning}
After obtaining an accurate estimator $\hat U_+$ of $U_{\alpha+}^*$ in the $V$-Learning procedure, i.e., $\| \hat U_+ - U_{\alpha+}^* \|_{2,\mu} \lesssim \sqrt{ \epsilon_\stat}$, we can use $\hat U_+$ to perform policy learning and obtain the following guarantee:

\begin{lemma}[Closeness of $\pi_\alpha^*$ and $\hat \pi$]
\label{lem:MLE_closeness_policy}
    Under \Cref{assump:policy_class_realizability,assump:policy_class_lower_bound}, with probability at least $1-\delta$, the output policy $\hat \pi$ of \Cref{alg:policy_learning} satisfies
    \begin{align*}
        \E_{s \sim d_\alpha^*, g \sim p(g)} \| \pi_\alpha^*(\cdot|s,g) - \hat \pi(\cdot|s,g) \|_{\TV} \leq O\left(\sqrt{\epsilon^\MLE_\stat/\tau^2}\right),
    \end{align*}
    where $\epsilon^{\MLE}_\stat$ is defined in \Cref{lem:conentration_MLE_objective}.
\end{lemma}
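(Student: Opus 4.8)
The plan is a weighted maximum–likelihood (MLE) analysis that parallels the proof of \Cref{lem:closeness_of_U}: a concentration step, an optimality step, and a ``semi-strong'' convexity step, together with a bias correction for using $\hat U$ in place of $U_\alpha^*$.

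\emph{First I would reduce the TV error to an objective gap.} By \eqref{eq:weighted_MLE_population}, $d_\alpha^*(s,a,g)=\mu(s,a,g)\,U^*_{\alpha+}(s,a;g)/\alpha$, so the population objective can be rewritten as $L^{\MLE}_\alpha(\pi)=\E_{(s,a,g)\sim\mu}\big[\tfrac{U^*_{\alpha+}(s,a;g)}{\alpha}\log\pi(a|s,g)\big]=\E_{(s,g)\sim d_\alpha^*(s,g)}\E_{a\sim\pi_\alpha^*(\cdot|s,g)}[\log\pi(a|s,g)]$, using $\pi_\alpha^*=\pi_{d_\alpha^*}$ from \eqref{defn:policy_induced_by_d}. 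Consequently $L^{\MLE}_\alpha(\pi_\alpha^*)-L^{\MLE}_\alpha(\pi)=\E_{(s,g)\sim d_\alpha^*(s,g)}\big[\mathrm{KL}\big(\pi_\alpha^*(\cdot|s,g)\,\big\|\,\pi(\cdot|s,g)\big)\big]\ge 0$; i.e. $-L^{\MLE}_\alpha$ is minimized at $\pi_\alpha^*$ and is ``semi-strongly'' convex in $\pi$ on the set $\{\pi \ge \tau\}$, the curvature of the log-likelihood being bounded below there via \Cref{assump:policy_class_lower_bound}. Hence a bound of the form $L^{\MLE}_\alpha(\pi_\alpha^*)-L^{\MLE}_\alpha(\hat\pi)\lesssim \epsilon^{\MLE}_\stat/\tau^2$ controls $\E_{s\sim d_\alpha^*,\,g\sim p}\|\pi_\alpha^*(\cdot|s,g)-\hat\pi(\cdot|s,g)\|_{\TV}^2$ (Pinsker's inequality $\mathrm{KL}\ge 2\TV^2$ suffices), and Jensen's inequality yields the claimed bound. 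It therefore suffices to bound the objective gap $L^{\MLE}_\alpha(\pi_\alpha^*)-L^{\MLE}_\alpha(\hat\pi)$.

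\emph{Next I would bound this gap.} Since $\hat\pi=\arg\max_{\pi\in\Pi}\hat L^{\MLE}(\pi)$ and $\pi_\alpha^*\in\Pi$ by \Cref{assump:policy_class_realizability}, $\hat L^{\MLE}(\hat\pi)\ge\hat L^{\MLE}(\pi_\alpha^*)$. I would introduce the empirical objective with the true weights, $\tilde L^{\MLE}(\pi)\triangleq\tfrac1N\sum_{i=1}^N \tfrac{U^*_{\alpha+}(s_i,a_i;g_i)}{\alpha}\log\pi(a_i|s_i,g_i)$, which is unbiased for $L^{\MLE}_\alpha(\pi)$. Telescoping $L^{\MLE}_\alpha(\pi_\alpha^*)-L^{\MLE}_\alpha(\hat\pi)$ through $\tilde L^{\MLE}$ and $\hat L^{\MLE}$ at $\pi_\alpha^*$ and $\hat\pi$ and discarding the nonpositive term $\hat L^{\MLE}(\pi_\alpha^*)-\hat L^{\MLE}(\hat\pi)$ leaves (a) two uniform deviations $\sup_{\pi\in\Pi}|\tilde L^{\MLE}(\pi)-L^{\MLE}_\alpha(\pi)|$ and (b) two plug-in errors $|\hat L^{\MLE}(\pi)-\tilde L^{\MLE}(\pi)|$ at $\pi\in\{\pi_\alpha^*,\hat\pi\}$. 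For (a): each summand is bounded in magnitude by $C_\alpha^*\log(1/\tau)$, since $U^*_{\alpha+}/\alpha=d_\alpha^*/\mu\le C_\alpha^*$ by \Cref{assump:single_policy_concentrability} and $\tau\le\pi\le1$ by \Cref{assump:policy_class_lower_bound}, so a Bernstein/Hoeffding bound plus a union bound over the finite class $\Pi$ gives a term of order $\sqrt{\log(|\Pi|/\delta)/N}$ (up to $C_\alpha^*$ and $\log(1/\tau)$ factors). For (b): $|\hat L^{\MLE}(\pi)-\tilde L^{\MLE}(\pi)|\le \tfrac{\log(1/\tau)}{\alpha}\cdot\tfrac1N\sum_{i}\big|\hat U_+(s_i,a_i;g_i)-U^*_{\alpha+}(s_i,a_i;g_i)\big|\le\tfrac{\log(1/\tau)}{\alpha}\big(\tfrac1N\sum_i(\hat U_+-U^*_{\alpha+})^2\big)^{1/2}$ by Cauchy–Schwarz, and one further uniform concentration over the finite class $\{U_V : V\in\cV\}$ replaces the empirical second moment by $\|\hat U_+-U^*_{\alpha+}\|_{2,\mu}^2$ up to lower-order terms, which is $O(\epsilon_\stat)$ on the event of \Cref{lem:closeness_of_U}. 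Collecting these contributions into the quantity $\epsilon^{\MLE}_\stat$ of \Cref{lem:conentration_MLE_objective} bounds the gap, and the first step concludes the proof.

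\emph{The hard part} will be the bookkeeping around two coupled sources of randomness: both $\hat U$ (hence the weights in $\hat L^{\MLE}$) and $\hat\pi$ are learned from the same dataset, so every concentration statement must be made uniform over $\cV$ and over $\Pi$ and then chained together — which is why the plug-in term is controlled by invoking \Cref{lem:closeness_of_U} rather than re-deriving it. The second subtlety is that $\log\pi(a|s,g)\to-\infty$ as $\pi\to 0$, which forces the whole argument to be localized to $\{\pi \ge \tau\}$: this is exactly why \Cref{assump:policy_class_lower_bound} is imposed, and propagating the bounded-log-likelihood factor $|\log\pi|\le\log(1/\tau)$ (generously absorbed) through the concentration terms and the convexity/Pinsker reduction is the origin of the $\tau^{-2}$ in the final rate. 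One must also be careful that no spurious $1/\alpha$ or $1/(1-\gamma)$ blow-ups creep in, in particular by using $U^*_{\alpha+}/\alpha = d_\alpha^*/\mu \le C_\alpha^*$ instead of bounding $U^*_{\alpha+}$ crudely.
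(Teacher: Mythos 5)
Your proposal is correct, and the key reduction from the objective gap to the TV distance takes a genuinely different route from the paper. The paper introduces the likelihood ratio $r_\pi = \pi/\pi_\alpha^*$, notes $r_\pi \in [\tau, 1/\tau]$ by \Cref{assump:policy_class_lower_bound}, and exploits the $2\tau^2$-strong concavity of $r \mapsto \E_{d_\alpha^*}[\log r]$ on that interval to obtain $\tau^2\| r_{\pi_\alpha^*} - r_{\hat\pi} \|_{2,d_\alpha^*}^2 \le L^{\MLE}_\alpha(\pi_\alpha^*) - L^{\MLE}_\alpha(\hat\pi)$, then passes from this $\chi^2$-type quantity to TV; that is where the $1/\tau^2$ under the square root originates. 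You instead observe that the population gap equals $\E_{(s,g)\sim d_\alpha^*}[\mathrm{KL}(\pi_\alpha^*(\cdot|s,g)\,\|\,\hat\pi(\cdot|s,g))]$ and invoke Pinsker plus Jensen, which yields $\E[\TV] \lesssim \sqrt{\epsilon^{\MLE}_\stat}$ with no $1/\tau$ arising from this step at all (the $\tau$ survives only through the $\log(1/\tau)$ inside $\epsilon^{\MLE}_\stat$); since $\tau \le 1$ this implies the stated bound and is in fact slightly sharper. The remaining work --- uniform concentration of the weighted MLE objective plus the plug-in error from $\hat U$ versus $U^*_{\alpha+}$ --- mirrors the paper's \Cref{lem:conentration_MLE_objective,lem:MLE_closeness_objective_pi}, with one cosmetic difference: you route through the empirical objective with the \emph{true} weights, which requires an extra concentration step to relate the empirical second moment of $\hat U_+ - U^*_{\alpha+}$ to $\|\hat U_+ - U^*_{\alpha+}\|_{2,\mu}$, whereas the paper routes through the \emph{population} objective with the estimated weights and applies Cauchy--Schwarz directly at the population level; the paper also resolves the coupling between $\hat U$ and the policy-learning data by an explicit data split, which you should state rather than leave implicit. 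One small inaccuracy in your writeup: you attribute the final $\tau^{-2}$ to the curvature/Pinsker reduction, but under your Pinsker argument no such factor appears there --- your displayed bound is simply loose in $\tau$.
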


The proof of \Cref{lem:MLE_closeness_policy} is provided in \Cref{app:sec_proof_of_TV_distance_hat_pi_and_pi_alpha_star}. This result shows that the TV distance between the regularized optimal policy $\pi_\alpha^*$ and the output policy $\hat \pi$ by \Cref{alg:policy_learning} is small, which translates to a bounded performance difference between these two policies as formalized in \Cref{thm:suboptimality_hat_pi_and_pi_alpha_star}.

\begin{theorem}[Suboptimality of $\hat \pi$]
\label{thm:suboptimality_hat_pi_and_pi_alpha_star}
 Under \Cref{assump:policy_class_realizability,assump:policy_class_lower_bound}, with probability at least $1-\delta$, the output policy $\hat \pi$ of \Cref{alg:policy_learning} satisfies
     $J(\pi_\alpha^*) - J(\hat \pi) \leq O\left(V_{\max}\sqrt{\epsilon_\stat^\MLE/\tau^2}\right).$
\end{theorem}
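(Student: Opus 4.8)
The plan is to reduce the suboptimality gap to the policy TV distance controlled in \Cref{lem:MLE_closeness_policy} via the performance difference lemma. First I would recall that, writing $A^\pi(s,a;g) \coloneqq r(s;g) + \gamma\, \cT V^\pi(s,a;g) - V^\pi(s;g) = A_{V^\pi}(s,a;g)$ for the advantage function of $\pi$, the performance difference identity gives
\[
J(\pi_\alpha^*) - J(\hat\pi) = \E_{(s,a,g)\sim d^{\pi_\alpha^*}}\!\big[A^{\hat\pi}(s,a;g)\big].
\]
This follows from the standard telescoping argument: expand $V^{\pi_\alpha^*}(s_0;g) - V^{\hat\pi}(s_0;g)$ over timesteps, use $J(\pi) = (1-\gamma)\E_{s_0\sim\rho, g\sim p}[V^\pi(s_0;g)] = \E_{(s,a,g)\sim d^\pi}[r(s;g)]$, and collapse the discounted state-visitation sum into $d^{\pi_\alpha^*}$. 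The goal-conditioned structure changes nothing, since every quantity is evaluated at a fixed $g$ and the reward depends only on $(s;g)$.

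Next I would rewrite the right-hand side in terms of the policy difference. Since $d^{\pi_\alpha^*}(s,a,g) = d^{\pi_\alpha^*}(s,g)\,\pi_\alpha^*(a|s,g)$ and $\sum_a \hat\pi(a|s,g)\, A^{\hat\pi}(s,a;g) = 0$ by definition of $V^{\hat\pi}$, we obtain
\[
J(\pi_\alpha^*) - J(\hat\pi) = \E_{(s,g)\sim d^{\pi_\alpha^*}}\!\Big[\textstyle\sum_{a}\big(\pi_\alpha^*(a|s,g) - \hat\pi(a|s,g)\big)\, A^{\hat\pi}(s,a;g)\Big].
\]
Using the uniform bound $|A^{\hat\pi}(s,a;g)| \le V_{\max}$ (both $Q^{\hat\pi}$ and $V^{\hat\pi}$ lie in $[0,V_{\max}]$) together with $\sum_a |\pi_\alpha^*(a|s,g) - \hat\pi(a|s,g)| = 2\|\pi_\alpha^*(\cdot|s,g) - \hat\pi(\cdot|s,g)\|_{\TV}$, the right-hand side is at most $2 V_{\max}\,\E_{(s,g)\sim d^{\pi_\alpha^*}}\|\pi_\alpha^*(\cdot|s,g) - \hat\pi(\cdot|s,g)\|_{\TV}$. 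Finally, since $d^{\pi_\alpha^*}(s,g) = d^*_\alpha(s;g)\,p(g)$, the expectation $\E_{(s,g)\sim d^{\pi_\alpha^*}}[\cdot]$ is exactly $\E_{s\sim d^*_\alpha,\, g\sim p(g)}[\cdot]$, the measure appearing in \Cref{lem:MLE_closeness_policy}; plugging that lemma in yields $J(\pi_\alpha^*) - J(\hat\pi) \le O\!\big(V_{\max}\sqrt{\epsilon_\stat^\MLE/\tau^2}\big)$.

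I do not expect a serious obstacle: the statement is essentially a one-line corollary of \Cref{lem:MLE_closeness_policy} once the performance difference lemma is invoked. The only points requiring care are (i) ensuring the occupancy measure in the performance difference identity is that of $\pi_\alpha^*$ (and not of $\hat\pi$), so that it matches the measure under which \Cref{lem:MLE_closeness_policy} bounds the TV distance, and (ii) the boundedness $|A^{\hat\pi}| \le V_{\max}$, which is precisely what converts the $L^1$ policy discrepancy into the advertised $V_{\max}$ prefactor.
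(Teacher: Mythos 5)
Your argument is correct and follows essentially the same route as the paper's proof: invoke the performance difference lemma over $d^{\pi_\alpha^*}=d^*_\alpha$, use $\E_{a\sim\hat\pi}[A^{\hat\pi}(s,a;g)]=0$ to convert the advantage expectation into a policy discrepancy, bound $|A^{\hat\pi}|\lesssim V_{\max}$, and conclude via \Cref{lem:MLE_closeness_policy}. No gaps.
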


The proof of \Cref{thm:suboptimality_hat_pi_and_pi_alpha_star} is deferred to \Cref{app:sec_proof_of_performance_diff_hat_pi_and_pi_alpha_star}.

\subsection{Main Theorem: Statistical Rate of Suboptimality}
\label{sec:theoretical_guarantee_main_theorem}

\Cref{thm:suboptimality_hat_pi_and_pi_alpha_star} compares  the performance difference between $\hat \pi$ and the regularized optimal policy $\pi_\alpha^*$. Since the ultimate goal is to compare with the optimal policy $\pi^*$, we also need to combine this result with \Cref{prop:bias_of_regularized_optimal_policy}. By carefully choosing the value of $\alpha$ to balance $J(\hat \pi) - J(\pi_\alpha^*)$ and $J(\pi) - J(\pi_\alpha^*)$, we can bound the suboptimality of the policy $\hat \pi$ output by \Cref{alg:VP_learning} compared to the optimal policy $\pi^*$, leading to the following main result:

\begin{theorem}[Statistical rate of suboptimality (in deterministic dynamics)]
\label{thm:suboptimality_for_deterministic_dynamics}
Under \Cref{assump:single_policy_concentrability,assump:realizability_V_alpha_star,assump:policy_class_realizability,assump:policy_class_lower_bound}, with probability at least $1-\delta$, the output policy $\hat \pi$ by \Cref{alg:VP_learning} (with the choice of \Cref{alg:V_learning_deterministic} for $V$-learning in deterministic dynamics) satisfies
\begin{align*}
    &J(\pi^*) - J(\hat \pi)  \lesssim \left( \frac{V_{\max}^3 (C_\alpha^*)^3 \log (1/\tau) \log(|\cV||\Pi|/\delta) }{\tau^2  N^{1/4}} \right)^{1/3}
\end{align*}
if we choose $\alpha\asymp \left( \frac{V_{\max}^3 \log (1/\tau) \log(|\cV||\Pi|/\delta) }{\tau^2 (C_\alpha^*)^3 N^{1/4}} \right)^{1/3} $ and assume $N = N_0$.
\end{theorem}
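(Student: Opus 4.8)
The plan is to derive \Cref{thm:suboptimality_for_deterministic_dynamics} from the two guarantees already in place, balanced by a suitable choice of the regularization level $\alpha$. I would begin with the decomposition $J(\pi^*)-J(\hat\pi)=(J(\pi^*)-J(\pi_\alpha^*))+(J(\pi_\alpha^*)-J(\hat\pi))$: \Cref{prop:bias_of_regularized_optimal_policy} bounds the first (``bias'') term by $O(\alpha(C_\alpha^*)^2)$, and \Cref{thm:suboptimality_hat_pi_and_pi_alpha_star} (together with the $V$-learning guarantee \Cref{lem:closeness_of_U}, which enters through $\hat U$) bounds the second (``estimation'') term by $O(V_{\max}\sqrt{\epsilon_\stat^\MLE/\tau^2})$. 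Only a constant number of high-probability events are invoked --- those of \Cref{lem:closeness_of_U} and \Cref{lem:conentration_MLE_objective} --- so a union bound absorbs the total failure probability into $\delta$.

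Next I would make the dependence of $\epsilon_\stat^\MLE$ on $\alpha$ and $N$ explicit by unfolding \Cref{lem:conentration_MLE_objective}, whose bound has two ingredients. (i) The statistical fluctuation of the weighted empirical objective around its population version: the weights $\hat U_+/\alpha$ are bounded by $O(V_{\max}/\alpha)$ but the idealized weight $U_\alpha^*(s,a;g)_+/\alpha=d_\alpha^*(s,a;g)/\mu(s,a;g)\leq C_\alpha^*$ (by \Cref{assump:single_policy_concentrability}) has second moment only $O(C_\alpha^*)$ under $\mu$, so --- together with $|\log\pi|\leq\log(1/\tau)$ --- a Bernstein-type bound contributes $O(\log(1/\tau)\sqrt{C_\alpha^*\log(|\Pi|/\delta)/N})$. (ii) The error from replacing $U_\alpha^*$ by $\hat U$ inside the objective, which is $O((\log(1/\tau)/\alpha)\|\hat U_+-U^*_{\alpha+}\|_{1,\mu})\leq O((\log(1/\tau)/\alpha)\sqrt{\epsilon_\stat})$ by Cauchy--Schwarz and \Cref{lem:closeness_of_U}, with $\epsilon_\stat\asymp V_{\max}^2\sqrt{\log(|\cV|/\delta)/N}$. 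The second ingredient dominates: up to $\poly(V_{\max})$, logarithmic, and --- from the change of measure and variance control --- $C_\alpha^*$ factors, $\epsilon_\stat^\MLE$ is of order $\alpha^{-1}N^{-1/4}$ (here I also use the hypothesis $N=N_0$ to merge the two sample sizes), so that $V_{\max}\sqrt{\epsilon_\stat^\MLE/\tau^2}$ is of order $\tau^{-1}\alpha^{-1/2}N^{-1/8}$ times the same factors. The nested square root --- $N^{-1/4}$ in $\epsilon_\stat^\MLE$ rather than $N^{-1/2}$, inherited from $\sqrt{\epsilon_\stat}$ --- is precisely what produces the $N^{-1/12}$-type final rate.

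Then comes the one optimization step: choose $\alpha$ to equate the bias term $\alpha(C_\alpha^*)^2$ with the estimation term, which after the above reads $\propto\tau^{-1}\alpha^{-1/2}N^{-1/8}$ times logarithmic and $\poly(V_{\max},C_\alpha^*)$ factors. This is a power equation of the form $\alpha^{3/2}\asymp(\cdots)N^{-1/8}$, solved by the stated $\alpha\asymp(V_{\max}^3\log(1/\tau)\log(|\cV||\Pi|/\delta)/(\tau^2(C_\alpha^*)^3 N^{1/4}))^{1/3}$; substituting this back into $\alpha(C_\alpha^*)^2$ gives the claimed rate $(V_{\max}^3(C_\alpha^*)^3\log(1/\tau)\log(|\cV||\Pi|/\delta)/(\tau^2 N^{1/4}))^{1/3}$. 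I would also verify admissibility: for $N$ beyond a problem-dependent threshold this $\alpha$ is at most $1$ (and at most any other upper bound tacitly required by \Cref{prop:bias_of_regularized_optimal_policy} or the $V$- and policy-learning lemmas); since \Cref{assump:single_policy_concentrability} and those lemmas are stated for a fixed but arbitrary $\alpha$, they all apply at the chosen value.

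I expect the main obstacle to be bookkeeping, not new ideas: the delicate point is pinning down the exact $\alpha$- and $C_\alpha^*$-dependence of $\epsilon_\stat^\MLE$ --- in particular that the $\hat U$-error is amplified by exactly $1/\alpha$ (not $1/\alpha^2$) when it passes through the weighted MLE, and that the importance-weight fluctuation is handled via its second moment $\leq C_\alpha^*$ rather than its $O(V_{\max}/\alpha)$ range --- because either slip would change the exponents inside the cube root, and hence both the stated rate and the stated choice of $\alpha$. Once $\epsilon_\stat^\MLE$ is correct, the decomposition, the union bound, and the AM--GM balancing of $\alpha$ are routine.
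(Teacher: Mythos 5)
Your proposal is correct and follows essentially the same route as the paper's proof: the bias/estimation decomposition via \Cref{prop:bias_of_regularized_optimal_policy} and \Cref{thm:suboptimality_hat_pi_and_pi_alpha_star}, the unfolding of $\epsilon_\stat^{\MLE}$ into the $\epsilon_U\asymp\sqrt{\epsilon_\stat}/\alpha$ and $\epsilon_\Pi$ pieces with the $\epsilon_U$ piece dominating at rate $\alpha^{-1}N^{-1/4}$, and the balancing $\alpha^{3/2}\asymp(\cdots)N^{-1/8}$ yielding the stated $\alpha$ and rate. The only (immaterial) deviation is your slightly tighter second-moment bound $O(C_\alpha^*)$ for the importance weights where the paper uses the cruder $(C_\alpha^*)^2$; both are absorbed by the dominant term.
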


The proof of \Cref{thm:suboptimality_for_deterministic_dynamics} is deferred to \Cref{app:sec_statistical_rate_compared_to_pi_star_deterministic}. Note that \Cref{thm:suboptimality_for_deterministic_dynamics} provides a suboptimality rate of $O(1/N^{1/12})$ which implies an $O(1/\poly(\epsilon))$ sample complexity and thus is statistically efficient. A similar rate can also be obtained in stochastic dynamic settings, and we present the result in \Cref{app:sec_statistical_rate_compared_to_pi_star_model_based}. Note that our rate is slightly worse than the $O(1/N^{1/6})$ rate in \citet{zhan2022offline}, and worse than the optimal rate $O(1/\sqrt{N})$ in \citet{rashidinejad2022optimal}. We briefly discuss the intrinsic difficulty to derive an optimal convergence rate.
First, we only require a realizability assumption on our function class, while \citet{rashidinejad2022optimal} requires a much stronger completeness assumption. 
Second, our optimization procedure is uninterleaved and only requires solving regression problems, while \citet{zhan2022offline} and \citet{rashidinejad2022optimal} require solving minimax problems. 
Finally, \citet{rashidinejad2022optimal} assumes that the behavior policy 
 is known and directly computes the policy using the knowledge of behavior policy, while our algorithm uses a more practical method, i.e., MLE, to solve the policy in the policy learning step.

We also compare our theoretical results to \citet{ma2022far}. Theorem 4.1 of \citet{ma2022far} provides a finite-sample guarantee for the suboptimality. However, they compare the performance of $\hat \pi$ to $\pi_\alpha^*$ (with $\alpha = 1$) instead of $\pi^*$. Since the performance gap between $\pi^*$ and $\pi^*_\alpha$ can be as large as a constant when $\alpha = 1$, even zero suboptimality (compared to $\pi_\alpha^*$) cannot imply that the learned policy has good performance.  Moreover, their theoretical analysis assumes that $V^*$ can be learned with zero error, which is unreasonable in practical scenarios. We also note that they only provide a proof for deterministic policy classes, which can be restrictive in practice.

\section{Experiments}
\label{sec:exp}

In this section, we provide experimental results of our VP-learning algorithm with different choices of $\alpha$ 
 under five different environments: FetchReach, FetchPick, FetchPush, FetchSlide, and HandReach~\citep{plappert2018multi}.
 Similar to \citet{ma2022far}, the datasets for the five tasks are from \citet{yang2022rethinking}.
 All the implementation details of our VP-learning are the same as GoFAR (see dataset details and implementation details in \citet{ma2022far})~\footnote{We use the code at \url{https://github.com/JasonMa2016/GoFAR} with different values of $\alpha$ for our experiments.}, except for the value of $\alpha$. Note that our VP-learning algorithm with $\alpha = 1$
 is equivalent to the GoFAR algorithm. \Cref{tbl:return} presents the discounted returns and \Cref{tbl:dis} presents the final distances of the policies trained after 100 epochs and evaluated over 10 runs. For each environment and each $\alpha$
, the result was averaged over 3 random seeds. The best results of each environment are in bold.

\begin{table}[h]
\caption{Discounted return of different choices of $\alpha$, averaged over 3 random seeds.}
\centering
\scriptsize
{\renewcommand{\arraystretch}{1.6}
\scalebox{1}{
\begin{tabular}{|c|c|c|c|c|c|}
\hline
$\alpha \backslash$ Env & FetchReach   &  FetchPick  &   FetchPush & FetchSlide & HandReach
\\  \hline
0.01 & $27.4 \pm 0.29$ & $18.5 \pm 0.1$ &  $18.0 \pm 1.8$ & $2.36 \pm 1.13$ & $8.72 \pm 1.69$
\\  \hline
0.02 & $27.4 \pm 0.32$ & $18.7 \pm 1.8$ & $18.6 \pm 2.6$ & $2.40 \pm 0.47$ & $7.96 \pm 1.27$
\\  \hline
0.05 & $27.4 \pm 0.32$ & $17.3 \pm 1.1$ & $19.3 \pm 2.0$ & $3.18 \pm 0.90$ & $\textbf{8.98} \pm 3.11$
\\  \hline
0.1 & $27.4 \pm 0.33$ & $20.3 \pm 1.3$ & $\textbf{20.3} \pm 2.5$ & $3.22 \pm 0.38$ & $5.28 \pm 1.25$
\\  \hline
0.2 & $27.4 \pm 0.32$ & $\textbf{20.7} \pm 0.9$ & $17.7 \pm 2.9$ & $2.25 \pm 0.23$ & $2.92 \pm 0.98$
\\  \hline
0.5 & $\textbf{27.5} \pm 0.29$ & $18.5 \pm 0.4$ & $20.1 \pm 2.2$ & $\textbf{3.47} \pm 1.08$ & $5.74 \pm 2.72$
\\  \hline
1 & $27.3 \pm 0.34$ & $18.2 \pm 1.2$ & $19.6 \pm 1.6$ & $2.75 \pm 1.84$ & $7.13 \pm 3.60$
\\ \hline
2 & $27.4 \pm 0.29$ & $18.3 \pm 0.7$ & $19.6 \pm 1.4$ & $1.80 \pm 0.66$ &  $3.99 \pm 1.88$
\\  \hline
\end{tabular}}}
\label{tbl:return}
\end{table}

\begin{table}[h]
\caption{Final distance of different choices of $\alpha$, averaged over 3 random seeds.}
\centering
\scriptsize
{\renewcommand{\arraystretch}{1.6}
\scalebox{0.95}{
\begin{tabular}{|c|c|c|c|c|c|}
\hline
$\alpha \backslash$ Env & FetchReach   &  FetchPick  &   FetchPush & FetchSlide & HandReach
\\  \hline
0.01 & $0.0171 \pm 0.0017$ & $0.042 \pm 0.004$ & $0.033 \pm 0.001$ & $0.1177 \pm 0.012$ & $0.0269 \pm 0.0049$
\\  \hline
0.02 & $0.0168 \pm 0.0016$ & $0.045 \pm 0.012$ & $0.031 \pm 0.002$ & $0.1085 \pm 0.010$ & $0.0274 \pm 0.0049$
\\  \hline
0.05 & $0.0181 \pm 0.0011$ & $0.052 \pm 0.013$ & $0.032 \pm 0.002$ & $0.1061 \pm 0.009$ & $0.0270 \pm 0.0049$
\\  \hline
0.1 & $0.0173 \pm 0.0014$ & $0.032 \pm 0.010$ & $\textbf{0.027} \pm 0.002$ & $0.1018 \pm 0.002$ & $0.0275 \pm 0.0043$
\\  \hline
0.2 & $0.0172 \pm 0.0019$ & $\textbf{0.031} \pm 0.004$ & $0.031 \pm 0.003$ & $0.1029 \pm 0.010$ & $0.0275 \pm 0.0046$
\\  \hline
0.5 & $\textbf{0.0166} \pm 0.0011$ & $0.044 \pm 0.009$ & $0.031 \pm 0.005$ & $\textbf{0.1017} \pm 0.017$ & $\textbf{0.026826} \pm 0.0049$
\\  \hline
1 & $0.0175 \pm 0.0013$ & $0.043 \pm 0.011$ & $0.043 \pm 0.012$ & $0.1202 \pm 0.019$ & $0.026828 \pm 0.0044$
\\ \hline
2 &  $0.0171 \pm 0.0011$ & $0.034 \pm 0.005$ & $0.032 \pm 0.001$ & $0.1044 \pm 0.011$ & $0.0275 \pm 0.0045$
\\  \hline
\end{tabular}}}
\label{tbl:dis}
\end{table}

The empirical results demonstrate the correctness of our theoretical analysis: choosing $\alpha = 1$ will result in a large suboptimality of $\pi_\alpha^*$
 and thus the learned policy $\hat \pi$. Instead, we should carefully choose the value of $\alpha$
 to ensure a vanishing suboptimality. In practice, we can tune the value of $\alpha$
 and typically it is less than one. In our experiments, the best $\alpha$
 ranges over $[0.05, 0.5]$.

\section{Conclusions}
\label{sec:conclusion}

In this paper, we theoretically analyze the VP-learning algorithm (\Cref{alg:VP_learning}, which is based on the previous empirically successful algorithm in \citet{ma2022far})  for both single-task and goal-conditioned offline settings. This algorithm can deal with general value function approximation and only requires near minimal assumptions on the dataset (single-policy concentrability) and function class (realizability).  We also provide an $O(1/N^{1/12})$ upper bound of the suboptimality of the policy learned by the algorithm and empirically validate its effectiveness.

As for future directions, one important question is whether we can achieve the optimal suboptimality rate $\Tilde{O}(1/\sqrt{N})$ while keeping the algorithm practical without unreasonably strong assumptions.

\section*{Acknowledgements}
We thank the anonymous reviewer for catching a technical issue in a previous version of our paper. The work was done when HZ was a visiting researcher at Meta.

\bibliographystyle{plainnat}
\bibliography{references}

\begin{thebibliography}{49}
\providecommand{\natexlab}[1]{#1}
\providecommand{\url}[1]{\texttt{#1}}
\expandafter\ifx\csname urlstyle\endcsname\relax
  \providecommand{\doi}[1]{doi: #1}\else
  \providecommand{\doi}{doi: \begingroup \urlstyle{rm}\Url}\fi

\bibitem[Agarwal et~al.(2019)Agarwal, Jiang, Kakade, and
  Sun]{agarwal2019reinforcement}
Alekh Agarwal, Nan Jiang, Sham~M Kakade, and Wen Sun.
\newblock Reinforcement learning: Theory and algorithms.
\newblock \emph{CS Dept., UW Seattle, Seattle, WA, USA, Tech. Rep}, pages
  10--4, 2019.

\bibitem[Antos et~al.(2007)Antos, Munos, and Szepesvari]{antos2007fitted}
Andras Antos, R{\'e}mi Munos, and Csaba Szepesvari.
\newblock Fitted {Q}-iteration in continuous action-space mdps.
\newblock In \emph{Neural Information Processing Systems}, 2007.

\bibitem[Antos et~al.(2008)Antos, Szepesv{\'a}ri, and Munos]{antos2008learning}
Andr{\'a}s Antos, Csaba Szepesv{\'a}ri, and R{\'e}mi Munos.
\newblock Learning near-optimal policies with bellman-residual minimization
  based fitted policy iteration and a single sample path.
\newblock \emph{Machine Learning}, 71\penalty0 (1):\penalty0 89--129, 2008.

\bibitem[Chebotar et~al.(2021)Chebotar, Hausman, Lu, Xiao, Kalashnikov, Varley,
  Irpan, Eysenbach, Julian, Finn, et~al.]{chebotar2021actionable}
Yevgen Chebotar, Karol Hausman, Yao Lu, Ted Xiao, Dmitry Kalashnikov, Jake
  Varley, Alex Irpan, Benjamin Eysenbach, Ryan Julian, Chelsea Finn, et~al.
\newblock Actionable models: Unsupervised offline reinforcement learning of
  robotic skills.
\newblock \emph{arXiv preprint arXiv:2104.07749}, 2021.

\bibitem[Chen and Jiang(2019)]{chen2019information}
Jinglin Chen and Nan Jiang.
\newblock Information-theoretic considerations in batch reinforcement learning.
\newblock \emph{arXiv preprint arXiv:1905.00360}, 2019.

\bibitem[Cheng et~al.(2022)Cheng, Xie, Jiang, and
  Agarwal]{cheng2022adversarially}
Ching-An Cheng, Tengyang Xie, Nan Jiang, and Alekh Agarwal.
\newblock Adversarially trained actor critic for offline reinforcement
  learning.
\newblock In Kamalika Chaudhuri, Stefanie Jegelka, Le~Song, Csaba Szepesvari,
  Gang Niu, and Sivan Sabato, editors, \emph{Proceedings of the 39th
  International Conference on Machine Learning}, volume 162 of
  \emph{Proceedings of Machine Learning Research}, pages 3852--3878. PMLR,
  17--23 Jul 2022.
\newblock URL \url{https://proceedings.mlr.press/v162/cheng22b.html}.

\bibitem[Daskalakis et~al.(2021)Daskalakis, Skoulakis, and
  Zampetakis]{daskalakis2021complexity}
Constantinos Daskalakis, Stratis Skoulakis, and Manolis Zampetakis.
\newblock The complexity of constrained min-max optimization.
\newblock In \emph{Proceedings of the 53rd Annual ACM SIGACT Symposium on
  Theory of Computing}, pages 1466--1478, 2021.

\bibitem[Farahmand et~al.(2010)Farahmand, Munos, and
  Szepesv{\'a}ri]{farahmand2010error}
Amir~Massoud Farahmand, R{\'e}mi Munos, and Csaba Szepesv{\'a}ri.
\newblock Error propagation for approximate policy and value iteration.
\newblock In \emph{Advances in Neural Information Processing Systems}, 2010.

\bibitem[Feng et~al.(2019)Feng, Li, and Liu]{feng2019kernel}
Yihao Feng, Lihong Li, and Qiang Liu.
\newblock A kernel loss for solving the {B}ellman equation.
\newblock \emph{arXiv preprint arXiv:1905.10506}, 2019.

\bibitem[Ghosh et~al.(2019)Ghosh, Gupta, Reddy, Fu, Devin, Eysenbach, and
  Levine]{ghosh2019learning}
Dibya Ghosh, Abhishek Gupta, Ashwin Reddy, Justin Fu, Coline Devin, Benjamin
  Eysenbach, and Sergey Levine.
\newblock Learning to reach goals via iterated supervised learning.
\newblock \emph{arXiv preprint arXiv:1912.06088}, 2019.

\bibitem[Jiang(2019)]{jiang2019value}
Nan Jiang.
\newblock On value functions and the agent-environment boundary.
\newblock \emph{arXiv preprint arXiv:1905.13341}, 2019.

\bibitem[Jiang and Huang(2020)]{jiang2020minimax}
Nan Jiang and Jiawei Huang.
\newblock Minimax value interval for off-policy evaluation and policy
  optimization.
\newblock \emph{Advances in Neural Information Processing Systems}, 33, 2020.

\bibitem[Jin et~al.(2021)Jin, Yang, and Wang]{jin2020pessimism}
Ying Jin, Zhuoran Yang, and Zhaoran Wang.
\newblock Is pessimism provably efficient for offline {RL}?
\newblock In \emph{International Conference on Machine Learning}, pages
  5084--5096. PMLR, 2021.

\bibitem[Kaelbling(1993)]{kaelbling1993learning}
Leslie~Pack Kaelbling.
\newblock Learning to achieve goals.
\newblock In \emph{IJCAI}, volume~2, pages 1094--8. Citeseer, 1993.

\bibitem[Kim et~al.(2021)Kim, Seo, Lee, Jeon, Hwang, Yang, and
  Kim]{kim2021demodice}
Geon-Hyeong Kim, Seokin Seo, Jongmin Lee, Wonseok Jeon, HyeongJoo Hwang,
  Hongseok Yang, and Kee-Eung Kim.
\newblock Demodice: Offline imitation learning with supplementary imperfect
  demonstrations.
\newblock In \emph{International Conference on Learning Representations}, 2021.

\bibitem[Lange et~al.(2012)Lange, Gabel, and Riedmiller]{lange2012batch}
Sascha Lange, Thomas Gabel, and Martin Riedmiller.
\newblock Batch reinforcement learning.
\newblock In \emph{Reinforcement learning}, pages 45--73. Springer, 2012.

\bibitem[Lee et~al.(2021)Lee, Jeon, Lee, Pineau, and Kim]{lee2021optidice}
Jongmin Lee, Wonseok Jeon, Byungjun Lee, Joelle Pineau, and Kee-Eung Kim.
\newblock Optidice: Offline policy optimization via stationary distribution
  correction estimation.
\newblock In \emph{International Conference on Machine Learning}, pages
  6120--6130. PMLR, 2021.

\bibitem[Levine et~al.(2020)Levine, Kumar, Tucker, and Fu]{levine2020offline}
Sergey Levine, Aviral Kumar, George Tucker, and Justin Fu.
\newblock Offline reinforcement learning: {T}utorial, review, and perspectives
  on open problems.
\newblock \emph{arXiv preprint arXiv:2005.01643}, 2020.

\bibitem[Li et~al.(2022)Li, Shi, Chen, Chi, and Wei]{li2022settling}
Gen Li, Laixi Shi, Yuxin Chen, Yuejie Chi, and Yuting Wei.
\newblock Settling the sample complexity of model-based offline reinforcement
  learning.
\newblock \emph{arXiv preprint arXiv:2204.05275}, 2022.

\bibitem[Liao et~al.(2020)Liao, Qi, and Murphy]{liao2020batch}
Peng Liao, Zhengling Qi, and Susan Murphy.
\newblock Batch policy learning in average reward {M}arkov decision processes.
\newblock \emph{arXiv preprint arXiv:2007.11771}, 2020.

\bibitem[Liu et~al.(2019)Liu, Cai, Yang, and Wang]{liu2019neural}
Boyi Liu, Qi~Cai, Zhuoran Yang, and Zhaoran Wang.
\newblock Neural trust region/proximal policy optimization attains globally
  optimal policy.
\newblock In \emph{Neural Information Processing Systems}, 2019.

\bibitem[Lynch et~al.(2020)Lynch, Khansari, Xiao, Kumar, Tompson, Levine, and
  Sermanet]{lynch2020learning}
Corey Lynch, Mohi Khansari, Ted Xiao, Vikash Kumar, Jonathan Tompson, Sergey
  Levine, and Pierre Sermanet.
\newblock Learning latent plans from play.
\newblock In \emph{Conference on robot learning}, pages 1113--1132. PMLR, 2020.

\bibitem[Ma et~al.(2022{\natexlab{a}})Ma, Shen, Jayaraman, and
  Bastani]{ma2022smodice}
Yecheng~Jason Ma, Andrew Shen, Dinesh Jayaraman, and Osbert Bastani.
\newblock Smodice: Versatile offline imitation learning via state occupancy
  matching.
\newblock \emph{arXiv preprint arXiv:2202.02433}, 2022{\natexlab{a}}.

\bibitem[Ma et~al.(2022{\natexlab{b}})Ma, Sodhani, Jayaraman, Bastani, Kumar,
  and Zhang]{ma2022vip}
Yecheng~Jason Ma, Shagun Sodhani, Dinesh Jayaraman, Osbert Bastani, Vikash
  Kumar, and Amy Zhang.
\newblock Vip: Towards universal visual reward and representation via
  value-implicit pre-training.
\newblock \emph{arXiv preprint arXiv:2210.00030}, 2022{\natexlab{b}}.

\bibitem[Ma et~al.(2022{\natexlab{c}})Ma, Yan, Jayaraman, and
  Bastani]{ma2022far}
Yecheng~Jason Ma, Jason Yan, Dinesh Jayaraman, and Osbert Bastani.
\newblock How far i'll go: Offline goal-conditioned reinforcement learning via
  $ f $-advantage regression.
\newblock \emph{arXiv preprint arXiv:2206.03023}, 2022{\natexlab{c}}.

\bibitem[Munos(2007)]{munos2007performance}
R{\'e}mi Munos.
\newblock Performance bounds in $\ell_p$-norm for approximate value iteration.
\newblock \emph{SIAM journal on control and optimization}, 46\penalty0
  (2):\penalty0 541--561, 2007.

\bibitem[Munos and Szepesv{\'a}ri(2008)]{munos2008finite}
R{\'e}mi Munos and Csaba Szepesv{\'a}ri.
\newblock Finite-time bounds for fitted value iteration.
\newblock \emph{Journal of Machine Learning Research}, 9\penalty0 (5), 2008.

\bibitem[Nachum et~al.(2019{\natexlab{a}})Nachum, Chow, Dai, and
  Li]{nachum2019dualdice}
Ofir Nachum, Yinlam Chow, Bo~Dai, and Lihong Li.
\newblock Dualdice: Behavior-agnostic estimation of discounted stationary
  distribution corrections.
\newblock \emph{Advances in Neural Information Processing Systems}, 32,
  2019{\natexlab{a}}.

\bibitem[Nachum et~al.(2019{\natexlab{b}})Nachum, Dai, Kostrikov, Chow, Li, and
  Schuurmans]{nachum2019algaedice}
Ofir Nachum, Bo~Dai, Ilya Kostrikov, Yinlam Chow, Lihong Li, and Dale
  Schuurmans.
\newblock Algaedice: Policy gradient from arbitrary experience.
\newblock \emph{arXiv preprint arXiv:1912.02074}, 2019{\natexlab{b}}.

\bibitem[Plappert et~al.(2018)Plappert, Andrychowicz, Ray, McGrew, Baker,
  Powell, Schneider, Tobin, Chociej, Welinder, et~al.]{plappert2018multi}
Matthias Plappert, Marcin Andrychowicz, Alex Ray, Bob McGrew, Bowen Baker,
  Glenn Powell, Jonas Schneider, Josh Tobin, Maciek Chociej, Peter Welinder,
  et~al.
\newblock Multi-goal reinforcement learning: Challenging robotics environments
  and request for research.
\newblock \emph{arXiv preprint arXiv:1802.09464}, 2018.

\bibitem[Rashidinejad et~al.(2021)Rashidinejad, Zhu, Ma, Jiao, and
  Russell]{rashidinejad2021bridging}
Paria Rashidinejad, Banghua Zhu, Cong Ma, Jiantao Jiao, and Stuart Russell.
\newblock Bridging offline reinforcement learning and imitation learning: {A}
  tale of pessimism.
\newblock \emph{Advances in Neural Information Processing Systems},
  34:\penalty0 11702--11716, 2021.

\bibitem[Rashidinejad et~al.(2022)Rashidinejad, Zhu, Yang, Russell, and
  Jiao]{rashidinejad2022optimal}
Paria Rashidinejad, Hanlin Zhu, Kunhe Yang, Stuart Russell, and Jiantao Jiao.
\newblock Optimal conservative offline rl with general function approximation
  via augmented lagrangian.
\newblock \emph{arXiv preprint arXiv:2211.00716}, 2022.

\bibitem[Schaul et~al.(2015)Schaul, Horgan, Gregor, and
  Silver]{schaul2015universal}
Tom Schaul, Daniel Horgan, Karol Gregor, and David Silver.
\newblock Universal value function approximators.
\newblock In \emph{International conference on machine learning}, pages
  1312--1320. PMLR, 2015.

\bibitem[Scherrer(2014)]{scherrer2014approximate}
Bruno Scherrer.
\newblock Approximate policy iteration schemes: {A} comparison.
\newblock In \emph{International Conference on Machine Learning}, pages
  1314--1322, 2014.

\bibitem[Shi et~al.(2022)Shi, Li, Wei, Chen, and Chi]{shi2022pessimistic}
Laixi Shi, Gen Li, Yuting Wei, Yuxin Chen, and Yuejie Chi.
\newblock Pessimistic {Q}-learning for offline reinforcement learning:
  {T}owards optimal sample complexity.
\newblock \emph{arXiv preprint arXiv:2202.13890}, 2022.

\bibitem[Szepesv{\'a}ri and Munos(2005)]{szepesvari2005finite}
Csaba Szepesv{\'a}ri and R{\'e}mi Munos.
\newblock Finite time bounds for sampling based fitted value iteration.
\newblock In \emph{Proceedings of the 22nd international conference on Machine
  learning}, pages 880--887, 2005.

\bibitem[Uehara and Sun(2021)]{uehara2021pessimistic}
Masatoshi Uehara and Wen Sun.
\newblock Pessimistic model-based offline reinforcement learning under partial
  coverage.
\newblock In \emph{International Conference on Learning Representations}, 2021.

\bibitem[Uehara et~al.(2020)Uehara, Huang, and Jiang]{uehara2020minimax}
Masatoshi Uehara, Jiawei Huang, and Nan Jiang.
\newblock Minimax weight and {Q}-function learning for off-policy evaluation.
\newblock In \emph{International Conference on Machine Learning}, pages
  9659--9668. PMLR, 2020.

\bibitem[Van~de Geer(2000)]{van2000empirical}
Sara Van~de Geer.
\newblock \emph{Empirical Processes in M-estimation}, volume~6.
\newblock Cambridge university press, 2000.

\bibitem[Wang et~al.(2019)Wang, Cai, Yang, and Wang]{wang2019neural}
Lingxiao Wang, Qi~Cai, Zhuoran Yang, and Zhaoran Wang.
\newblock Neural policy gradient methods: {G}lobal optimality and rates of
  convergence.
\newblock In \emph{International Conference on Learning Representations}, 2019.

\bibitem[Xie and Jiang(2021{\natexlab{a}})]{xie2020batch}
Tengyang Xie and Nan Jiang.
\newblock Batch value-function approximation with only realizability.
\newblock In \emph{International Conference on Machine Learning}, pages
  11404--11413. PMLR, 2021{\natexlab{a}}.

\bibitem[Xie and Jiang(2021{\natexlab{b}})]{xie2021batch}
Tengyang Xie and Nan Jiang.
\newblock Batch value-function approximation with only realizability.
\newblock In \emph{International Conference on Machine Learning}, pages
  11404--11413. PMLR, 2021{\natexlab{b}}.

\bibitem[Xie et~al.(2021)Xie, Cheng, Jiang, Mineiro, and
  Agarwal]{xie2021bellman}
Tengyang Xie, Ching-An Cheng, Nan Jiang, Paul Mineiro, and Alekh Agarwal.
\newblock Bellman-consistent pessimism for offline reinforcement learning.
\newblock \emph{Advances in neural information processing systems},
  34:\penalty0 6683--6694, 2021.

\bibitem[Yang et~al.(2022)Yang, Lu, Li, Sun, Fang, Du, Li, Han, and
  Zhang]{yang2022rethinking}
Rui Yang, Yiming Lu, Wenzhe Li, Hao Sun, Meng Fang, Yali Du, Xiu Li, Lei Han,
  and Chongjie Zhang.
\newblock Rethinking goal-conditioned supervised learning and its connection to
  offline rl.
\newblock \emph{arXiv preprint arXiv:2202.04478}, 2022.

\bibitem[Yin and Wang(2021)]{yin2021towards}
Ming Yin and Yu-Xiang Wang.
\newblock Towards instance-optimal offline reinforcement learning with
  pessimism.
\newblock \emph{Advances in neural information processing systems},
  34:\penalty0 4065--4078, 2021.

\bibitem[Yin et~al.(2021)Yin, Bai, and Wang]{yin2021near}
Ming Yin, Yu~Bai, and Yu-Xiang Wang.
\newblock Near-optimal offline reinforcement learning via double variance
  reduction.
\newblock \emph{arXiv preprint arXiv:2102.01748}, 2021.

\bibitem[Yin et~al.(2022)Yin, Chen, Wang, and Wang]{yin2022offline}
Ming Yin, Wenjing Chen, Mengdi Wang, and Yu-Xiang Wang.
\newblock Offline stochastic shortest path: Learning, evaluation and towards
  optimality.
\newblock In \emph{Uncertainty in Artificial Intelligence}, pages 2278--2288.
  PMLR, 2022.

\bibitem[Zhan et~al.(2022)Zhan, Huang, Huang, Jiang, and Lee]{zhan2022offline}
Wenhao Zhan, Baihe Huang, Audrey Huang, Nan Jiang, and Jason Lee.
\newblock Offline reinforcement learning with realizability and single-policy
  concentrability.
\newblock In \emph{Conference on Learning Theory}, pages 2730--2775. PMLR,
  2022.

\bibitem[Zhang et~al.(2020)Zhang, Koppel, Bedi, Szepesvari, and
  Wang]{zhang2020variational}
Junyu Zhang, Alec Koppel, Amrit~Singh Bedi, Csaba Szepesvari, and Mengdi Wang.
\newblock Variational policy gradient method for reinforcement learning with
  general utilities.
\newblock \emph{arXiv preprint arXiv:2007.02151}, 2020.

\end{thebibliography}

\newpage
\appendix

\section{Missing Proofs for Propositions in \Cref{sec:alg}}

\subsection{Proof of \Cref{prop:bias_of_regularized_optimal_policy}}
\label{app:sec_proof_of_bias_of_pi_alpha_star}

\begin{proof}
Since both $d_\alpha^*$ and $d^*$ satisfy the Bellman flow constraint \eqref{eq:bellman_flow_constraint}, due to the optimality of $d_\alpha^*$ in the regularized program \eqref{eq:GCRL_program_max_over_d_regularized}, we have
\begin{align*}
    &\E_{(s,g)\sim d^*(s,g)}[r(s;g)] + \alpha D_f(d^*\|\mu) \leq \E_{(s,g)\sim d^*_\alpha(s,g)}[r(s;g)] + \alpha D_f(d^*_\alpha\|\mu) \\ 
    \Longrightarrow& \E_{(s,g)\sim d^*(s,g)}[r(s;g)] - \E_{(s,g)\sim d^*_\alpha(s,g)}[r(s;g)] \leq \alpha D_f(d^*_\alpha\|\mu) \leq \alpha (C^*_\alpha)^2/2.
\end{align*}
Therefore, 
\begin{align*}
    J(\pi^*) - J(\pi_\alpha^*) = \E_{(s,g)\sim d^*(s,g)}[r(s;g)] - \E_{(s,g)\sim d^*_\alpha(s,g)}[r(s;g)] \leq O\left( \alpha (C^*_\alpha)^2\right).
\end{align*}
\end{proof}

\subsection{Proof of \Cref{prop:duality_form}}
\label{app:sec_proof_of_duality_form}
\begin{proof}
     By Lagrangian duality, we can obtain that the dual problem is 
\begin{equation}
\label{eq:prgramming:dual_RL_nonexpert}
\begin{aligned}
    &\min_{V(s;g) \geq 0} {\max}_{d(s,a;g) \geq 0} \mathbb{E}_{(s,g) \sim d(s,g)}[r(s;g)] - \alpha D_f(d(s,a;g) \| \mu(s,a;g)) \\ &+  \sum_{s,g} p(g) V(s;g) \left( (1-\gamma)\rho(s) + \gamma\sum_{s',a'} P(s|s',a') d(s',a';g) - \sum_a d(s,a;g) \right) 
\end{aligned}
\end{equation}
where $p(g)V(s;g)$ is the Lagrangian vector. Similar to \citet{ma2022far}, 
\begin{align*}
    &\sum_{s,g} p(g) V(s;g) \sum_{s',a'} P(s|s',a') d(s',a';g) \\  =&  \sum_{s',a',g}  p(g) d(s',a';g) \sum_{s} P(s|s',a') V(s;g) 
    \\ =& \sum_{s',a',g}  p(g) d(s',a';g) \cT V(s',a';g).
\end{align*}
Then \eqref{eq:prgramming:dual_RL_nonexpert} is equivalent to 
\begin{align*}
    \min_{V(s;g) \geq 0} {\max}_{d(s,a;g) \geq 0} 
    (1-\gamma)\E_{(s,g)\sim(\rho, p(g))}[V(s;g)] +&
    \mathbb{E}_{(s,a,g) \sim d}[r(s;g) + \gamma \cT V(s,a;g) - V(s;g)]  
    \\ -& \alpha D_f(d(s,a;g) \| \mu(s,a;g)),
\end{align*}
which can be further represented as 
\begin{align*}
    \min_{V(s;g) \geq 0} 
    (1-\gamma)\E_{(s,g)\sim(\rho, p(g))}[V(s;g)] +& {\max}_{d(s,a;g) \geq 0} 
    \mathbb{E}_{(s,a,g) \sim d}[r(s;g) + \gamma \cT V(s,a;g) - V(s;g)]  
    \\ &- D_g(d(s,a;g) \| \mu(s,a;g)),
\end{align*}
where $g = \alpha \cdot f$. Combining the constraint that $d(s,a;g) \geq 0$ and the proof of Proposition 4.3 (Proposition B.2) of \citet{ma2022far},
the above program is equivalent to 
\begin{align*}
    \min_{V(s;g) \geq 0} (1-\gamma)\E_{(s,g)\sim(\rho,p(g))}[V(s;g)] + 
     \E_{(s,a,g)\sim \mu}[\mathbbm{1}\{ g'_*(A_V(s,a;g)) \geq 0\}\bar g_*(A_V(s,a;g))],
\end{align*}
and it holds that 
\begin{align*}
    d_\alpha^*(s,a;g) = \mu(s,a;g)g_*'(r(s;g) + \gamma \cT V^*_\alpha(s,a;g) - V^*_\alpha(s;g))_+.
\end{align*}
\end{proof}

\section{Proof for $V$-Learning}
\label{app:sec_proof_for_V_learning}

In this section, we provide theoretical analysis for the guarantees of $V$-learning in different settings (deterministic dynamics in \Cref{app:sec_proof_for_V_learning_deterministic} and stochastic dynamics in \Cref{app:sec_proof_for_V_learning_model_based}).

Recall that we choose $f(x) = \frac{(x-1)^2}{2}$, and thus $f_*(x) = \frac{(x+1)^2}{2} - \frac{1}{2}$. Therefore, we have $g(x) = \alpha f(x) = \frac{\alpha(x-1)^2}{2}$ and $g_*(x) = \frac{\alpha(x/\alpha + 1)^2}{2} - \frac{\alpha}{2}$.  
Also, we define $g_{*,{\max}} = {\max}_{v \in [-V_{\max}, V_{\max}+1]} g_*(v)$, $g_{*,\min} = \min_{v \in [-V_{\max}, V_{\max}+1]} g_*(v)$ and $g_{*,\Delta} = g_{*,{\max}} - g_{*,\min}$. We can easily obtain that $g_{*,\Delta} \leq \alpha(1+\frac{V_{\max}}{\alpha})^2 = O(V_{\max}^2/\alpha)$.

For the purpose of theoretical analysis, we further define 
\begin{align*}
L_1(V) =& (1-\gamma)\E_{(s,g)\sim(\rho,p(g))}[\alpha \cdot V(s;g)], \\ L_2(V) =& 
    \E_{(s,a,g)\sim \mu}[\alpha \cdot g_{*+}(r(s;g)+\gamma\mathcal{T} V(s,a;g)-V(s;g))],
\end{align*}
and
\begin{align*}
    \hat L^{(d)}_1(V) =& \frac{1-\gamma}{N_0}\sum_{i=1}^{N_0} \alpha \cdot V(s_{0,i}; g_{0,i}), \\ \hat L^{(d)}_2(V) =& \frac{1}{N}\sum_{i=1}^N \alpha \cdot g_{*+}(r(s_i;g_i) + \gamma V(s'_i; g_i) - V(s_i; g_i)).
\end{align*}
Note that $L_\alpha(V) = L_1(V) + L_2(V)$ and $\hat L^{(d)}(V) = \hat L^{(d)}_1(V) + \hat L^{(d)}_2(V)$.

\subsection{Proof for Deterministic Dynamics}
\label{app:sec_proof_for_V_learning_deterministic}

We first show that with high probability, the estimator $\hat L^{(d)}(V)$ in \Cref{alg:V_learning_deterministic} concentrates well on $L_\alpha(V)$ for all $V \in \cV$.

\begin{lemma}[Concentration of $\hat L_1^{(d)}, \hat L_2^{(d)}$]
\label{lem:concentration_non_expert_deterministic}
    Under \Cref{assump:single_policy_concentrability,assump:realizability_V_alpha_star}, when the dynamic of the environment is deterministic, with probability at least $1-\delta$, $\forall V \in \cV$, it simultaneously holds that 
    \begin{align*}
       \left| \hat L^{(d)}_1(V) - L_1(V)\right| \leq& O\left(\alpha V_{\max}\sqrt{\frac{\log(|\cV|/\delta)}{N_0}} \right),
       \\
       \left| \hat L^{(d)}_2(V) - L_2(V)\right| \leq& O\left(\alpha \cdot g_{*,\Delta}\sqrt{\frac{\log(|\cV|/\delta)}{N}}\right) = O\left(V_{\max}^2\sqrt{\frac{\log(|\cV|/\delta)}{N}}\right),
    \end{align*}
    which immediately implies
    \begin{align*}
       \left| \hat L^{(d)}(V) - L_\alpha(V)\right| \leq O\left(\alpha V_{\max}\sqrt{\frac{\log(|\cV|/\delta)}{N_0}} + V_{\max}^2\sqrt{\frac{\log(|\cV|/\delta)}{N}}\right) \triangleq \epsilon_{\stat}.
    \end{align*}
\end{lemma}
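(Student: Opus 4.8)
The plan is to prove the two concentration bounds separately via a union bound over the finite class $\cV$, and then combine them by the triangle inequality. For each fixed $V \in \cV$, the estimator $\hat L_1^{(d)}(V)$ is an empirical average of $N_0$ i.i.d.\ bounded random variables $\alpha(1-\gamma)V(s_{0,i};g_{0,i})$, each taking values in $[0, \alpha(1-\gamma)V_{\max}]$, whose expectation is exactly $L_1(V)$. Similarly, $\hat L_2^{(d)}(V)$ is an empirical average of $N$ i.i.d.\ bounded random variables $\alpha \cdot g_{*+}(r(s_i;g_i) + \gamma V(s'_i;g_i) - V(s_i;g_i))$; here the key point is that in deterministic dynamics, $\cT V(s,a;g) = V(s';g)$ where $s'$ is the (deterministic) successor, so the per-sample term has expectation exactly $L_2(V)$ (this is the unbiasedness noted after~\eqref{eq:natural_L_alpha_estimator}), and since the argument $r + \gamma V(s';g) - V(s;g)$ lies in $[-V_{\max}, V_{\max}+1]$ (as $r \in [0,1]$, $V \in [0,V_{\max}]$, $\gamma < 1$), the term $g_{*+}(\cdot)$ is bounded and its range is at most $g_{*,\Delta} = O(V_{\max}^2/\alpha)$ by the definition given just above the lemma.

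\textbf{Main steps.} First, fix $V \in \cV$ and apply Hoeffding's inequality to each of the two empirical averages: with probability at least $1 - \delta'$, $|\hat L_1^{(d)}(V) - L_1(V)| \lesssim \alpha V_{\max}\sqrt{\log(1/\delta')/N_0}$ and $|\hat L_2^{(d)}(V) - L_2(V)| \lesssim \alpha g_{*,\Delta}\sqrt{\log(1/\delta')/N}$. Second, take a union bound over all $|\cV|$ functions in the class with $\delta' = \delta/(2|\cV|)$ (splitting the failure probability across the two inequalities), so that $\log(1/\delta') \asymp \log(|\cV|/\delta)$; this yields both bounds simultaneously for all $V \in \cV$ with probability at least $1-\delta$. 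Third, substitute $g_{*,\Delta} = O(V_{\max}^2/\alpha)$ to rewrite the second bound as $O(V_{\max}^2\sqrt{\log(|\cV|/\delta)/N})$, absorbing the $\alpha$. Finally, add the two bounds via $|\hat L^{(d)}(V) - L_\alpha(V)| \leq |\hat L_1^{(d)}(V) - L_1(V)| + |\hat L_2^{(d)}(V) - L_2(V)|$ to obtain the stated $\epsilon_{\stat}$.

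\textbf{Anticipated obstacle.} There is no deep obstacle here—this is a routine uniform-convergence argument over a finite hypothesis class. The one point requiring care is the unbiasedness of $\hat L_2^{(d)}$, which hinges crucially on the determinism of the transition kernel; one must verify that the single sampled successor $s'_i$ genuinely realizes $\cT V(s_i,a_i;g_i)$, so that no Jensen-type gap is introduced by the square inside $g_*$. In the stochastic case this fails (hence the separate treatment in \Cref{app:sec_proof_for_V_learning_model_based}), but in the deterministic case it is immediate. A secondary bookkeeping point is ensuring the $\log(|\cV|/\delta)$ factor is tracked consistently through the union bound so that it matches the definition of $\epsilon_\stat$; this is purely cosmetic.
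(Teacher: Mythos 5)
Your proposal is correct and follows essentially the same route as the paper's proof: establish unbiasedness of both empirical averages (using determinism for $\hat L_2^{(d)}$), bound the per-sample ranges by $\alpha V_{\max}$ and $\alpha g_{*,\Delta} = O(V_{\max}^2)$ respectively, apply Hoeffding's inequality, union bound over $\cV$, and combine by the triangle inequality. No gaps.
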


\begin{proof}
    First, fix any $V \in \cV$. The expectation of $\hat L^{(d)}_1(V)$ is 
    \begin{align*}
        \E[\hat L^{(d)}_1(V)] =& \frac{1-\gamma}{N_0}\sum_{i=1}^{N_0} \E_{(s_{0,i},g_{0,i})\sim(\rho,p(g))}[\alpha \cdot V(s_{0,i}; g_{0,i})]  \\ =& (1-\gamma)\E_{(s,g)\sim(\rho,p(g))}[\alpha \cdot V(s; g)]
         \\ =& L_1(V).
    \end{align*}
    Also, 
    \begin{align*}
        \E[\hat L^{(d)}_2(V)] =&  \frac{1}{N}\sum_{i=1}^N \E_{(s_i,a_i,g_i)\sim \mu}[\alpha \cdot g_{*+}(r(s_i;g_i) + \gamma V(s'_i; g_i) - V(s_i; g_i))] \\ =&  \frac{1}{N}\sum_{i=1}^N \E_{(s_i,a_i,g_i)\sim \mu}[\alpha \cdot g_{*+}(r(s_i;g_i) + \gamma \cT V(s_i,a_i;g_i) - V(s_i; g_i))] \\
        =& L_2(V).
    \end{align*}
    Also note that $V(s_{0,i};g_{0,i}) \in [0,V_{\max}]$ and $g_{*+}(r(s_i;g_i) + \gamma \cT V(s_i,a_i;g_i) - V(s_i; g_i)) \in [0, g_{*,{\Delta}}]$. By Hoeffding's inequality and a union bound, we have with probability at least $1-\delta$,
    \begin{align*}
        &\left| \hat L^{(d)}_1(V) - L_1(V)\right|\leq O\left( \alpha V_{\max}\sqrt{\frac{\log(1/\delta)}{N_0}}\right), \\  &\left| \hat L^{(d)}_2(V) - L_2(V)\right| \leq O\left(\alpha \cdot g_{*,\Delta}\sqrt{\frac{\log(1/\delta)}{N}}\right).
    \end{align*}
    Applying a union bound over all $V \in \cV$ concludes the result.
\end{proof}

Next, we show that the value of $L_\alpha$ at the regularized optimal $V$-function $V_\alpha^*$ and the learned function $\hat V$ is close.

\begin{lemma}[Closeness of the population objective of $\hat V$ and $V^*_\alpha$]
\label{lem:close_emprical_distribution_non_expert_deterministic}
 Under \Cref{assump:single_policy_concentrability,assump:realizability_V_alpha_star}, with probability at least $1-\delta$, we have
\begin{align*}
L_\alpha(\hat V) - L_\alpha(V^*_\alpha) \leq O(\epsilon_\stat).
\end{align*}
\end{lemma}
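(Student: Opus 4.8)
The plan is a textbook "optimization error + uniform concentration" argument, using the two facts already in hand: (i) $\hat V$ minimizes the empirical objective $\hat L^{(d)}$ over $\cV$, and (ii) by realizability (\Cref{assump:realizability_V_alpha_star}), $V_\alpha^* \in \cV$, so it is a feasible competitor in that minimization. First I would invoke \Cref{lem:concentration_non_expert_deterministic} to get, on an event of probability at least $1-\delta$, the uniform bound $\sup_{V \in \cV} |\hat L^{(d)}(V) - L_\alpha(V)| \leq \epsilon_\stat$. Everything below is conditioned on this event.

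Next I would chain the inequalities. Apply the uniform bound at $V = \hat V$ (legitimate since $\hat V \in \cV$) to get $L_\alpha(\hat V) \leq \hat L^{(d)}(\hat V) + \epsilon_\stat$. Then use optimality of $\hat V$ together with feasibility of $V_\alpha^*$ to get $\hat L^{(d)}(\hat V) \leq \hat L^{(d)}(V_\alpha^*)$. Finally apply the uniform bound again, now at $V = V_\alpha^*$, to get $\hat L^{(d)}(V_\alpha^*) \leq L_\alpha(V_\alpha^*) + \epsilon_\stat$. Concatenating the three steps yields $L_\alpha(\hat V) - L_\alpha(V_\alpha^*) \leq 2\epsilon_\stat = O(\epsilon_\stat)$, which is the claim.

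There is essentially no hard step here: the entire content is the uniform concentration, which is precisely what \Cref{lem:concentration_non_expert_deterministic} supplies, and the rest is the standard "add and subtract the empirical objective" manipulation. The only point worth stating carefully is that the concentration event must hold simultaneously for all $V \in \cV$ (not just a fixed $V$), so that it can be applied to the data-dependent estimator $\hat V$; this is already guaranteed by the union bound in the cited lemma. I would also remark that this lemma is the bridge between the purely statistical concentration result and the subsequent step exploiting the semi-strong convexity of $L_\alpha$ in $U_{V+}$, which converts this population-objective gap into the $\|\hat U_+ - U^*_{\alpha+}\|_{2,\mu}$ bound of \Cref{lem:closeness_of_U}.
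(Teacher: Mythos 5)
Your proposal is correct and is essentially identical to the paper's own proof: the paper decomposes $L_\alpha(\hat V) - L_\alpha(V^*_\alpha)$ into the same three terms (two concentration terms bounded by $O(\epsilon_\stat)$ via \Cref{lem:concentration_non_expert_deterministic}, and the middle term nonpositive by optimality of $\hat V$ with $V_\alpha^*\in\cV$ as competitor). Your remarks about needing the concentration to hold uniformly over $\cV$ for the data-dependent $\hat V$, and about the lemma's role as the bridge to the semi-strong convexity step, match the paper's structure exactly.
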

\begin{proof}
    We condition on the high probability event in \Cref{lem:concentration_non_expert_deterministic}. Note that 
    \begin{align*}
        L_\alpha(\hat V) - L_\alpha(V^*_\alpha) = \underbrace{L_\alpha(\hat V) - \hat L^{(d)}(\hat V)}_{(1)} + \underbrace{\hat L^{(d)}(\hat V) - \hat L^{(d)}(V^*_\alpha)}_{(2)} +  \underbrace{\hat L^{(d)}(V^*_\alpha) - L_\alpha(V^*_\alpha)}_{(3)}.
    \end{align*}
    $(1), (3) \leq O(\epsilon_\stat)$ by \Cref{lem:concentration_non_expert_deterministic} and $(2) \leq 0$ by the definition of $\hat V$, which completes the proof.
    
\end{proof}

Now we define ``semi-strong'' convexity, which is used to prove \Cref{lem:closeness_of_U} and might be of independent interests.

\begin{definition}[Positive semi-strong convexity]
\label{def:semi_strong_convex}
    For an arbitrary set $\mathcal{X}$, assume $\mathcal{F} \subset \{f \ | \ f: \mathcal{X} \to \mathbb{R}\}$ is a convex set. Let $\rho \in \Delta(\mathcal{X})$ be a probability distribution over $\mathcal{X}$. The $\|\cdot \|_{2, \rho}$-norm of $f \in \mathcal{F}$ is defined as $\|f\|_{2,\rho} = \sqrt{\mathbb{E}_{x\sim\rho}[f^2(x)]}$. Define function $l: \mathbb{R} \to \mathbb{R}$ and
    define functional $\mathcal{L}: \mathcal{F} \to \mathbb{R}$ as 
    \begin{align*}
        \mathcal{L}(f) = \mathbb{E}_{x\sim\rho}[l(f(x))]. 
    \end{align*}
    We say $\mathcal{L}$ is \emph{$\sigma$-positive-semi-strongly convex} w.r.t. $f$ in $\|\cdot\|_{2,\rho}$-norm for some $\sigma > 0$ if 
    \begin{align*}
         \mathcal{L}(f) - \frac{\sigma}{2}\| f_+ \|_{2,\rho}^2
    \end{align*}
    is convex w.r.t. $f$ in $\|\cdot\|_{2,\rho}$-norm.
\end{definition}

Note that the common $\sigma$-strongly convex definition requires $\mathcal{L}(f) - \frac{\sigma}{2}\| f \|_{2,\rho}^2$ to be convex. Now we prove the following property for $\sigma$-positive-semi-strong convexity.

\begin{proposition}
\label{prop:semi_strongly_convex}
Under the same setting of \Cref{def:semi_strong_convex}, if $\mathcal{L}$ is $\sigma$-positive-semi-strongly convex w.r.t. $f$ in $\|\cdot\|_{2,\rho}$-norm, it holds that for any $f, g \in \mathcal{F}$, 
\begin{align*}
    \mathcal{L}(f) - \mathcal{L}(g) \geq \nabla \mathcal{L}(g)^{\mathsf{T}}(f - g) + \frac{\sigma}{2}\| f_+ - g_+ \|_{2,\rho}^2.
\end{align*}
\end{proposition}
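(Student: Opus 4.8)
The plan is to peel off the quadratic penalty, invoke ordinary first-order convexity, and then reduce the leftover to a pointwise inequality. Set $\mathcal{H}(f) \coloneqq \mathcal{L}(f) - \frac{\sigma}{2}\|f_+\|_{2,\rho}^2$; by the hypothesis of $\sigma$-positive-semi-strong convexity, $\mathcal{H}$ is convex on $\mathcal{F}$ in $\|\cdot\|_{2,\rho}$-norm, so for any $f,g\in\mathcal{F}$,
\[
\mathcal{H}(f)-\mathcal{H}(g)\ \ge\ \nabla\mathcal{H}(g)^{\mathsf{T}}(f-g).
\]
First I would compute $\nabla\mathcal{H}(g)$. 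The scalar map $t\mapsto(\max\{t,0\})^2$ is $C^1$ on $\mathbb{R}$ with derivative $2\max\{t,0\}$, so $f\mapsto\frac{\sigma}{2}\|f_+\|_{2,\rho}^2=\frac{\sigma}{2}\,\mathbb{E}_{x\sim\rho}[(\max\{f(x),0\})^2]$ is differentiable, with derivative at $g$ acting on a direction $h$ as $\sigma\,\mathbb{E}_{x\sim\rho}[g_+(x)h(x)]=\sigma\,g_+^{\mathsf{T}}h$; i.e.\ its gradient is $\sigma g_+$. Hence $\nabla\mathcal{H}(g)=\nabla\mathcal{L}(g)-\sigma g_+$, and substituting into the inequality above and rearranging gives
\[
\mathcal{L}(f)-\mathcal{L}(g)\ \ge\ \nabla\mathcal{L}(g)^{\mathsf{T}}(f-g)\ -\ \sigma\,g_+^{\mathsf{T}}(f-g)\ +\ \tfrac{\sigma}{2}\|f_+\|_{2,\rho}^2\ -\ \tfrac{\sigma}{2}\|g_+\|_{2,\rho}^2.
\]

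It then remains to show the three penalty terms on the right dominate $\frac{\sigma}{2}\|f_+-g_+\|_{2,\rho}^2$. Expanding $\|f_+-g_+\|_{2,\rho}^2=\|f_+\|_{2,\rho}^2-2g_+^{\mathsf{T}}f_++\|g_+\|_{2,\rho}^2$ and cancelling, the desired bound is equivalent to
\[
g_+^{\mathsf{T}}f_+\ -\ g_+^{\mathsf{T}}f\ +\ g_+^{\mathsf{T}}g\ -\ \|g_+\|_{2,\rho}^2\ \ge\ 0.
\]
I would close this with two pointwise facts under $\mathbb{E}_{x\sim\rho}[\cdot]$: first, $g_+(x)\,g(x)=(\max\{g(x),0\})^2=g_+(x)^2$, so $g_+^{\mathsf{T}}g=\|g_+\|_{2,\rho}^2$ and those two terms cancel; second, $g_+(x)\ge 0$ while $f_+(x)-f(x)=\max\{-f(x),0\}\ge 0$, so $g_+(x)\big(f_+(x)-f(x)\big)\ge 0$ pointwise, whence $g_+^{\mathsf{T}}(f_+-f)\ge 0$. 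Combining these yields the claim.

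The main obstacle — really the only delicate point — is the gradient computation for the penalty: although $t\mapsto\max\{t,0\}$ itself is not differentiable at the origin, squaring it restores $C^1$-smoothness, so $\nabla\big(\tfrac{\sigma}{2}\|f_+\|_{2,\rho}^2\big)=\sigma f_+$ with no subgradient ambiguity; everything after that is bookkeeping with the $\|\cdot\|_{2,\rho}$ inner product together with the two pointwise identities above. Implicitly, as the statement already does by writing $\nabla\mathcal{L}$, I am assuming $\mathcal{L}$ (equivalently $\mathcal{H}$, since they differ by the smooth penalty) is differentiable in the relevant sense; the argument only uses the first-order convexity inequality for $\mathcal{H}$ and the exact derivative of the quadratic term.
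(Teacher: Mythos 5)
Your proof is correct and follows essentially the same route as the paper: subtract the quadratic penalty to get a convex functional, apply the first-order convexity inequality, and reduce the remainder to the pointwise facts $g_+(x)\bigl(f_+(x)-f(x)\bigr)\ge 0$ and $g_+(x)\bigl(g(x)-g_+(x)\bigr)=0$. The only difference is cosmetic — you make the gradient computation $\nabla\bigl(\tfrac{\sigma}{2}\|f_+\|_{2,\rho}^2\bigr)=\sigma f_+$ explicit via the $C^1$-smoothness of $t\mapsto\max\{t,0\}^2$, which the paper leaves implicit.
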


\begin{proof}
    By \Cref{def:semi_strong_convex}, $\Tilde{\mathcal{L}}(f) \triangleq \mathcal{L}(f) - \frac{\sigma}{2}\| f_+ \|_{2,\rho}^2$ is convex, which, by the definition of convex, implies that 
    \begin{align*}
        \Tilde{\mathcal{L}}(f) - \Tilde{\mathcal{L}}(g) \geq \nabla \Tilde{\mathcal{L}}(g)^{\mathsf{T}}(f - g). 
    \end{align*}
    Plugging in the definition of $\Tilde{\mathcal{L}}(\cdot)$, we have
    \begin{align*}
         \mathcal{L}(f) - \frac{\sigma}{2}\| f_+ \|_{2,\rho}^2 -\mathcal{L}(g) + \frac{\sigma}{2}\| g_+ \|_{2,\rho}^2 \geq \nabla \mathcal{L}(g)^{\mathsf{T}}(f - g) - \sigma \mathbb{E}_{x\sim \rho}[g(x)_+(f(x) - g(x))].
    \end{align*}
    Rearranging, we can obtain that 
    \begin{align*}
        \mathcal{L}(f)  -\mathcal{L}(g) \geq&   \nabla \mathcal{L}(g)^{\mathsf{T}}(f - g) - \sigma \mathbb{E}_{x\sim \rho}[g(x)_+(f(x) - g(x))] + \frac{\sigma}{2}\| f_+ \|_{2,\rho}^2 - \frac{\sigma}{2}\| g_+ \|_{2,\rho}^2 \\ 
        =& \nabla \mathcal{L}(g)^{\mathsf{T}}(f - g) + \frac{\sigma}{2} \mathbb{E}_{x\sim \rho}[f_+^2(x) - g_+^2(x) -2g(x)_+(f(x) - g(x))] \\ 
        =& \nabla \mathcal{L}(g)^{\mathsf{T}}(f - g) + \frac{\sigma}{2} \mathbb{E}_{x\sim \rho}[f_+^2(x) - g_+^2(x) -2g(x)_+(f(x) - g(x))] \\
        =& \nabla \mathcal{L}(g)^{\mathsf{T}}(f - g) +  \frac{\sigma}{2}\| f_+ - g_+ \|_{2,\rho}^2 \\ & + \sigma \mathbb{E}_{x\sim \rho}[ g_+(x)(f_+(x)-f(x))] + \sigma \mathbb{E}_{x\sim \rho}[ g_+(x)(g(x)-g_+(x))].
    \end{align*}
    It suffices to show that 
    \begin{align*}
        \mathbb{E}_{x\sim \rho}[ g_+(x)(f_+(x)-f(x))] \geq 0, \ \ \   \mathbb{E}_{x\sim \rho}[ g_+(x)(g(x)-g_+(x))] \geq 0.
    \end{align*}
    Note that for any $x \in \mathcal{X}$, $g_+(x) \geq 0$ and $f_+(x) \geq f(x)$, which implies $g_+(x)(f_+(x) - f(x)) \geq 0$. Also, note that when $g(x) \geq 0$, we have $g(x) - g_+(x) = 0$, and when $g(x) < 0$, we have $g_+(x) = 0$, which means $g_+(x)(g(x) - g_+(x)) \equiv 0$. Therefore, we have
    \begin{align*}
        \mathbb{E}_{x\sim \rho}[ g_+(x)(f_+(x)-f(x))] \geq 0, \ \ \   \mathbb{E}_{x\sim \rho}[ g_+(x)(g(x)-g_+(x))] = 0,
    \end{align*}
    which concludes.
\end{proof}

Finally, by semi-strong convexity of $L_\alpha$ w.r.t. $U_V$ in $\| \cdot \|_{2,\mu}$-norm, we can show that $\hat U$ and $U_\alpha^*$ are also close.

\begin{proof}[Proof of \Cref{lem:closeness_of_U}]
We condition on the high probability event in \Cref{lem:concentration_non_expert_deterministic}. Recall that $U_V(s,a;g) = r(s;g) + \gamma \cT V(s,a;g) - V(s;g) + \alpha$ and in deterministic dynamics, $\hat U = U_{\hat V}$. Let $\cU = \{ U_V: V \in \reals_+^{|\cS|\times|\cG|} \} \subseteq \reals^{|\cS|\times|\cA|\times|\cG|}$ which is a convex set by definition. Also, since $U_V$ is linear in $V$, we can also obtain that $V$ can be linearly represented by $U$. Therefore, we can define that  $\Tilde{L}_\alpha(U_V) = L_\alpha(V)$ and $\Tilde{L}_\alpha(U_V) - \frac{1}{2}\E_{(s,a,g)\sim \mu}[ U_V^2(s,a,g)_+]$ is linear and thus convex in $U_V$, which implies that $\Tilde{L}_\alpha(U_V)$ is $1$-positive-semi-strongly convex w.r.t. $U_V$ and $\| \cdot \|_{2,\mu}$. Then, by \Cref{prop:semi_strongly_convex}, we have 
\begin{align*}
    &\Tilde{L}_\alpha(\hat U) -  \Tilde{L}_\alpha(U_\alpha^*)\geq   \nabla \Tilde{L}_\alpha(U^*_\alpha)^{\mathsf{T}} (\hat U - U^*_\alpha) + \frac{1}{2}\| \hat U_+ - U^*_{\alpha+} \|_{2,\mu}^2.
\end{align*}
Since $U_\alpha^* = \arg\min_{U \in \cU} \Tilde{L}_\alpha(U)$, by the first order optimality condition, it holds that $\nabla\Tilde{L}_\alpha(U^*_\alpha)^{\mathsf{T}} (\hat U - U^*_\alpha) \geq 0$.
Combining \Cref{lem:close_emprical_distribution_non_expert_deterministic}, we have 

\begin{align*}
     \| \hat U_+ - U^*_{\alpha+} \|_{2,\mu} \leq \sqrt{2(\Tilde{L}_\alpha(\hat U) -  \Tilde{L}_\alpha(U_\alpha^*))} \leq O\left(\sqrt{\epsilon_\stat} \right).
\end{align*}
\end{proof}

\subsection{Proof for Stochastic Dynamics}
\label{app:sec_proof_for_V_learning_model_based}

In stochastic dynamic settings, we first learn the ground-truth transition model and then calculate $\hat U$. The following lemma provides a theoretical guarantee for maximum likelihood estimation (MLE) that $\hat P$ and $P^\star$ are close.

\begin{lemma}[Convergence rate of MLE, \citet{van2000empirical}]\label{lemma:model_learning_MLE}
For any fixed $\delta > 0$, with probability at least $1-\delta$, we have 
\begin{align*}
    \E_{(s,a,g) \sim \mu} \left[\| \hat P(\cdot|s,a) - P^\star(\cdot|s,a) \|_{\TV}^2 \right]\lesssim 
 \frac{\log(|\cP|/\delta) }{N},
\end{align*}
where $\hat P$ is defined as in \eqref{eq:MLE_model_learning_empirical_obj}. This immediately implies that 
\begin{align*}
      \E_{(s,a,g) \sim \mu} \left[\| \hat P(\cdot|s,a) - P^\star(\cdot|s,a) \|_{\TV} \right]\lesssim 
 \sqrt{\frac{\log(|\cP|/\delta) }{N}}.
\end{align*}
\end{lemma}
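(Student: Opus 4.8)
The plan is to invoke the classical maximum-likelihood deviation bound for a finite model class: the MLE converges to the truth in averaged squared Hellinger distance at rate $\log(|\cP|/\delta)/N$, and then to pass from Hellinger to total variation pointwise. Concretely, for $P\in\cP$ I would set $Z_i(P) = \tfrac12\log\frac{P(s_i'|s_i,a_i)}{P^\star(s_i'|s_i,a_i)}$. Since $(s_i,a_i,g_i)\iid\mu$, $s_i'\sim P^\star(\cdot|s_i,a_i)$, and none of the quantities involved depend on $g_i$, the variables $Z_i(P)$ are i.i.d. The key identity is that, conditioning on $(s_i,a_i)$, $\E\!\left[e^{Z_i(P)}\mid s_i,a_i\right] = \int\!\sqrt{P(s'|s_i,a_i)\,P^\star(s'|s_i,a_i)}\,ds' = 1-\tfrac12 H^2\!\big(P(\cdot|s_i,a_i)\,\|\,P^\star(\cdot|s_i,a_i)\big)$, where $H^2(p\|q):=\int(\sqrt p-\sqrt q)^2$ (the Hellinger affinity). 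Taking expectation over $(s_i,a_i)\sim\mu$ and using $1-x\le e^{-x}$ gives $\E[e^{Z_i(P)}]\le\exp\!\big(-\tfrac12\,\E_{(s,a)\sim\mu}[H^2(P\|P^\star)]\big)$.

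Next I would fix $P\in\cP$ and apply Markov's inequality to $\exp(\sum_i Z_i(P))$: by independence, $\Pr\!\big[\sum_i Z_i(P)\ge s\big]\le e^{-s}\prod_i\E[e^{Z_i(P)}]\le\exp\!\big(-s-\tfrac N2\,\E_\mu[H^2(P\|P^\star)]\big)$. Choosing $s=\log(|\cP|/\delta)-\tfrac N2\,\E_\mu[H^2(P\|P^\star)]$ makes the right-hand side equal to $\delta/|\cP|$, so a union bound over the finitely many $P\in\cP$ yields: with probability at least $1-\delta$, \emph{for every} $P\in\cP$, $\sum_i Z_i(P)< \log(|\cP|/\delta)-\tfrac N2\,\E_\mu[H^2(P\|P^\star)]$. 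I then specialize to $P=\hat P$: by definition of the MLE in \eqref{eq:MLE_model_learning_empirical_obj} and since $P^\star\in\cP$ (\Cref{assump:realizability_model}), $\sum_i\log\hat P(s_i'|s_i,a_i)\ge\sum_i\log P^\star(s_i'|s_i,a_i)$, i.e.\ $\sum_i Z_i(\hat P)\ge 0$. Combining the two inequalities gives $\E_{(s,a)\sim\mu}\big[H^2(\hat P(\cdot|s,a)\|P^\star(\cdot|s,a))\big]\le\tfrac{2\log(|\cP|/\delta)}{N}$.

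Finally I would convert to total variation using the pointwise comparison $\TV(p,q)\le H(p\|q)$: indeed $\TV(p,q)=\tfrac12\int|\sqrt p-\sqrt q|\,|\sqrt p+\sqrt q|\le\tfrac12\big(\int(\sqrt p-\sqrt q)^2\big)^{1/2}\big(\int(\sqrt p+\sqrt q)^2\big)^{1/2}\le H(p\|q)$ by Cauchy--Schwarz, so $\TV^2\le H^2$ for each $(s,a)$. Hence $\E_{(s,a,g)\sim\mu}\big[\TV(\hat P(\cdot|s,a),P^\star(\cdot|s,a))^2\big]=\E_{(s,a)\sim\mu}[\TV^2]\le\E_\mu[H^2]\lesssim\tfrac{\log(|\cP|/\delta)}{N}$, which is the first claim; the second claim follows by Jensen's inequality, $\E_\mu[\TV]\le\sqrt{\E_\mu[\TV^2]}\lesssim\sqrt{\log(|\cP|/\delta)/N}$.

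The one genuinely delicate point will be the exponential-moment step — showing the likelihood-ratio moment generating function is dominated by the Hellinger affinity and that the product over the i.i.d.\ sample telescopes correctly — together with the harmless measure-zero caveats ($P^\star(s_i'|s_i,a_i)>0$ almost surely because $s_i'\sim P^\star(\cdot|s_i,a_i)$, and $e^{Z_i(P)}$ stays integrable even where $P$ vanishes). Everything else is a routine union bound plus the Hellinger--TV inequality; for a general (non-finite) class $\cP$ one would replace $\log|\cP|$ by a bracketing-entropy integral and cite \citet{van2000empirical} directly, but in the finite case used here the self-contained argument above suffices.
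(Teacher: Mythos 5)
Your proof is correct, but note that the paper does not actually prove this lemma: it is stated as a black-box import from \citet{van2000empirical} (there is no \texttt{proof} environment following it in the appendix), so there is no in-paper argument to compare against. What you have written is precisely the standard argument underlying that citation, specialized to a finite class $\cP$: the half-log-likelihood-ratio has exponential moment equal to the Hellinger affinity, a Chernoff/Markov bound plus a union bound over $\cP$ controls $\E_\mu[H^2(\hat P\|P^\star)]$ once realizability ($P^\star\in\cP$, \Cref{assump:realizability_model}) forces $\sum_i Z_i(\hat P)\ge 0$, and $\TV\le H$ pointwise transfers the bound to total variation; Jensen gives the second display. All the steps check out, including the ones you flag as delicate: Markov's inequality applied to $e^{\sum_i Z_i}$ is valid for any real threshold $s$ (so it does not matter if your chosen $s$ is negative), the event $P(s_i'|s_i,a_i)=0$ only makes $e^{Z_i}=0$ and thus strengthens the moment bound, and $1-x\le e^{-x}$ applies since $\tfrac12\E_\mu[H^2]\le 1$. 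Your self-contained version is arguably preferable to the bare citation, since van de Geer's general theorem is stated for i.i.d.\ density estimation with entropy conditions, whereas here one needs the conditional-density variant with the expectation over $(s,a)\sim\mu$ outside the squared Hellinger distance --- exactly the form your derivation produces directly.
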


Equipped with \Cref{lemma:model_learning_MLE}, we can guarantee that for any $V$, the population objective $L_\alpha(V)$ is close to the empirical objective $\hat L^{(s)}(V)$, which is presented in \Cref{lem:concentration_non_expert_model_based}.

\begin{lemma}[Concentration of $\hat L^{(s)}$]
\label{lem:concentration_non_expert_model_based}
    Under \Cref{assump:single_policy_concentrability,assump:realizability_V_alpha_star,assump:realizability_model}, with probability at least $1-\delta$, $\forall V \in \cV$, it holds that 
    \begin{align*}
       &\left| \hat L^{(s)}(V) - L_\alpha(V)\right| \\ \leq& O\left(\alpha V_{\max}\sqrt{\frac{\log(|\cV|/\delta)}{N_0}} + V_{\max}^2\sqrt{\frac{\log(|\cV|/\delta)}{N}} + \gamma V_{\max}^2\sqrt{\frac{\log(|\cP|/\delta) }{N}}\right) \triangleq \epsilon_{\stat}^{\text{stochastic}}.
    \end{align*}
\end{lemma}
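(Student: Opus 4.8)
The plan is to reproduce the deterministic-dynamics argument behind \Cref{lem:concentration_non_expert_deterministic} and insert one extra layer that absorbs the error of replacing the true Bellman operator $\cT$ by its model-based surrogate $\hat\cT$. Write $\hat L^{(s)}(V)=\hat L_1(V)+\hat L_2^{(s)}(V)$ with $\hat L_1(V)=\frac{1-\gamma}{N_0}\sum_{i=1}^{N_0}\alpha V(s_{0,i};g_{0,i})$ (which is \emph{identical} to $\hat L_1^{(d)}(V)$ and never touches the model) and $\hat L_2^{(s)}(V)=\frac{\alpha}{N}\sum_{i=1}^N g_{*+}(r_i+\gamma\hat\cT V(s_i,a_i;g_i)-V(s_i;g_i))$, and decompose $\hat L^{(s)}(V)-L_\alpha(V)=(\hat L_1(V)-L_1(V))+(\hat L_2^{(s)}(V)-L_2(V))$. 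The first bracket is controlled exactly as in the proof of \Cref{lem:concentration_non_expert_deterministic}: Hoeffding over the i.i.d.\ bounded terms $\alpha V(s_{0,i};g_{0,i})\in[0,\alpha V_{\max}]$ together with a union bound over $\cV$ gives $|\hat L_1(V)-L_1(V)|\le O(\alpha V_{\max}\sqrt{\log(|\cV|/\delta)/N_0})$ uniformly over $V\in\cV$ on an event of probability $1-\delta/3$.

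For the second bracket I introduce the analysis-only \emph{oracle empirical} objective $B(V)\coloneqq\frac{\alpha}{N}\sum_{i=1}^N g_{*+}(r_i+\gamma\cT V(s_i,a_i;g_i)-V(s_i;g_i))$ that uses the true operator $\cT$ in place of $\hat\cT$, and bound $|\hat L_2^{(s)}(V)-L_2(V)|\le|B(V)-L_2(V)|+|\hat L_2^{(s)}(V)-B(V)|$. Since $\cT V(s,a;g)$ is a deterministic function of $(s,a,g)$ and $r_i=r(s_i;g_i)$, the summands of $B(V)$ are i.i.d.\ over $i$, lie in $[0,\alpha g_{*,\Delta}]$, and have mean $L_2(V)$; Hoeffding plus a union bound over $\cV$ yields $|B(V)-L_2(V)|\le O(\alpha g_{*,\Delta}\sqrt{\log(|\cV|/\delta)/N})=O(V_{\max}^2\sqrt{\log(|\cV|/\delta)/N})$ on an event of probability $1-\delta/3$ (under deterministic dynamics $B(V)$ is just $\hat L_2^{(d)}(V)$, so this reproduces the deterministic bound). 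The term $|\hat L_2^{(s)}(V)-B(V)|$ I bound \emph{pointwise per sample}, with no further concentration: on the range $[-V_{\max},V_{\max}+1]$ of the argument of $g_{*+}$ one has $g_{*+}(x)=\frac{((x+\alpha)_+)^2}{2\alpha}$, which is Lipschitz there with constant $O(V_{\max}/\alpha)$, and $|\hat\cT V(s_i,a_i;g_i)-\cT V(s_i,a_i;g_i)|\le V_{\max}\|\hat P(\cdot|s_i,a_i)-P^\star(\cdot|s_i,a_i)\|_{\TV}$ because $V\in[0,V_{\max}]$; hence $|\hat L_2^{(s)}(V)-B(V)|\le O(\gamma V_{\max}^2)\cdot\frac1N\sum_{i=1}^N\|\hat P(\cdot|s_i,a_i)-P^\star(\cdot|s_i,a_i)\|_{\TV}$, where the prefactor $\alpha$ of $\hat L_2^{(s)},B$ cancels the $1/\alpha$ of the Lipschitz constant.

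It remains to bound the in-sample average $\frac1N\sum_i\|\hat P(\cdot|s_i,a_i)-P^\star(\cdot|s_i,a_i)\|_{\TV}$. \Cref{lemma:model_learning_MLE} controls the out-of-sample quantity $\E_{(s,a,g)\sim\mu}[\|\hat P-P^\star\|_{\TV}]\lesssim\sqrt{\log(|\cP|/\delta)/N}$; to transfer this to the in-sample average (and to handle the fact that $\hat P$ itself depends on the data) I add a uniform-over-$\cP$ Hoeffding bound for the $[0,1]$-valued statistic $\|P(\cdot|s,a)-P^\star(\cdot|s,a)\|_{\TV}$, which only costs an extra $\sqrt{\log(|\cP|/\delta)/N}$ of the same order, giving $\frac1N\sum_i\|\hat P(\cdot|s_i,a_i)-P^\star(\cdot|s_i,a_i)\|_{\TV}\lesssim\sqrt{\log(|\cP|/\delta)/N}$ on an event of probability $1-\delta/3$ (alternatively one invokes the in-sample form of the MLE bound of \citet{van2000empirical} directly). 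Hence $|\hat L_2^{(s)}(V)-B(V)|\lesssim\gamma V_{\max}^2\sqrt{\log(|\cP|/\delta)/N}$, and combining the three estimates by the triangle inequality and a union bound over the three events yields the stated $\epsilon_\stat^{\text{stochastic}}$. The main obstacle is exactly this last step — disentangling the data-dependent estimate $\hat P$ from the empirical sum defining $\hat L_2^{(s)}$; routing through the true-operator surrogate $B(V)$ is what confines the model error to a single pointwise estimate, so that the $\log|\cP|$ enters only through the MLE rate and never inflates the $\log(|\cV|/\delta)$ concentration terms.
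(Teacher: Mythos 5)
Your proof is correct and reaches the same bound, but it pivots through a different intermediate quantity than the paper does. The paper decomposes $\hat L_2^{(s)}(V)-L_2(V)$ via $\tilde L_2^{(s)}(V)=\E_{(s,a,g)\sim\mu}[\alpha\, g_{*+}(r+\gamma\hat\cT V-V)]$, i.e.\ the \emph{population} objective with the \emph{learned} operator: it first concentrates $\hat L_2^{(s)}$ around $\tilde L_2^{(s)}$ by Hoeffding over $\cV$, then bounds $|\tilde L_2^{(s)}(V)-L_2(V)|$ by the same Lipschitz argument you use, landing directly on $\E_\mu[\|\hat P-P^\star\|_{\TV}]$, which is exactly what \Cref{lemma:model_learning_MLE} controls --- no in-sample transfer is needed. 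You instead pivot through $B(V)$, the \emph{empirical} objective with the \emph{true} operator, which confines the model error to an in-sample average $\frac1N\sum_i\|\hat P(\cdot|s_i,a_i)-P^\star(\cdot|s_i,a_i)\|_{\TV}$ and forces you to add a uniform-over-$\cP$ Hoeffding step to relate it back to the MLE guarantee. What your route buys is a cleaner $\cV$-concentration step: the summands of $B(V)$ do not involve the data-dependent $\hat P$, so Hoeffding plus a union bound over $\cV$ applies without comment, whereas the paper's application of Hoeffding to $\hat L_2^{(s)}-\tilde L_2^{(s)}$ quietly treats $\hat P$ as fixed even though it is fit on the same samples (a point one would ordinarily resolve by data splitting or a uniform bound over $\cP\times\cV$). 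The cost is your extra empirical-process argument over $\cP$, which, as you note, only contributes another $\sqrt{\log(|\cP|/\delta)/N}$ and does not change the rate. Both proofs are otherwise identical in the $\hat L_1$ term and in the Lipschitz bound $|\alpha g_{*+}(x)-\alpha g_{*+}(y)|\lesssim V_{\max}|x-y|$ on the relevant range.
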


\begin{proof}
    For convenience, define 
    \begin{align*}
            \hat L^{(s)}_1(V) =& \frac{1-\gamma}{N_0}\sum_{i=1}^{N_0} \alpha \cdot V(s_{0,i}; g_{0,i}), \\ \hat L^{(s)}_2(V) =& \frac{1}{N}\sum_{i=1}^N \alpha \cdot g_{*+}(r(s_i;g_i) + \gamma \hat \cT V(s_i, a_i; g_i) - V(s_i; g_i)).
    \end{align*}
    Note that $\hat L_1^{(s)}(V) = \hat L_1^{(d)}(V)$ and thus by \Cref{lem:concentration_non_expert_deterministic} we have that with probability at least $1-\delta$, for all $V \in \cV$,
    \begin{align}
    \label{eq:V_learning_model_base_concen_1}
         \left| \hat L^{(s)}_1(V) - L_1(V)\right|\leq O\left( \alpha V_{\max}\sqrt{\frac{\log(|\cV|/\delta)}{N_0}}\right).
    \end{align}
    Now define 
    \begin{align*}
        \Tilde{L}^{(s)}_2(V) =& \E_{(s,a,g)\sim \mu}[ \alpha \cdot g_{*+}(r(s;g) + \gamma \hat \cT V(s, a; g) - V(s; g))]
    \end{align*}
    Since $\E_{(s,a,g)\sim \mu}[\hat L^{(s)}_2(V)] = \Tilde{L}^{(s)}_2(V)$, by \Cref{lem:concentration_non_expert_deterministic}, we have that with probability $1-\delta$, for any $V \in \cV$, it holds that 
    \begin{align}
    \label{eq:V_learning_model_base_concen_2}
        | \hat L^{(s)}_2(V) - \Tilde{L}^{(s)}_2(V)| \leq O\left(V_{\max}^2\sqrt{\frac{\log(|\cV|/\delta)}{N}}\right).
    \end{align}

    Also, note that 
    \begin{align*}
        &| L_2(V) - \Tilde{L}^{(s)}_2(V)| \\ =&  \left| \alpha \E_{(s,a,g)\sim \mu}[g_{*+}(r(s;g)+\gamma\cT V(s,a;g)-V(s;g)) - g_{*+}(r(s;g)+\gamma \hat \cT V(s,a;g)-V(s;g))] \right| \\ 
        =& \frac{1}{2} \left| \E_{(s,a,g)\sim \mu}\left[(r(s;g)+\gamma\cT V(s,a;g)-V(s;g)+\alpha)^2_+ - (r(s;g)+\gamma \hat \cT V(s,a;g)-V(s;g)+\alpha)^2_+\right] \right| \\ 
        \lesssim& V_{\max} \E_{(s,a,g)\sim \mu}[\gamma \cdot |(\cT - \hat \cT) V(s,a;g)|] \\ 
        =&  V_{\max} \E_{(s,a,g)\sim \mu}[\gamma \cdot |\E_{s'\sim P^\star(\cdot|s,a)}[V(s';g)] - \E_{s'\sim \hat P(\cdot|s,a)}[V(s';g)]|] \\ 
        \lesssim& \gamma V_{\max}^2 \E_{(s,a,g)\sim \mu} [\| P^\star(\cdot|s,a) - \hat P(\cdot|s,a) \|_\TV].
    \end{align*}
    By \Cref{lemma:model_learning_MLE}, with probability at least $1-\delta$, for all $V \in \cV$, 
    \begin{align}
    \label{eq:V_learning_model_base_concen_3}
        | L_2(V) - \Tilde{L}^{(s)}_2(V)| \lesssim \gamma V_{\max}^2 \E_{(s,a,g)\sim \mu} [\| P^\star(\cdot|s,a) - \hat P(\cdot|s,a) \|_\TV] \lesssim \gamma V_{\max}^2\sqrt{\frac{\log(|\cP|/\delta) }{N}}. 
    \end{align}
   By rescaling $\delta$ and applying a union bound, we can obtain that with probability at least $1-\delta$, \eqref{eq:V_learning_model_base_concen_1}, \eqref{eq:V_learning_model_base_concen_2}, \eqref{eq:V_learning_model_base_concen_3} hold simultaneously. The conclusion holds by applying the triangle inequality.
\end{proof}

Finally, we show that $\hat U_+$ and $U_{\alpha+}^*$ are close.

\begin{lemma}[Closeness of $\hat U_+$ and $U^*_{\alpha+}$ in stochastic dynamics]
\label{lem:closeness_of_U_model_based}
 Under \Cref{assump:single_policy_concentrability,assump:realizability_V_alpha_star,assump:realizability_model}, with probability at least $1-\delta$,
\begin{align*}
       \| \hat U_+ - U^*_{\alpha+} \|_{2,\mu} \leq O\left(\sqrt{ \epsilon_{\stat}^{\text{stochastic}}} \right).
\end{align*}
where $\hat U$ is the output of \Cref{alg:V_learning_model_based}, $\alpha = \Omega\left(\frac{\log(|\cV||\cP|/\delta)}{\sqrt{N}}\right)$ and 
\begin{align*}
    \epsilon_{\stat}^{\text{stochastic}} \asymp V_{\max}^2\sqrt{\frac{\log(|\cV||\cP|/\delta)}{N}}. 
\end{align*}
\end{lemma}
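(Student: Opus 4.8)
The plan is to follow the deterministic argument behind \Cref{lem:closeness_of_U} almost verbatim, adding one extra step to handle the fact that the learned $\hat U$ is built from the \emph{estimated} operator $\hat\cT$ rather than the true $\cT$. First I would condition simultaneously on the high-probability event of \Cref{lem:concentration_non_expert_model_based} (uniform concentration, over $V\in\cV$, of $\hat L^{(s)}$ around $L_\alpha$ within $\epsilon_{\stat}^{\text{stochastic}}$) and on the $L^2$ MLE guarantee of \Cref{lemma:model_learning_MLE}, i.e.\ $\E_{(s,a,g)\sim\mu}[\|\hat P(\cdot|s,a)-P^\star(\cdot|s,a)\|_{\TV}^2]\lesssim \log(|\cP|/\delta)/N$; a union bound costs only a rescaling of $\delta$.

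Next I would reproduce the two-step chain from the deterministic proof. Step one: the population suboptimality of $\hat V$ is small. Decomposing $L_\alpha(\hat V)-L_\alpha(V_\alpha^*)$ as $[L_\alpha(\hat V)-\hat L^{(s)}(\hat V)] + [\hat L^{(s)}(\hat V)-\hat L^{(s)}(V_\alpha^*)] + [\hat L^{(s)}(V_\alpha^*)-L_\alpha(V_\alpha^*)]$, the outer two terms are $O(\epsilon_{\stat}^{\text{stochastic}})$ by \Cref{lem:concentration_non_expert_model_based}, and the middle is $\le 0$ because $\hat V$ minimizes $\hat L^{(s)}$ over $\cV$ and $V_\alpha^*\in\cV$ (\Cref{assump:realizability_V_alpha_star}); note that $\epsilon_{\stat}^{\text{stochastic}}$ already absorbs the bias from substituting $\hat\cT$ for $\cT$. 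Step two: with $U_V = A_V+\alpha$ and $A_V$ defined by the \emph{true} $\cT$, the functional $\tilde L_\alpha(U_V):=L_\alpha(V)$ is $1$-positive-semi-strongly convex in $U_V$ w.r.t.\ $\|\cdot\|_{2,\mu}$ exactly as in the proof of \Cref{lem:closeness_of_U}, $U^*_\alpha=U_{V_\alpha^*}$ is its minimizer over $\cU=\{U_V:V\ge 0\}$, and $U_{\hat V}\in\cU$; hence \Cref{prop:semi_strongly_convex} plus first-order optimality of $U^*_\alpha$ gives $\|U_{\hat V,+}-U^*_{\alpha+}\|_{2,\mu}\le\sqrt{2\,(L_\alpha(\hat V)-L_\alpha(V_\alpha^*))}\le O(\sqrt{\epsilon_{\stat}^{\text{stochastic}}})$.

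The one genuinely new step, and the main obstacle, is that the algorithm outputs $\hat U$, which is formed with $\hat\cT$ and is therefore \emph{not} an element of $\cU$, so the bound just obtained controls $U_{\hat V,+}$ rather than $\hat U_+$. I would close this gap pointwise: $\hat U(s,a;g)-U_{\hat V}(s,a;g)=\gamma\big(\E_{s'\sim\hat P(\cdot|s,a)}[\hat V(s';g)]-\E_{s'\sim P^\star(\cdot|s,a)}[\hat V(s';g)]\big)$, and since $\hat V\in[0,V_{\max}]$ this is at most $\gamma V_{\max}\|\hat P(\cdot|s,a)-P^\star(\cdot|s,a)\|_{\TV}$ in absolute value. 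Using $|x_+-y_+|\le|x-y|$ together with the $L^2$ MLE bound then yields $\|\hat U_+-U_{\hat V,+}\|_{2,\mu}\lesssim \gamma V_{\max}\sqrt{\log(|\cP|/\delta)/N}$, an $N^{-1/2}$-rate term that is lower order than $\sqrt{\epsilon_{\stat}^{\text{stochastic}}}\asymp V_{\max}\big(\log(|\cV||\cP|/\delta)/N\big)^{1/4}$.

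Finally, a triangle inequality $\|\hat U_+-U^*_{\alpha+}\|_{2,\mu}\le\|\hat U_+-U_{\hat V,+}\|_{2,\mu}+\|U_{\hat V,+}-U^*_{\alpha+}\|_{2,\mu}$, together with $N=N_0$ and the stated lower bound $\alpha=\Omega(\log(|\cV||\cP|/\delta)/\sqrt{N})$ (used only to present $\epsilon_{\stat}^{\text{stochastic}}$ in the simplified form $V_{\max}^2\sqrt{\log(|\cV||\cP|/\delta)/N}$), delivers the claim. I expect the only real subtlety beyond the deterministic proof to be the bookkeeping of the two error sources — statistical error in $\hat V$, routed through semi-strong convexity, versus model-estimation error in $\hat\cT$ — and verifying that the latter never dominates; everything else is a transcription of the deterministic argument with $\hat L^{(d)}$ replaced by $\hat L^{(s)}$ and $\epsilon_{\stat}$ by $\epsilon_{\stat}^{\text{stochastic}}$.
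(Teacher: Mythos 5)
Your proposal is correct and follows essentially the same route as the paper's proof: the paper likewise invokes the deterministic semi-strong-convexity argument to bound $\|U_{\hat V,+}-U^*_{\alpha+}\|_{2,\mu}$ by $O(\sqrt{\epsilon_{\stat}^{\text{stochastic}}})$, then bounds $\|\hat U_+ - U_{\hat V,+}\|_{2,\mu}\lesssim \gamma V_{\max}\sqrt{\log(|\cP|/\delta)/N}$ via the contraction $|x_+-y_+|\le|x-y|$ and the MLE guarantee, and concludes by the triangle inequality. Your identification of the $\hat\cT$-versus-$\cT$ discrepancy as the one genuinely new step, and your observation that it contributes only a lower-order $N^{-1/2}$ term, match the paper exactly.
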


\begin{proof}
    For convenience, let $\Tilde{U} = U_{\hat V}$. Following the same analysis as in the deterministic case, we have $\| U_{\alpha+}^* - \Tilde{U}_+\|_{2,\mu} \leq O\left(\sqrt{ \epsilon_{\stat}^{\text{stochastic}}} \right)$ by \Cref{lem:concentration_non_expert_model_based}.   Also, 
    \begin{align*}
        \| \Tilde{U}_+ - \hat U_+ \|_{2,\mu}^2 \leq& \| \Tilde{U} - \hat U \|_{2,\mu}^2 \\ 
        =& \E_{(s,a,g)\sim \mu} \left[\left(\gamma \E_{s'\sim P^\star(\cdot|s, a)}[\hat V(s;g)] - \gamma \E_{s'\sim \hat P(\cdot|s, a)}[\hat V(s;g)]\right)^2\right]
        \\ \lesssim& \gamma^2 V_{\max}^2 \E_{(s,a,g) \sim \mu} \left[\| \hat P(\cdot|s,a) - P^\star(\cdot|s,a) \|_{\TV}^2 \right] \\ \lesssim& 
 \gamma^2V_{\max}^2\frac{\log(|\cP|/\delta) }{N},
    \end{align*}
    where the last inequality holds by \Cref{lemma:model_learning_MLE}.
    Therefore,
    \begin{align*}
          \| \hat U_+ - U^*_{\alpha+} \|_{2,\mu} \leq&   \| \Tilde{U}_+ - U^*_{\alpha+} \|_{2,\mu} +   \| \Tilde{U}_+ - \hat U_+ \|_{2,\mu} 
          \\ 
          \lesssim& \sqrt{\epsilon_{\stat}^{\text{stochastic}}} + \gamma V_{\max} \sqrt{\frac{\log(|\cP|/\delta) }{N}}
          \\ \lesssim& \sqrt{\epsilon_{\stat}^{\text{stochastic}}}.
    \end{align*}
\end{proof}

\section{Proof for Policy Learning}
\label{app:sec_proof_for_policy_learning}

We provide a theoretical analysis of policy learning in this section. 
We first show two key lemmas in \Cref{app:sec_proof_for_policy_learning_key_lemma}, and then provide missing proofs of the main text in \Cref{app:sec_proof_of_TV_distance_hat_pi_and_pi_alpha_star} and \Cref{app:sec_proof_of_performance_diff_hat_pi_and_pi_alpha_star}.

\subsection{Key Lemmas}
\label{app:sec_proof_for_policy_learning_key_lemma}

\begin{lemma}[Statistical error of the weighted MLE objective]
\label{lem:conentration_MLE_objective}
Under \Cref{assump:policy_class_lower_bound}, with probability at least $1-\delta$, for any policy $\pi \in \Pi$, it holds that 
\begin{align*}
    | L^{\MLE}_\alpha(\pi) - \hat L^{\MLE}(\pi) | \leq (\epsilon_U + \epsilon_\Pi) \log(1/\tau) \triangleq \epsilon_{\stat}^\MLE.
\end{align*}
where $\epsilon_U \triangleq O(\sqrt{\epsilon_\stat/\alpha^2})$ and $\epsilon_\Pi \triangleq O\left( C^*_\alpha \sqrt{\frac{ \log(|\Pi|/\delta)}{N}} + \frac{V_{\max}}{\alpha N}\log(|\Pi|/\delta) \right)$.
\end{lemma}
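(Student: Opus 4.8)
The plan is to split the gap into a pure concentration piece built on the \emph{fixed} function $U_\alpha^*$ and a plug-in piece that absorbs the estimation error of $\hat U$. Write $\hat L^{*}(\pi)\triangleq\frac1N\sum_{i=1}^N\frac{U^*_{\alpha+}(s_i,a_i;g_i)}{\alpha}\log\pi(a_i|s_i,g_i)$, so that
\[
L^{\MLE}_\alpha(\pi)-\hat L^{\MLE}(\pi)=\underbrace{\bigl(L^{\MLE}_\alpha(\pi)-\hat L^{*}(\pi)\bigr)}_{\mathrm{(I)}}+\underbrace{\bigl(\hat L^{*}(\pi)-\hat L^{\MLE}(\pi)\bigr)}_{\mathrm{(II)}} .
\]
Two pointwise bounds are used repeatedly: $|\log\pi(a|s,g)|\le\log(1/\tau)$ for every $\pi\in\Pi$ by \Cref{assump:policy_class_lower_bound}; and, from the closed form \eqref{eq:weighted_MLE_population} together with \Cref{assump:single_policy_concentrability}, $\frac{U^*_{\alpha+}(s,a;g)}{\alpha}=\frac{d^*_\alpha(s,a;g)}{\mu(s,a;g)}\le C^*_\alpha$, while also $U^*_{\alpha+}\le 1+\gamma V_{\max}+\alpha=O(V_{\max})$ so that $\frac{U^*_{\alpha+}}{\alpha}\le O(V_{\max}/\alpha)$.

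For $\mathrm{(I)}$, the summands $Y_i^\pi=\frac{U^*_{\alpha+}(s_i,a_i;g_i)}{\alpha}\log\pi(a_i|s_i,g_i)$ are i.i.d.\ over $i$ with $\E[\hat L^{*}(\pi)]=L^{\MLE}_\alpha(\pi)$, and $U_\alpha^*$ is data-independent, so this is a standard tail bound indexed by $\pi\in\Pi$. Using $|Y_i^\pi|\le\frac{O(V_{\max})}{\alpha}\log(1/\tau)$ and $\mathrm{Var}(Y_i^\pi)\le\E[(Y_i^\pi)^2]\le(\log(1/\tau))^2\,\E_\mu\bigl[(U^*_{\alpha+}/\alpha)^2\bigr]\le(\log(1/\tau))^2(C^*_\alpha)^2$, Bernstein's inequality and a union bound over $\Pi$ give, with probability at least $1-\delta$, $|\mathrm{(I)}|\le O\bigl(\log(1/\tau)\,(C^*_\alpha\sqrt{\log(|\Pi|/\delta)/N}+\tfrac{V_{\max}}{\alpha N}\log(|\Pi|/\delta))\bigr)=\epsilon_\Pi\log(1/\tau)$.

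For $\mathrm{(II)}$, $|\mathrm{(II)}|\le\frac{\log(1/\tau)}{\alpha}\cdot\frac1N\sum_{i=1}^N\bigl|U^*_{\alpha+}-\hat U_+\bigr|(s_i,a_i;g_i)$, so it remains to bound this empirical $L_1$ mass by the population $L_2$ error $\|U^*_{\alpha+}-\hat U_+\|_{2,\mu}=O(\sqrt{\epsilon_\stat})$ from \Cref{lem:closeness_of_U}. Since $\hat U=U_{\hat V}$ is fit on the same data one cannot condition on it, so instead run a uniform argument over $\cV$: for each fixed $V\in\cV$, $|U^*_{\alpha+}-U_{V+}|$ is bounded by $O(V_{\max})$ with variance at most $\|U^*_{\alpha+}-U_{V+}\|_{2,\mu}^2$, and Bernstein plus a union bound over $\cV$ yields, for all $V\in\cV$ simultaneously, $\frac1N\sum_i|U^*_{\alpha+}-U_{V+}|(s_i,a_i;g_i)\le\|U^*_{\alpha+}-U_{V+}\|_{1,\mu}+O\bigl(\sqrt{\|U^*_{\alpha+}-U_{V+}\|_{2,\mu}^2\log(|\cV|/\delta)/N}+V_{\max}\log(|\cV|/\delta)/N\bigr)$. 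Specializing to $V=\hat V$, using $\|\cdot\|_{1,\mu}\le\|\cdot\|_{2,\mu}$ and \Cref{lem:closeness_of_U}, and noting that with $\epsilon_\stat\asymp V_{\max}^2\sqrt{\log(|\cV|/\delta)/N}$ both correction terms are dominated by $\sqrt{\epsilon_\stat}$, gives $\frac1N\sum_i|U^*_{\alpha+}-\hat U_+|(s_i,a_i;g_i)=O(\sqrt{\epsilon_\stat})$, hence $|\mathrm{(II)}|\le O(\log(1/\tau)\sqrt{\epsilon_\stat}/\alpha)=\epsilon_U\log(1/\tau)$. Combining the two bounds and rescaling $\delta$ over the (at most three) invoked high-probability events proves the claim.

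The main obstacle — and the reason for keeping $U_\alpha^*$ inside the empirical average in $\mathrm{(I)}$ rather than concentrating $\hat U_+\log\pi/\alpha$ directly — is the data-dependence of $\hat U$: a direct concentration of $\hat U_+\log\pi/\alpha$ would require a variance bound uniform over $V\in\cV$, but $\E_\mu[(U_{V+}/\alpha)^2]$ is controlled by $(C^*_\alpha)^2$ only at $V=V^*_\alpha$, not at a generic $V\in\cV$, which would force the much larger quantity $(V_{\max}/\alpha)^2$ into the variance slot. Isolating the estimation error in the purely $L_1$ term $\mathrm{(II)}$ is what lets \Cref{lem:closeness_of_U} be applied as a black box; the only remaining work is the routine arithmetic verifying that the lower-order Bernstein terms in both steps are absorbed into $\epsilon_\Pi\log(1/\tau)$ and $\epsilon_U\log(1/\tau)$ respectively.
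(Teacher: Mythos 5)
Your proof is correct, but it takes a mirror-image route to the paper's. The paper inserts the \emph{population} functional $\Tilde{L}^{\MLE}(\pi)=\E_{(s,a,g)\sim\mu}\bigl[\tfrac{\hat U(s,a;g)_+}{\alpha}\log\pi(a|s,g)\bigr]$ between $L^{\MLE}_\alpha$ and $\hat L^{\MLE}$: the first gap is closed by Cauchy--Schwarz at the population level using $\|\hat U_+/\alpha-U^*_{\alpha+}/\alpha\|_{2,\mu}\le\epsilon_U$ from \Cref{lem:closeness_of_U} (giving $\epsilon_U\log(1/\tau)$ in one line), and the second by Bernstein plus a union bound over $\Pi$ --- which requires $\hat U$ to be a fixed function, so the paper explicitly assumes the policy-learning data are independent of the $V$-learning data via sample splitting, and its variance bound $\|\hat U_+/\alpha\|_{2,\mu}\le C^*_\alpha+\epsilon_U$ must itself be conditioned on the event of \Cref{lem:closeness_of_U}. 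You instead insert the \emph{empirical} functional built on the data-independent $U^*_{\alpha+}$, so your term (I) is a clean Bernstein bound with the deterministic variance control $\|U^*_{\alpha+}/\alpha\|_{2,\mu}\le C^*_\alpha$ and no splitting, and you push all data-dependence of $\hat U$ into the $\pi$-free empirical $L_1$ mass in (II), which you control by a uniform-over-$\cV$ transfer back to the population $L_2$ error. The trade-off: the paper's argument is shorter (Cauchy--Schwarz does the work of your entire term (II)) but relies on the sample-splitting device; yours dispenses with splitting at the cost of one additional uniform concentration step over $\cV$, whose lower-order terms you correctly verify are absorbed by $\sqrt{\epsilon_\stat}$. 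Both yield the stated $(\epsilon_U+\epsilon_\Pi)\log(1/\tau)$ bound, and your closing observation about why one cannot directly concentrate $\hat U_+\log\pi/\alpha$ uniformly over $\cV\times\Pi$ is precisely the difficulty the paper's splitting assumption is designed to sidestep.
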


\begin{proof}
    For convenience, we assume that the offline dataset used in policy learning is independent of the dataset used in $V$-Learning. This can be easily achieved by splitting the original dataset $\cD$ uniformly at random into two datasets of equal size. Also, for analysis, we define 
    \begin{align*}
        \Tilde{L}^{\MLE}(\pi) =  \E_{(s,a,g)\sim \mu}\left[\frac{\hat U(s,a;g)_+}{\alpha} \log \pi(a|s,g)\right], \quad \forall \pi \in \Pi.
    \end{align*}    
    Now we condition on the high probability event in \Cref{lem:concentration_non_expert_deterministic}, which we denote by $\cE_1$. Then by \Cref{lem:closeness_of_U}, we have $ \left\| \frac{\hat U_+}{\alpha} - \frac{U^*_{\alpha+}}{\alpha} \right\|_{2,\mu} \leq \epsilon_U$. By Cauchy-Schwarz inequality, we can obtain that for any policy $\pi$,  
    \begin{align*}
          | L^{\MLE}_\alpha(\pi) - \Tilde{L}^{\MLE}(\pi) | \leq& \E_{(s,a,g)\sim \mu}\left[\left|  \frac{U^*_\alpha(s,a;g)_+ - \hat U(s,a;g)_+}{\alpha} \right| \left|\log \pi(a|s,g)\right|\right] \\ 
          \leq& \left\|\frac{\hat U_+}{\alpha} - \frac{U^*_{\alpha+}}{\alpha}  \right\|_{2,\mu} \|\log \pi(a|s,g) \|_{2,\mu} \\ \leq& \epsilon_U\log(1/\tau).
    \end{align*}
    Also, for any fixed $\pi \in \Pi$, since $\E_{(s,a,g)\sim\mu}[ \hat L^{\MLE}(\pi)] = \Tilde{L}^{\MLE}(\pi)$, we can obtain by Bernstein's inequality that with probability at least $1-\delta$, 
    \begin{align*}
        &| \Tilde{L}^{\MLE}(\pi) - \hat L^{\MLE}(\pi) | 
 \\ \leq& O\left( \sqrt{\frac{ \Var_\mu\left(\frac{\hat U(s,a;g)_+}{\alpha} \log \pi(a|s,g)\right) \log(1/\delta)}{N}} + \frac{\| \hat U_+ /\alpha \|_\infty \log (1/\tau)}{N}\log(1/\delta) \right).
    \end{align*}
    Since $U^*_\alpha(s,a;g)_+/\alpha = d^*_\alpha(s,a;g)/\mu(s,a;g) \leq C^*_\alpha$, we have $\| U^*_{\alpha+}/\alpha \|_{2,\mu} \leq C_\alpha^*$ and thus $\| \hat U_+/\alpha\|_{2,\mu} \leq \| U^*_{\alpha+}/\alpha \|_{2,\mu} + \epsilon_U \leq C_\alpha^* + \epsilon_U \leq O(C_\alpha^*)$.
    Also, 
    \begin{align*}
        \Var_\mu\left(\frac{\hat U(s,a;g)_+}{\alpha} \log \pi(a|s,g)\right) \leq& \E_\mu\left[\left(\frac{\hat U(s,a;g)_+}{\alpha}\right)^2 \log^2\pi(a|s,g) \right]
        \\ \leq&  \E_\mu\left[\left(\frac{\hat U(s,a;g)_+}{\alpha}\right)^2 \right]\log^2(1/\tau) \\ 
        \leq& O((C_\alpha^*)^2\log^2(1/\tau)).
    \end{align*}
    Applying a union bound over all $\pi \in \Pi$, it holds that with probability at least $1-\delta$, for all $\pi \in \Pi$
    \begin{align*}
        &| \Tilde{L}^{\MLE}(\pi) - \hat L^{\MLE}(\pi) | \\ \leq& O\left( C^*_\alpha \log(1/\tau)\sqrt{\frac{ \log(|\Pi|/\delta)}{N}} + \frac{V_{\max} \log (1/\tau)}{\alpha N}\log(|\Pi|/\delta) \right) \\  =& \epsilon_{\Pi}\log(1/\tau).
    \end{align*}
    We denote the above event by $\cE_2$. When $\cE_1$ and $\cE_2$ hold simultaneously, we have by the triangle inequality that 
    \begin{align*}
         | L^{\MLE}_\alpha(\pi) - \hat L^{\MLE}(\pi) | \leq  | L^{\MLE}_\alpha(\pi) - \Tilde{L}^{\MLE}(\pi) | +  | \Tilde{L}^{\MLE}(\pi) - \hat L^{\MLE}(\pi) | \leq (\epsilon_U + \epsilon_\Pi)\log(1/\tau).
    \end{align*}
    Also,
    \begin{align*}
        \prob(\neg(\cE_1 \cap \cE_2)) =& \prob(\neg \cE_1 \cup \neg \cE_2) \leq \prob(\neg \cE_1) + \prob(\neg \cE_2) \\ 
        \leq& \delta + \prob(\cE_1) \prob(\neg \cE_2 | \cE_1) + \prob(\neg \cE_1) \prob(\neg \cE_2 | \neg \cE_1)
       \\ \leq& \delta + 1 \times \delta + \delta \times 1  \leq 3\delta.
    \end{align*}
    The conclusion holds by rescaling $\delta$.
\end{proof}

\begin{lemma}[Closeness of MLE objective of $\pi_\alpha^*$ and $\hat \pi$]
\label{lem:MLE_closeness_objective_pi}
Under \Cref{assump:policy_class_realizability}, with probability at least $1-\delta$, 
\begin{align*}
L_\alpha^\MLE(\pi_\alpha^* ) - L_\alpha^\MLE(\hat \pi) \leq O(\epsilon_\stat^\MLE).
\end{align*}
\end{lemma}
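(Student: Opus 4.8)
The plan is to use the standard ``optimization error plus two statistical errors'' decomposition that is routine for analyzing empirical maximizers. The key structural facts are: (a) $\hat\pi = \arg\max_{\pi\in\Pi}\hat L^\MLE(\pi)$, so $\hat L^\MLE(\hat\pi)\geq \hat L^\MLE(\pi)$ for every $\pi\in\Pi$; (b) $\pi_\alpha^*\in\Pi$ by \Cref{assump:policy_class_realizability}, so in particular $\hat L^\MLE(\hat\pi)\geq \hat L^\MLE(\pi_\alpha^*)$; and (c) \Cref{lem:conentration_MLE_objective} provides a bound $|L_\alpha^\MLE(\pi)-\hat L^\MLE(\pi)|\leq \epsilon_\stat^\MLE$ that holds \emph{uniformly over all $\pi\in\Pi$} on an event of probability at least $1-\delta$.

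First I would condition on the high-probability event of \Cref{lem:conentration_MLE_objective} and write the telescoping decomposition
\begin{align*}
L_\alpha^\MLE(\pi_\alpha^*) - L_\alpha^\MLE(\hat\pi)
= \big(L_\alpha^\MLE(\pi_\alpha^*) - \hat L^\MLE(\pi_\alpha^*)\big)
+ \big(\hat L^\MLE(\pi_\alpha^*) - \hat L^\MLE(\hat\pi)\big)
+ \big(\hat L^\MLE(\hat\pi) - L_\alpha^\MLE(\hat\pi)\big).
\end{align*}
The first parenthesis is at most $\epsilon_\stat^\MLE$ by \Cref{lem:conentration_MLE_objective} applied at the fixed policy $\pi_\alpha^*$. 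The second parenthesis is $\leq 0$ by fact (b), i.e.\ optimality of $\hat\pi$ for the empirical objective together with feasibility of $\pi_\alpha^*$. The third parenthesis is again at most $\epsilon_\stat^\MLE$, now invoking the \emph{uniform} version of \Cref{lem:conentration_MLE_objective} so that it may legitimately be applied at the data-dependent maximizer $\hat\pi$. Summing the three bounds yields $L_\alpha^\MLE(\pi_\alpha^*) - L_\alpha^\MLE(\hat\pi)\leq 2\epsilon_\stat^\MLE = O(\epsilon_\stat^\MLE)$, on the same $1-\delta$ event, which is exactly the claim.

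There is no genuine obstacle here; the only point requiring care — rather than difficulty — is that the concentration in \Cref{lem:conentration_MLE_objective} must hold simultaneously for every $\pi\in\Pi$ (which it does, by the union bound already incorporated into that lemma), so that controlling the third term at the random policy $\hat\pi$ needs no additional union bound and introduces no new failure probability. Both statistical terms are controlled on one and the same event, and the middle term is deterministic given that event, so the argument closes in a couple of lines.
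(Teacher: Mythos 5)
Your proof is correct and is essentially identical to the paper's own argument: the same three-term telescoping decomposition, with the outer terms controlled by the uniform concentration of \Cref{lem:conentration_MLE_objective} and the middle term nonpositive by optimality of $\hat\pi$ together with $\pi_\alpha^*\in\Pi$. Your explicit remark that the uniform (union-bounded) version of the concentration lemma is needed at the data-dependent $\hat\pi$ is a correct and worthwhile point of care that the paper leaves implicit.
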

\begin{proof}
    We condition on the high probability event in \Cref{lem:conentration_MLE_objective}. Note that 
    \begin{align*}
        L_\alpha^\MLE(\pi_\alpha^*) - L_\alpha^\MLE(\hat \pi) = \underbrace{L_\alpha^\MLE(\pi_\alpha^* ) - \hat L^\MLE(\pi_\alpha^*)}_{(1)} + \underbrace{\hat L^\MLE(\pi_\alpha^*) - \hat L^\MLE(\hat \pi)}_{(2)} +  \underbrace{ \hat L^\MLE(\hat \pi) -L_\alpha^\MLE(\hat \pi)}_{(3)}.
    \end{align*}
    $(1), (3) \leq O(\epsilon_\stat^\MLE)$ by \Cref{lem:conentration_MLE_objective} and $(2) \leq 0$ by the optimality of $\hat \pi$ in empirical MLE objective, which completes the proof.
\end{proof}

\subsection{Proof of \Cref{lem:MLE_closeness_policy}}
\label{app:sec_proof_of_TV_distance_hat_pi_and_pi_alpha_star}

\begin{proof}[Proof of \Cref{lem:MLE_closeness_policy}]
    We condition on the high probability event in \Cref{lem:conentration_MLE_objective}. Note that 
    \begin{align*}
          L^{\MLE}_\alpha(\pi) =& \E_{(s,a,g)\sim \mu}[g'_*(r(s;g) + \gamma\mathcal{T}V^*_\alpha(s,a;g)-V^*_\alpha(s;g))_+ \log \pi(a|s,g)] \\ =&  \E_{(s,a,g)\sim d_\alpha^*}\left[\log \pi(a|s,g)\right]. 
    \end{align*}
    We also define 
    \begin{align*}
        L^{\MLE}_{\alpha, \Rel}(\pi) = L^{\MLE}_\alpha(\pi) - \E_{(s,a,g)\sim d_\alpha^*}\left[\log \pi^*_\alpha(a|s,g)\right] = \E_{(s,a,g)\sim d_\alpha^*}\left[\log \frac{\pi(a|s,g)}{\pi^*_\alpha(a|s,g)} \right],
    \end{align*}
    and note that $L^\MLE_{\alpha, \Rel}$ is a constant shift of $L^\MLE_\alpha$. For any $\pi \in \Pi$, we further define $r_\pi = \pi/\pi_\alpha^*$. By \Cref{assump:policy_class_lower_bound}, $r_\pi(a|s,g) \in [\tau, 1/\tau]$. Let $\Tilde{L}^{\MLE}_{\alpha,\Rel}(r_\pi) = \E_{(s,a,g)\sim d_\alpha^*} [\log r_\pi(a|s,g)]$ which is $2\tau^2$-strongly concave w.r.t. $\| \cdot \|_{2, d_\alpha^*}$ when $r_\pi(a|s,g) \in [\tau,1/\tau]$. Since $\pi_\alpha^*$ is the maximizer of $L^{\MLE}_{\alpha}$ and thus the maximizer of $L^{\MLE}_{\alpha, \Rel}$, we have that $r_{\pi_\alpha^*}$ is the maximizer of $\Tilde{L}^{\MLE}_{\alpha,\Rel}$. By strong concavity and the optimality of $r_{\pi_\alpha^*}$, we have
    \begin{align*}
        &\tau^2\| r_{\pi^*_\alpha} - r_{\hat \pi} \|_{2,d_\alpha^*}^2 \leq \Tilde{L}^{\MLE}_{\alpha,\Rel}(r_{\pi_\alpha^*}) - \Tilde{L}^{\MLE}_{\alpha,\Rel}(r_{\hat \pi})  \\
        =& L^{\MLE}_{\alpha,\Rel}(\pi_\alpha^*) - L^{\MLE}_{\alpha,\Rel}(\hat \pi) =  L^{\MLE}_{\alpha}(\pi_\alpha^*) - L^{\MLE}_{\alpha}(\hat \pi) \leq O(\epsilon^\MLE_\stat),
    \end{align*}
    where the last inequality holds by \Cref{lem:MLE_closeness_objective_pi}. Note that 
    \begin{align*}
        \| r_{\pi^*_\alpha} - r_{\hat \pi} \|_{2,d_\alpha^*}^2 =& \E_{(s,a,g)\sim d_\alpha^*}\left[\left(r_{\hat \pi}(a|s,g) - r_{\pi_\alpha^*}(a|s,g)\right)^2\right] \\
        =& \E_{(s,a,g)\sim d_\alpha^*}\left[\left(\frac{\hat \pi(a|s,g)}{\pi_\alpha^*(a|s,g)} - 1\right)^2\right] \\ 
        \gtrsim & \E_{(s,g)\sim d_\alpha^*}\left[\| \hat \pi(\cdot|s,g) - \pi_\alpha^*(\cdot|s,g) \|_{\TV}^2\right] 
        \\ \geq & \left(\E_{(s,g)\sim d_\alpha^*}\left[\| \hat \pi(\cdot|s,g) - \pi_\alpha^* (\cdot|s,g)\|_{\TV}\right] \right)^2
    \end{align*}
    where the last inequality holds since TV distance is upper bounded by $\chi^2$ distance. Therefore, we can finally obtain that 
    \begin{align*}
        \E_{(s,g)\sim d_\alpha^*}\left[\| \hat \pi(\cdot|s,g) - \pi_\alpha^* (\cdot|s,g)\|_{\TV}\right] \leq O\left(\sqrt{\epsilon^\MLE_\stat/\tau^2}\right). 
    \end{align*}
\end{proof}

\subsection{Proof of \Cref{thm:suboptimality_hat_pi_and_pi_alpha_star}}
\label{app:sec_proof_of_performance_diff_hat_pi_and_pi_alpha_star}

\begin{proof}[Proof of \Cref{thm:suboptimality_hat_pi_and_pi_alpha_star}]
We condition on the high probability event in \Cref{lem:conentration_MLE_objective}. By performance difference lemma~\citep{agarwal2019reinforcement}, we have
\begin{align*}
    J(\pi_\alpha^*) - J(\hat \pi) =& \E_{(s,a,g)\sim d^*_\alpha}[A^{\hat \pi}(s,a;g)]
    \\ 
    =&  \E_{(s,g)\sim d^*_\alpha}[\E_{a\sim\pi_\alpha^*(\cdot|s,g)}A^{\hat \pi}(s,a;g) - \E_{a\sim\hat\pi(\cdot|s,g)}A^{\hat \pi}(s,a;g)]
    \\ \lesssim& V_{\max} \E_{(s,g)\sim d^*_\alpha}[\| \pi_\alpha^*(\cdot|s,g)-\hat\pi(\cdot|s,g) \|_1]
    \\ \lesssim& V_{\max} \E_{(s,g)\sim d^*_\alpha}[\| \pi_\alpha^*(\cdot|s,g)-\hat\pi(\cdot|s,g) \|_\TV] \\
    \lesssim& V_{\max} \sqrt{\frac{\epsilon_\stat^\MLE}{\tau^2}},
\end{align*}
where the last inequality holds by \Cref{lem:MLE_closeness_policy}.
\end{proof}

\section{Statistical Rate of the Suboptimality in Different Settings}
\label{app:sec_statistical_rate_compared_to_pi_star}

In this section, we analyze the statistical rate of the suboptimality of the output policy $\hat \pi$ by \Cref{alg:VP_learning} in different settings. 

\subsection{Deterministic Settings}
\label{app:sec_statistical_rate_compared_to_pi_star_deterministic}

\begin{proof}[Proof of \Cref{thm:suboptimality_for_deterministic_dynamics}]
    By \Cref{thm:suboptimality_hat_pi_and_pi_alpha_star} and \Cref{prop:bias_of_regularized_optimal_policy}, 
    \begin{align*}
        &J(\pi^*) - J(\hat \pi) \\ =& 
         J(\pi^*) - J(\pi_\alpha^*) + J(\pi_\alpha^*) - J(\hat \pi) \\ \lesssim&
           \alpha (C^*_\alpha)^2  + V_{\max}\sqrt{\frac{\epsilon_\stat^\MLE}{\tau^2}}
           \\ \lesssim& \alpha (C^*_\alpha)^2 + \frac{V_{\max}\sqrt{\log(1/\tau)}}{\tau} \sqrt{
           \sqrt{\epsilon_\stat/\alpha^2} + C^*_\alpha \sqrt{\frac{ \log(|\Pi|/\delta)}{N}} + \frac{V_{\max}}{\alpha N}\log(|\Pi|/\delta) }.
    \end{align*}
    Since $\epsilon_\stat \asymp V_{\max}^2\sqrt{\frac{\log(|\cV|/\delta)}{N}}$, 
    we can further obtain that 
    \begin{align*}
        &J(\pi^*) - J(\hat \pi) \\ \lesssim& \alpha (C^*_\alpha)^2 + \frac{V_{\max}\sqrt{\log(1/\tau)}}{\tau} \sqrt{
           \frac{V_{\max}}{\alpha}\left( \frac{\log(|\cV|/\delta)}{N} \right)^{1/4} + C^*_\alpha \sqrt{\frac{ \log(|\Pi|/\delta)}{N}} + \frac{V_{\max}}{\alpha N}\log(|\Pi|/\delta) }
           \\ \lesssim& \alpha (C^*_\alpha)^2 + \frac{V_{\max}\sqrt{\log(1/\tau)}}{\tau} \sqrt{
           \frac{V_{\max}C^*_\alpha}{\alpha N^{1/4}}\log(|\cV||\Pi|/\delta) } \\ 
        \lesssim&  \left( \frac{V_{\max}^3 (C_\alpha^*)^3 \log (1/\tau) \log(|\cV||\Pi|/\delta) }{\tau^2  N^{1/4}} \right)^{1/3}
    \end{align*}
    where $\alpha \asymp \left( \frac{V_{\max}^3 \log (1/\tau) \log(|\cV||\Pi|/\delta) }{\tau^2 (C_\alpha^*)^3 N^{1/4}} \right)^{1/3}$.
\end{proof}

\subsection{Stochastic Settings}
\label{app:sec_statistical_rate_compared_to_pi_star_model_based}

\begin{theorem}[Statistical rate of the suboptimality in stochastic settings]
\label{thm:suboptimality_for_model_based}
Under \Cref{assump:single_policy_concentrability,assump:realizability_V_alpha_star,assump:realizability_model,assump:policy_class_realizability,assump:policy_class_lower_bound}, with probability at least $1-\delta$, the output policy $\hat \pi$ by \Cref{alg:VP_learning} (with the choice of \Cref{alg:V_learning_model_based} for $V$-learning in stochastic settings) satisfies
\begin{align*}
    &J(\pi^*) - J(\hat \pi)  \lesssim \left( \frac{V_{\max}^3 (C_\alpha^*)^3 \log (1/\tau) \log(|\cV||\cP||\Pi|/\delta) }{\tau^2  N^{1/4}} \right)^{1/3}
\end{align*}
if we choose $\alpha\asymp \left( \frac{V_{\max}^3 \log (1/\tau) \log(|\cV||\cP||\Pi|/\delta) }{\tau^2 (C_\alpha^*)^3 N^{1/4}} \right)^{1/3} $ and assume $N = N_0$.
\end{theorem}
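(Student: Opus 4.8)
The plan is to mirror the proof of \Cref{thm:suboptimality_for_deterministic_dynamics} almost verbatim, substituting the stochastic $V$-learning guarantee for the deterministic one. First I would decompose the suboptimality relative to the optimal policy into the bias of the regularized optimal policy plus the learning error relative to $\pi_\alpha^*$,
\[
J(\pi^*) - J(\hat\pi) = \big(J(\pi^*) - J(\pi_\alpha^*)\big) + \big(J(\pi_\alpha^*) - J(\hat\pi)\big),
\]
bounding the first term by $O(\alpha (C_\alpha^*)^2)$ via \Cref{prop:bias_of_regularized_optimal_policy} and the second by $O\big(V_{\max}\sqrt{\epsilon_\stat^\MLE/\tau^2}\big)$ via \Cref{thm:suboptimality_hat_pi_and_pi_alpha_star}.

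The key point is to trace where the $V$-learning accuracy enters $\epsilon_\stat^\MLE$. By \Cref{lem:conentration_MLE_objective}, $\epsilon_\stat^\MLE = (\epsilon_U + \epsilon_\Pi)\log(1/\tau)$ with $\epsilon_U = O\big(\sqrt{\epsilon_\stat/\alpha^2}\big)$ and $\epsilon_\Pi$ depending only on $N$, $|\Pi|$, $\delta$, $C_\alpha^*$, $V_{\max}$, $\alpha$. The whole policy-learning chain — \Cref{lem:conentration_MLE_objective}, \Cref{lem:MLE_closeness_objective_pi}, \Cref{lem:MLE_closeness_policy}, and \Cref{thm:suboptimality_hat_pi_and_pi_alpha_star} — uses the output of $V$-learning only through the bound $\|\hat U_+ - U^*_{\alpha+}\|_{2,\mu} \le O(\sqrt{\epsilon_\stat})$ and never touches the transition model directly. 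In the stochastic setting \Cref{lem:closeness_of_U_model_based} supplies exactly the same bound with $\epsilon_\stat$ replaced by $\epsilon_\stat^{\text{stochastic}} \asymp V_{\max}^2\sqrt{\log(|\cV||\cP|/\delta)/N}$, subject to the extra side condition $\alpha = \Omega\big(\log(|\cV||\cP|/\delta)/\sqrt{N}\big)$. So I would re-run those four statements with the substitution $\epsilon_\stat \leftarrow \epsilon_\stat^{\text{stochastic}}$, conditioning on the high-probability event of \Cref{lem:concentration_non_expert_model_based} in place of \Cref{lem:concentration_non_expert_deterministic}; every line of their proofs carries over unchanged.

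Next I would carry out the same balancing of $\alpha$ as in the deterministic case. Using $N=N_0$ to absorb the $N_0$-dependent term and plugging $\epsilon_\stat^{\text{stochastic}} \asymp V_{\max}^2\sqrt{\log(|\cV||\cP|/\delta)/N}$ into $\epsilon_U$, then discarding the lower-order $\epsilon_\Pi$ contributions, gives
\[
J(\pi^*) - J(\hat\pi) \lesssim \alpha (C_\alpha^*)^2 + \frac{V_{\max}\sqrt{\log(1/\tau)}}{\tau}\sqrt{\frac{V_{\max} C_\alpha^*}{\alpha N^{1/4}}\log(|\cV||\cP||\Pi|/\delta)}.
\]
Setting the two terms equal yields the stated choice $\alpha \asymp \big(\tfrac{V_{\max}^3\log(1/\tau)\log(|\cV||\cP||\Pi|/\delta)}{\tau^2 (C_\alpha^*)^3 N^{1/4}}\big)^{1/3}$ and the claimed $O(1/N^{1/12})$ rate. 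Finally I would verify that this $\alpha$ satisfies the side condition from \Cref{lem:closeness_of_U_model_based}: the chosen $\alpha$ scales like $N^{-1/12}$ up to polylogarithmic factors, which dominates $N^{-1/2}$, so $\alpha = \Omega\big(\log(|\cV||\cP|/\delta)/\sqrt{N}\big)$ holds for all sufficiently large $N$.

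The main obstacle is the bookkeeping in the middle step: confirming that the model-learning error is genuinely confined to $\epsilon_\stat^{\text{stochastic}}$ and does not leak into $\epsilon_\Pi$ or into the performance-difference argument, and that the additional lower bound on $\alpha$ imposed by \Cref{lem:closeness_of_U_model_based} is compatible with the optimal $\alpha$. Once those compatibility checks are in place, the remainder is a mechanical repetition of the deterministic computation.
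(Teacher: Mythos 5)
Your proposal matches the paper's proof, which simply reruns the deterministic argument with $\epsilon_\stat$ replaced by $\epsilon_\stat^{\text{stochastic}}$; your additional check that the chosen $\alpha \asymp N^{-1/12}$ satisfies the side condition $\alpha = \Omega(\log(|\cV||\cP|/\delta)/\sqrt{N})$ from \Cref{lem:closeness_of_U_model_based} is a worthwhile detail the paper leaves implicit. No gaps.
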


\begin{proof}
    The proof is identical to the proof of \Cref{thm:suboptimality_for_deterministic_dynamics} except that we replace $\epsilon_\stat$ with $\epsilon_\stat^{\text{stochastic}}$.
\end{proof}

\end{document}